\title{Towards Assessment of Randomized Smoothing Mechanisms for Certifying Adversarial Robustness}
\author{
Tianhang Zheng,\textsuperscript{\rm 1}\thanks{The first two authors contributed equally to this work.}~~Di Wang,\textsuperscript{\rm 2,3}
Baochun Li,\textsuperscript{\rm 1} 
Jinhui Xu\textsuperscript{\rm 2}\\
\textsuperscript{\rm 1} University of Toronto \\
\textsuperscript{\rm 2}
State University of New York at Buffalo \\
\textsuperscript{\rm 3} King Abdullah University of Science and Technology
}
\begin{document}

\maketitle

\begin{abstract}
As a certified defensive technique, randomized smoothing has received considerable attention due to its scalability to large datasets and neural networks. However, several important questions remain unanswered, such as (i) whether the Gaussian mechanism is an appropriate option for certifying $\ell_2$-norm robustness, and (ii) whether there is an appropriate randomized (smoothing) mechanism to certify $\ell_\infty$-norm robustness. To shed light on these questions, we argue that the main difficulty is how to assess the appropriateness of each randomized mechanism. In this paper, we propose a generic framework that connects the existing frameworks in \cite{lecuyer2018certified, li2019certified}, to assess randomized mechanisms. Under our framework, for a randomized mechanism that can certify a certain extent of robustness, we define the magnitude of its required additive noise as the metric for assessing its appropriateness. We also prove lower bounds on this metric for the $\ell_2$-norm and $\ell_\infty$-norm cases as the criteria for assessment. Based on our framework, we assess the Gaussian and Exponential mechanisms by comparing the magnitude of additive noise required by these mechanisms and the lower bounds (criteria). We first conclude that the Gaussian mechanism is indeed an appropriate option to certify $\ell_2$-norm robustness. Surprisingly, we show that the Gaussian mechanism is also an appropriate option for certifying $\ell_\infty$-norm robustness, instead of the Exponential mechanism. Finally, we generalize our framework to $\ell_p$-norm for any $p\geq2$. Our theoretical findings are verified by evaluations on CIFAR10 and ImageNet. 
\end{abstract}
\vspace{-0.25in}
\section{Introduction}
\vspace{-0.1in}
The past decade has witnessed tremendous success of deep learning 
in handling various learning tasks like image classification \cite{krizhevsky2012imagenet}, natural language processing \cite{cho2014learning}, and game 
playing \cite{silver2016mastering}. 
Nevertheless, a major unresolved issue of deep learning is its vulnerability to adversarial samples, which are almost indistinguishable from natural samples to humans but can mislead deep neural networks (DNNs) to make wrong predictions with high confidence \cite{szegedy2013intriguing,goodfellow2014explaining}. This phenomenon, referred to as adversarial attack, is considered to be one of the most significant threats to the deployment of many deep learning systems.
Thus, a substantial amount of effort has been devoted to developing defensive techniques against it. However, a majority of existing defenses are of heuristic nature ({\em i.e.,} without any theoretical guarantees), implying that they may be ineffective against stronger attacks. 
Recent work \cite{he2017adversarial, athalye2018obfuscated, uesato2018adversarial} has confirmed this concern by showing that most of those heuristic defenses actually fail to defend against strong adaptive attacks. This forces us to shift our attention to certifiable defenses as they can classify all the samples in a predefined neighborhood of the natural samples with a theoretically-guaranteed error bound. Among all the existing certifiable defensive techniques, randomized smoothing is becoming increasingly popular due to its scalability to large datasets and arbitrary networks. 
\cite{lecuyer2018certified} first relates adversarial robustness to differential privacy, and proves that adding noise is a certifiable defense against adversarial examples. \cite{li2019certified} connects adversarial robustness with the concept of R\'{e}nyi divergence, and improves the estimate on the lower bounds of the robust radii. Recently, \cite{cohen2019certified} successfully certifies $49\%$ accuracy on the original ImageNet dataset under adversarial perturbations with $\ell_2$ norm less than $0.5$. 

Despite these successes, there are still several unanswered questions regarding randomized (smoothing) mechanisms\footnote{In this paper, ``randomized mechanism'' is an abbreviation for ``randomized smoothing mechanism''.}. One such question is, why should we use the Gaussian mechanism for randomized smoothing to certify $\ell_2$-norm robustness, or is there any more appropriate mechanism than the Gaussian mechanism? Another important question is regarding the ability of this method to certify $\ell_\infty$-norm robustness. If randomized smoothing can certify $\ell_\infty$-norm robustness, what mechanism is an appropriate choice? All these questions motivate us to develop a framework to assess the appropriateness of a randomized mechanism for certifying $\ell_p$-norm robustness. 

In this paper, we take a promising step towards answering the above questions by proposing a generic and self-contained framework, {\em which applies to different norms and connects the existing frameworks in \cite{lecuyer2018certified, li2019certified}}, for assessing randomized mechanisms. Our framework employs the Maximal Relative R\'{e}nyi (MR) divergence as the probability distance measurement, and thus, the definition of robustness under this measurement is referred to as $D_{MR}$ robustness.
Under our framework, we define the magnitude ({\em i.e.,} the expected $\ell_\infty$-norm) of the noise required by a mechanism to certify a certain extent of robustness as the metric for assessing the appropriateness of this mechanism. To be specific, a more ``appropriate'' randomized mechanism under this definition refers to a mechanism that can certify a certain extent of robustness with ``less'' additive noise. Given this definition, it is natural to define the assessment criteria as the lower bounds on the magnitude of the noise required to certify $\ell_p$-norm ($D_{MR}$) robustness, in that we can judge whether a mechanism is an appropriate option based on the gap between the magnitude of noise needed by the mechanism and the lower bounds. 

Inspired by the theory regarding the lower bounds on the sample complexity to estimate one-way marginals in differential privacy, we prove lower bounds on the magnitude of additive noise required by any randomized smoothing mechanism to certify $\ell_2$-norm and $\ell_\infty$-norm $D_{MR}$ robustness. We demonstrate the gap between the magnitude of Gaussian noise required by the Gaussian mechanism and the lower bounds is only $O(\sqrt{\log d})$ for both $\ell_2$-norm and $\ell_\infty$-norm, where $d$ is the dimensionality of the data. Note that this gap is small for datasets like CIFAR-10 and ImageNet, which indicates that the Gaussian mechanism is an appropriate option for certifying $\ell_2$-norm or $\ell_\infty$-norm robustness. Moreover, we also show that the Exponential mechanism is not an appropriate option for certifying $\ell_\infty$-norm robustness since the gap scales in $O(\sqrt{d})$.
To summarize, our contribution is four-fold:
\vspace{-0.06in}
\begin{itemize}
    \setlength\itemsep{0.2em}
    \item We propose a generic and self-contained framework for assessing randomized mechanisms, which applies to different norms and connects the existing frameworks such as \cite{lecuyer2018certified} and \cite{li2019certified}.
    \item We define a metric for assessing randomized mechanisms, {\em i.e.,}
    the magnitude of the additive noise to certify adversarial robustness, and we derive the lower bounds on the magnitude of the additive noise to certify $\ell_2$-norm and $\ell_\infty$-norm robustness as the criteria for the assessment. Also, we extend this assessment framework to $\ell_p$-norm for any $p\geq2$ in Appendix, which indicates the curse of dimensionality on randomized smoothing for certifying $\ell_p$-norm ($p>2$) robustness.
    \item We assess the Gaussian mechanism and the Exponential mechanism based on the metric and the lower bounds ({\em i.e.,} the criteria). 
  We conclude that the Gaussian mechanism is an appropriate option to certify $\ell_2$-norm and $\ell_\infty$-norm robustness, and the Exponential mechanism is not an appropriate option for certifying $\ell_\infty$-norm robustness here.
  \item We conduct extensive evaluations to verify our theoretical findings on our framework (with the same certified robust radii as in \cite{li2019certified}) and Cohen et al.'s framework \cite{cohen2019certified}.
\end{itemize}
\vspace{-0.06in}
{\em Due to the space limit, all the omitted proofs and some experimental results are included in Appendix (in the supplementary material).}
\vspace{-0.1in}
\section{Related Work}\label{sec:related}
\vspace{-0.1in}
To our knowledge, there are mainly three approaches to certify adversarial robustness standing out. 
The first approach formulates the task of adversarial verification as an optimization problem and solves it by tools like convex relaxations and duality \cite{dvijotham2018dual, raghunathan2018certified, wong2018provable}. Given a convex set (usually an $\ell_p$ ball) as input, the second approach maintains an outer approximation of all the possible outputs at each layer by various techniques, such as interval bound propagation, hybrid zonotope, abstract interpretations, and etc. \cite{mirman2018differentiable, wang2018efficient, gowal2018effectiveness, zhang2019towards, balunovic2020adversarial}.
The third approach uses randomized smoothing to certify robustness, which is {\em the main focus} of this paper.
Randomized smoothing for certifying robustness becomes increasingly popular due to its strong scalability to large datasets and arbitrary networks \cite{lecuyer2018certified, li2018second, cohen2019certified, dvijotham2018dual, salman2019provably}. 
For this approach, 
\cite{lecuyer2018certified} first proves that randomized smoothing can certify the $\ell_2$ and $\ell_1$-norm robustness using the differential privacy theory. \cite{li2018second} derives a tighter lower bound on the $\ell_2$-norm robust radius based on a lemma on R\'{e}nyi divergence. \cite{cohen2019certified} further obtains a tight guarantee on the $\ell_2$-norm robustness using the Neyman-Pearson lemma. 
\cite{dvijotham2020framework} proposes a new framework based on f-divergence that applies to different measures. \cite{salman2019provably} combines \cite{cohen2019certified} with adversarial training, and \cite{jia2019certified} extends the method in \cite{cohen2019certified} to the top-k classification setting.
We note that, compared with  \cite{cohen2019certified}, the frameworks proposed in \cite{lecuyer2018certified, li2019certified} are more general.  
In the following, we briefly review the basic definitions and theorems in the frameworks of \cite{lecuyer2018certified, li2019certified}, which helps us demonstrate the inherent connections between our framework and those two frameworks \cite{lecuyer2018certified, li2019certified}.
\vspace{-0.1in}
\section{Preliminaries}
\vspace{-0.1in}
In this section, we first introduce several basic definitions and notations. In general, we denote any randomized mechanism by $\mathcal{M}(\cdot)$, which outputs a random variable depending on the input. We represent any deterministic classifier that outputs a prediction label by $f(\cdot)$. A commonly-used randomized classifier can be constructed by $g(\cdot) = f(\mathcal{M}(\cdot))$. We denote a data sample and its label by $\xb$ and $y$, respectively. An $\ell_p$-norm ball centered at $\xb$ with radius $r$ is represented by $\mathbb{B}_p(\xb, r)$. We say a data sample $\xb'$ is in the $\mathbb{B}_p(\xb, r)$ iff $\|\xb' - \xb\|_p \leq r$. Next, we can detail the frameworks in \cite{lecuyer2018certified} and \cite{li2019certified}, {\em i.e.,} PixelDP and R\'{e}nyi Divergence-based Bound. 
\vspace{-0.06in}
\paragraph{PixelDP} To the best of our knowledge, PixelDP \cite{lecuyer2018certified} is the first framework to prove that randomized smoothing is a certified defense by connecting the concepts of adversarial robustness and differential privacy. The definition of adversarial robustness in the framework of PixelDP can be stated as follows:
\begin{definition}[PixelDP \cite{lecuyer2018certified}]\label{def:pixeldp}
For any $\xb \in \mathbb{R}^d$ and $\forall \xb' \in \mathbb{B}_p(\xb, r)$,
if a randomized mechanism $\mathcal{M}(\cdot)$ satisfies 
\begin{align}
    \forall S \subseteq \mathcal{O}, 
    P(\mathcal{M}(\xb)\in S) \leq e^\epsilon P(\mathcal{M}(\xb')\in S) + \delta,
\end{align}
where $\mathcal{O}$ denotes the output space of $\mathcal{M}(\cdot)$.
Then we can say $\mathcal{M}(\cdot)$ is
$(\epsilon, \delta)$-PixelDP (in $\mathbb{B}_p(\xb, r)$).
\end{definition}
\cite{lecuyer2018certified} connects PixelDP with adversarial robustness by the following lemma.
\begin{lemma}[Robustness Condition \cite{lecuyer2018certified}]\label{lemma:pixeldp}
Suppose $\Gb(\cdot)$ is randomized K-class classifier defined by $\Gb(\xb) = (\Gb_1(\xb),...,\Gb_K(\xb))$ that satisfies $(\epsilon, \delta)$-PixelDP (in $\mathbb{B}_p(\xb, r)$). For class $k$, if
\begin{align}
    \mathbb{E}[\Gb_k(\xb)] > e^{2\epsilon}\max_{i:i\neq k}\mathbb{E}[\Gb_i(\xb)] + (1 + e^\epsilon)\delta,
\end{align} 
then the classification result $(\mathbb{E}[\Gb_1(\xb)],...,\mathbb{E}[\Gb_K(\xb)])$ is robust\footref{foot:robustness} in $\mathbb{B}_p(\xb, r)$, {\em i.e.,} $\forall \xb'\in \mathbb{B}_p(\xb, r)$, $\argmax_i \mathbb{E}[\Gb_i(\xb')] = k$.
\end{lemma}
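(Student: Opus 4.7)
The plan is to combine the tail-integral representation of the expectation of a $[0,1]$-valued random variable with the defining inequality of PixelDP applied to carefully chosen measurable events. Since each coordinate $\Gb_i(\xb)$ takes values in $[0,1]$, one can write $\mathbb{E}[\Gb_i(\xb)] = \int_0^1 P(\Gb_i(\xb) > t)\, dt$, and similarly for $\xb'$. Applying Definition \ref{def:pixeldp} to the sets $S_{i,t} = \{\yb \in \mathcal{O} : \yb_i > t\}$ (treating $\Gb$ itself as the randomized mechanism, since each coordinate is a post-processing of $\mathcal{M}$) gives $P(\Gb_i(\xb) > t) \leq e^\epsilon P(\Gb_i(\xb') > t) + \delta$ and the symmetric inequality with $\xb$ and $\xb'$ swapped, valid for all $\xb' \in \mathbb{B}_p(\xb, r)$. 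Integrating in $t \in [0,1]$ then yields the two-sided comparison $e^{-\epsilon}(\mathbb{E}[\Gb_i(\xb)] - \delta) \leq \mathbb{E}[\Gb_i(\xb')] \leq e^\epsilon \mathbb{E}[\Gb_i(\xb)] + \delta$.

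Next, I would apply the lower bound at $i = k$ to control the predicted-class score at the perturbed input, $\mathbb{E}[\Gb_k(\xb')] \geq e^{-\epsilon}(\mathbb{E}[\Gb_k(\xb)] - \delta)$, and the upper bound at every $i \neq k$ to control the runner-up, $\max_{i \neq k}\mathbb{E}[\Gb_i(\xb')] \leq e^\epsilon \max_{i \neq k}\mathbb{E}[\Gb_i(\xb)] + \delta$. To conclude that $k$ remains the $\argmax$ at $\xb'$, it suffices to verify that the former strictly exceeds the latter. A short manipulation (multiply both sides by $e^\epsilon$, then collect the $\delta$ terms) shows this strict inequality is exactly equivalent to the hypothesis $\mathbb{E}[\Gb_k(\xb)] > e^{2\epsilon}\max_{i \neq k}\mathbb{E}[\Gb_i(\xb)] + (1 + e^\epsilon)\delta$. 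Since $\xb'$ was arbitrary in $\mathbb{B}_p(\xb, r)$, the desired robustness conclusion follows.

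The main obstacle I expect is largely a bookkeeping matter rather than a conceptual one: one must verify that each threshold set $S_{i,t}$ is a measurable subset of the output space $\mathcal{O}$ so that Definition \ref{def:pixeldp} genuinely applies to it, and that Fubini/Tonelli is valid when exchanging the $t$-integral with the probability measure. One also needs the symmetry of the PixelDP guarantee in its two arguments (which follows since the $\ell_p$ ball around $\xb$ contains $\xb'$ iff the ball around $\xb'$ contains $\xb$) in order to obtain both directions of the comparison above. Once these standard points are handled, the remainder is the elementary algebraic reduction sketched above and the lemma is immediate.
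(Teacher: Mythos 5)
Your proof is correct: the tail-integral identity $\mathbb{E}[\Gb_i(\xb)]=\int_0^1 P(\Gb_i(\xb)>t)\,dt$ combined with the PixelDP inequality on the threshold sets $S_{i,t}$ yields the two-sided expected-output stability bounds, and the algebraic reduction to the hypothesis $\mathbb{E}[\Gb_k(\xb)] > e^{2\epsilon}\max_{i\neq k}\mathbb{E}[\Gb_i(\xb)] + (1+e^\epsilon)\delta$ checks out exactly. The paper does not reprove this lemma (it is imported from \cite{lecuyer2018certified}), and your argument is essentially the standard proof given there, including the correct handling of the $\delta$ term via the $[0,1]$-boundedness of each coordinate and the symmetry of the ball relation.
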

Note that the definition of the randomized classifier $\Gb(\cdot)$ is different from the definition of $g(\cdot)$ since the output of $g(\cdot)$ is a scalar not a vector (prediction label). $g(\cdot)$ is more popular in the follow-up works such as \cite{li2019certified, cohen2019certified}. 
\cite{lecuyer2018certified} mainly utilizes two mechanisms, {\em i.e.,} the Laplace mechanism and Gaussian mechanism, to guarantee PixelDP. Specifically, adding Laplace noise ({\em i.e.,} $p(z) = \frac{1}{2b}\exp{(-\frac{|z|}{b})}$) to the data samples can certify $(\epsilon, 0)$-PixelDP in $\mathbb{B}_1(\xb,  b\epsilon)$ for any $\xb$, and adding Gaussian noise ({\em i.e.,} $p(z) = \frac{1}{\sqrt{2\pi}\sigma}\exp{(-\frac{z^2}{2\sigma^2})}$) can certify $(\epsilon, \delta)$-PixelDP in $\mathbb{B}_2(\xb,  \frac{\sigma\epsilon}{\sqrt{2\log{1.25/\delta}}})$ for any $\xb$.
\vspace{-0.06in}
\paragraph{R\'{e}nyi Divergence-based Bound}
\cite{li2019certified} proves a tighter estimate (compared with \cite{lecuyer2018certified}) on the robust radii based on the following lemma. 
\begin{lemma}[R\'{e}nyi Divergence Lemma \cite{li2019certified}]\label{lemma:renyi}
Let $P=(p_1, p_2, ..., p_k)$ and $Q=(q_1, q_2, ..., q_k)$ be two multinomial distributions. If the indices of the largest probabilities \textbf{do not} match on $P$ and $Q$, then the R\'{e}nyi divergence between $P$ and $Q$, {\em i.e.,} $D_\alpha(P||Q)$\footnote{For $\alpha\in (1,\infty)$, $D_\alpha(P||Q)$ is defined as $D_\alpha(P||Q) = \frac{1}{\alpha-1} \log \mathbb{E}_{x\sim Q}(\frac{P(x)}{Q(x)})^\alpha$.}, satisfies
\begin{align}
       D_\alpha(P||Q) \geq -\log(1 - p_{(1)} - p_{(2)} + 2(\frac{1}{2}(p_{(1)}^{1-\alpha} + p_{(2)}^{1-\alpha}))^{\frac{1}{1-\alpha}}).
    \nonumber 
\end{align}
where $p_{(1)}$ and $p_{(2)}$ refer to the largest and the second largest probabilities in $\{p_i\}$, respectively.
\end{lemma}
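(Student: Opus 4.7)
The plan is to exponentiate the stated bound and reduce it to a constrained minimization problem. For $\alpha > 1$, the inequality $D_\alpha(P\|Q) \ge -\log Y$ is equivalent to $\sum_i p_i^\alpha q_i^{1-\alpha} \ge Y^{1-\alpha}$, where $Y$ denotes the bracketed quantity on the right of the bound. It therefore suffices to compute a uniform lower bound on $\sum_i p_i^\alpha q_i^{1-\alpha}$ over all distributions $Q$ whose largest coordinate is not at $\argmax P$. After relabeling so that $p_1 = p_{(1)} = a$ and $p_2 = p_{(2)} = b$, the constraint becomes $\argmax_i q_i \ne 1$.

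The first substantive step is to identify the worst-case $Q$. I would argue, via a KKT / Lagrange multiplier analysis, that (i) the unconstrained stationary point of $\sum_i p_i^\alpha q_i^{1-\alpha}$ is $q_i \propto p_i$, which has argmax at index $1$ and therefore violates the constraint; (ii) among all ways of routing $\argmax Q$ to an index $j \ne 1$, the cheapest choice is $j = 2$, since moving dominant $Q$-mass to the second-largest $p$-index wastes the least probability; and (iii) the binding constraint is $q_1 = q_2 =: t$. A separate Lagrange calculation on the tail indices $i \ge 3$ shows that, for fixed $t$, the minimum over $\{q_i\}_{i \ge 3}$ with $\sum_{i\ge3} q_i = 1 - 2t$ is attained at $q_i \propto p_i$, which collapses the tail contribution to $(1-a-b)^\alpha (1-2t)^{1-\alpha}$.

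This reduces the problem to a one-variable minimization of
\begin{equation*}
f(t) \;=\; (a^\alpha + b^\alpha)\, t^{1-\alpha} \;+\; (1-a-b)^\alpha (1-2t)^{1-\alpha}.
\end{equation*}
Solving $f'(t) = 0$ gives $t^*$ in closed form, and substituting back produces an infimum of $\sum_i p_i^\alpha q_i^{1-\alpha}$ that depends only on $a$, $b$, and $\alpha$. Applying $\tfrac{1}{\alpha-1}\log(\cdot)$ then converts this polynomial bound into the logarithmic form stated in the lemma.

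The main obstacle is the final algebraic identification. The minimum $f(t^*)$ naturally expresses itself through the $\alpha$-power mean $M_\alpha(a,b) = \bigl(\tfrac{1}{2}(a^\alpha + b^\alpha)\bigr)^{1/\alpha}$, whereas the lemma writes the bound in terms of the $(1-\alpha)$-power mean $M_{1-\alpha}(a,b) = \bigl(\tfrac{1}{2}(a^{1-\alpha}+b^{1-\alpha})\bigr)^{1/(1-\alpha)}$. Reconciling the two requires either an exponent-duality identity between $M_\alpha$ and $M_{1-\alpha}$ or, more directly, an upfront application of Hölder's inequality to the pair $(p_1, p_2)$ separately from the tail, which makes the $(1-\alpha)$ exponent appear naturally and bypasses the intermediate optimization.
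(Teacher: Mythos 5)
First, a point of reference: the paper does not actually prove this lemma --- it is imported verbatim from \cite{li2019certified} and used as a black box (the appendix only proves the paper's own theorems), so there is no in-paper proof to compare against. Your strategy is nonetheless the standard one for a statement of this type: exponentiate, minimize $\sum_i p_i^\alpha q_i^{1-\alpha}$ over the constrained set, locate the worst case at $q_1=q_2=t$ with the tail proportional to $p$, and solve a one-variable problem. Your steps (i)--(iii) and the tail collapse are all correct and provable: the objective is convex in $Q$, the unconstrained minimizer $q=p$ is infeasible, and the value of the boundary minimum is monotone in $p_j$, so $j=2$ is indeed the cheapest index to promote.

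The genuine gap is precisely the ``obstacle'' you flag at the end, and it cannot be closed by an identity between power means. Writing $M_s(a,b)=\bigl(\tfrac12(a^s+b^s)\bigr)^{1/s}$ and carrying your computation to completion gives $t^*=M_\alpha/(2M_\alpha+1-a-b)$ and $f(t^*)=(1-a-b+2M_\alpha(a,b))^{\alpha}$, hence the bound $D_\alpha(P\|Q)\ge\frac{\alpha}{\alpha-1}\log(1-a-b+2M_\alpha(a,b))$. This is a genuinely different number from the stated right-hand side $-\log(1-a-b+2M_{1-\alpha}(a,b))$: for $P=(0.7,0.3)$ and $\alpha=2$ your bound evaluates to $2\log(2\sqrt{0.29})\approx 0.148$ while the lemma's evaluates to $-\log(0.84)\approx 0.174$, and $Q=(\tfrac12,\tfrac12)$ actually attains $\approx 0.148$ for the footnote's orientation of the divergence --- so no reconciliation identity can exist, and the displayed inequality is not even valid for $\min_Q\sum_i p_i^\alpha q_i^{1-\alpha}$. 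The stated formula is what you get by running your \emph{identical} argument on $\sum_i q_i^\alpha p_i^{1-\alpha}$, i.e.\ with the known distribution carrying the $1-\alpha$ exponent: the head term becomes $(a^{1-\alpha}+b^{1-\alpha})\,t^\alpha$, the stationary point is $t^*=M_{1-\alpha}/(2M_{1-\alpha}+1-a-b)$, and the minimum is exactly $(1-a-b+2M_{1-\alpha}(a,b))^{1-\alpha}$, which after applying $\tfrac{1}{\alpha-1}\log(\cdot)$ is the lemma's right-hand side on the nose. So the fix is not algebraic massaging of $f(t^*)$ but redoing the optimization with the exponents attached the other way (equivalently, proving the bound for $D_\alpha(Q\|P)$); as written, your plan proves a true but different inequality and then hopes for a bridge that does not exist.
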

If the Gaussian mechanism is applied to certify $\ell_2$-norm robustness, 
then we have the following bound of the robust radii. 
\begin{lemma}[\cite{li2019certified}]\label{lemma:renyi_gauss}
Let $f$ be any deterministic classifier and  $g(\xb)=f(\xb+\zb)$ be its corresponding randomized classifier for sample $\xb\in \mathbb{R}^d$, where $\zb \sim \mathcal{N}(0, \sigma^2I_d)$. Then $\forall \xb'\in \mathbb{B}_p(\xb, r)$, $\argmax_{y} P(g(\xb) = y) = \argmax_{y'} P(g(\xb') = y')$, i.e., $g(\cdot)$ is robust in $\mathbb{B}_p(\xb, r)$, and
the $\ell_2$ robust radii $r$ that could be certified is given by
\begin{align}
       r^2 \leq \sup_{\alpha > 1}-\frac{2\sigma^2}{\alpha}\log(&1 - p_{(1)} - p_{(2)} +  2(\frac{1}{2}(p_{(1)}^{1-\alpha} + p_{(2)}^{1-\alpha}))^{\frac{1}{1-\alpha}}).  
\end{align}
\end{lemma}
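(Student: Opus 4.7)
The plan is to argue by contrapositive: assume that $g$ is not robust in $\mathbb{B}_2(\xb,r)$ and derive an upper bound on $r$ in terms of $p_{(1)}, p_{(2)}, \sigma, \alpha$; the stated lemma then follows by taking the sup over $\alpha>1$ of the resulting inequality. Concretely, suppose there exists $\xb'\in \mathbb{B}_2(\xb,r)$ whose top predicted label differs from that of $\xb$. Let $P$ be the push-forward distribution of $g(\xb)=f(\xb+\zb)$ and $Q$ that of $g(\xb')=f(\xb'+\zb)$, both viewed as multinomial distributions over the $K$ labels. By assumption the indices of the largest mass on $P$ and $Q$ disagree, so Lemma \ref{lemma:renyi} applies and yields
\begin{align}
    D_\alpha(P\|Q) \;\geq\; -\log\!\Bigl(1 - p_{(1)} - p_{(2)} + 2\bigl(\tfrac{1}{2}(p_{(1)}^{1-\alpha} + p_{(2)}^{1-\alpha})\bigr)^{\tfrac{1}{1-\alpha}}\Bigr).
\end{align}

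The next step is to upper-bound the same quantity using the Gaussian noise structure. Since $P$ and $Q$ are obtained from $\mathcal{N}(\xb,\sigma^2 I_d)$ and $\mathcal{N}(\xb',\sigma^2 I_d)$ respectively by applying the (deterministic, measurable) map $f$, the data processing inequality for R\'enyi divergence gives
\begin{align}
    D_\alpha(P\|Q) \;\leq\; D_\alpha\!\bigl(\mathcal{N}(\xb,\sigma^2 I_d)\,\|\,\mathcal{N}(\xb',\sigma^2 I_d)\bigr).
\end{align}
For two Gaussians with common covariance $\sigma^2 I_d$, the R\'enyi divergence has the well-known closed form $D_\alpha = \alpha\|\xb-\xb'\|_2^2/(2\sigma^2)$, and since $\xb'\in \mathbb{B}_2(\xb,r)$ this is at most $\alpha r^2/(2\sigma^2)$. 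Chaining the two bounds and rearranging produces
\begin{align}
    r^2 \;\geq\; -\frac{2\sigma^2}{\alpha}\log\!\Bigl(1 - p_{(1)} - p_{(2)} + 2\bigl(\tfrac{1}{2}(p_{(1)}^{1-\alpha} + p_{(2)}^{1-\alpha})\bigr)^{\tfrac{1}{1-\alpha}}\Bigr)
\end{align}
for \emph{every} $\alpha>1$. Hence if $r$ is strictly less than the sup over $\alpha>1$ of the right-hand side, we obtain a contradiction, and the top labels must agree for all $\xb'\in \mathbb{B}_2(\xb,r)$, which is exactly robustness as defined.

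The only substantive analytic work is the closed-form evaluation of $D_\alpha$ between two equal-covariance Gaussians, which is a direct Gaussian integral: writing out $\int p^\alpha q^{1-\alpha}$, the quadratic exponent simplifies to a single Gaussian (normalization cancels since the covariances match) and leaves the factor $\alpha(\alpha-1)\|\xb-\xb'\|_2^2/(2\sigma^2)$ inside $\tfrac{1}{\alpha-1}\log$, yielding $\alpha\|\xb-\xb'\|_2^2/(2\sigma^2)$. The main conceptual obstacle is simply invoking the correct tool at each step, namely Lemma \ref{lemma:renyi} on the output side and the data processing inequality to transport the bound back to the Gaussian input side; neither involves a delicate estimate, and no condition on $p_{(1)},p_{(2)}$ beyond those implicit in Lemma \ref{lemma:renyi} is required.
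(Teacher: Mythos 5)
Your proposal is correct and follows essentially the same route as the paper: the paper obtains this bound by computing $D_\alpha(\mathcal{N}(\xb,\sigma^2 I_d)\|\mathcal{N}(\xb',\sigma^2 I_d))=\alpha\|\xb-\xb'\|_2^2/(2\sigma^2)$ (Theorem~\ref{thm:l2_gauss}), transporting it to the output distributions via the postprocessing/data-processing property, and invoking the contrapositive of Lemma~\ref{lemma:renyi} (Theorem~\ref{thm:connect_li}), then substituting $\epsilon=r^2/(2\sigma^2)$. You simply inline these steps rather than routing them through the $D_{MR}$ definition; the ingredients and the argument are the same, including the mild strict-versus-non-strict boundary looseness already present in the paper's own treatment.
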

$p_{(1)}$ and $p_{(2)}$ refer to the largest and the second largest probabilities in $\{p_i\}$, {\em where $p_i$ is the probability that $g(\xb)$ returns the $i$-th class, {\em i.e.,} $p_i = P(g(\xb) = i)$.}
\vspace{-0.1in}
\section{ Framework Overview}\label{sec:over}
\vspace{-0.1in}
In this section, we present a generic framework based on the Definition \ref{def:max_renyi}, \ref{def:DMR}, and \ref{def:metric}, for assessing randomized mechanisms. According to Definition~\ref{def:DMR}, our framework applies to different norms. Moreover, we show that our proposed framework connects the existing general frameworks in \cite{lecuyer2018certified, li2019certified} by Theorem~\ref{thm:connect_le} \& \ref{thm:connect_li}. {\em Also, we note that it is difficult to involve the framework in \cite{cohen2019certified} since \cite{cohen2019certified} 
restricts the additive noise of the randomized mechanism to be isotropic such as Gaussian noise, while in our framework, we do not need to specify the type of additive noise.}
\vspace{-0.1in}
\subsection{Main Definitions}
\vspace{-0.1in}
Under our framework, the definition of adversarial robustness is induced by maximal relative R\'{e}nyi divergence (MR divergence), namely $D_{MR}$ robustness, so we start from introducing the definition of MR divergence.
\begin{definition}[Maximal Relative R\'{e}nyi Divergence]\label{def:max_renyi}
	The Maximal Relative R\'{e}nyi Divergence $D_{MR}(P \| Q)$ of distributions $P$ and $Q$ is defined as
	\begin{equation}
		D_{MR}(P\|Q)=\max_{\alpha\in(1, \infty) }\frac{D_\alpha (P\|Q)}{\alpha},
	\end{equation}
\end{definition}
where $D_\alpha (P\|Q)$ is the R\'{e}nyi Divergence between $P$ and $Q$. Using $D_{MR}$ as the probability measure, we can define adversarial robustness as follows:
\begin{definition}[$D_{MR}$ Robustness]\label{def:DMR}
	We say a randomized (smoothing) mechanism $\mathcal{M}(\cdot)$  
	is $(r, D_{MR}, \|\cdot\|_p, \epsilon)$-robust if for any $\xb \in \mathbb{R}^d$ and $\forall \xb'\in \mathbb{B}_p(\xb, r)$,
	\begin{align}\label{eq:DMR}
	D_{MR}&(\mathcal{M}(\xb)\|\mathcal{M}(\xb')) \leq \epsilon.
	\end{align}
	If a randomized smoothing classifier $g(\cdot)$ satisfies the above condition,
	we can say it is a $(r, D_{MR}, \|\cdot\|_p, \epsilon)$-robust classifier or 
	it certifies $(r, D_{MR}, \|\cdot\|_p, \epsilon)$-robustness.
\end{definition}
A property of $D_{MR}$ robustness we use throughout this paper is its postprocessing property, which can be stated as follows:
\begin{corollary}[Postprocessing Property]\label{thm:post}
	Let 
	$g(\xb) = f(\mathcal{M}(\xb))$ be a randomized classifier, where $f(\cdot)$ is any deterministic function (classifier). $g(\cdot)$ is $(r, D_{MR}, \|\cdot\|_p, \epsilon)$-robust if $\mathcal{M}(\cdot)$ is $(r, D_{MR}, \|\cdot\|_p, \epsilon)$-robust.
\end{corollary}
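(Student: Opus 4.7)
The plan is to reduce the postprocessing property of $D_{MR}$ robustness to the classical data processing inequality for R\'enyi divergence. Recall that for any $\alpha \in (1,\infty)$ and any (possibly randomized, in particular deterministic) function $f$, it holds that
\begin{equation*}
D_\alpha(f(P) \,\|\, f(Q)) \leq D_\alpha(P \,\|\, Q).
\end{equation*}
This is the standard data processing inequality for the $\alpha$-R\'enyi divergence; I would cite it as a well-known fact rather than re-derive it.

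From here the proof is essentially mechanical. First, for any $\xb \in \mathbb{R}^d$ and any $\xb' \in \mathbb{B}_p(\xb, r)$, apply the data processing inequality to the deterministic map $f$ with $P = \mathcal{M}(\xb)$ and $Q = \mathcal{M}(\xb')$, so that for every $\alpha > 1$,
\begin{equation*}
\frac{D_\alpha(f(\mathcal{M}(\xb)) \,\|\, f(\mathcal{M}(\xb')))}{\alpha} \leq \frac{D_\alpha(\mathcal{M}(\xb) \,\|\, \mathcal{M}(\xb'))}{\alpha}.
\end{equation*}
Second, take the supremum over $\alpha \in (1,\infty)$ on both sides; since the inequality is preserved pointwise in $\alpha$, we obtain
\begin{equation*}
D_{MR}(g(\xb) \,\|\, g(\xb')) = D_{MR}(f(\mathcal{M}(\xb)) \,\|\, f(\mathcal{M}(\xb'))) \leq D_{MR}(\mathcal{M}(\xb) \,\|\, \mathcal{M}(\xb')) \leq \epsilon,
\end{equation*}
where the last inequality is exactly the hypothesis that $\mathcal{M}$ is $(r, D_{MR}, \|\cdot\|_p, \epsilon)$-robust. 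Since $\xb$ and $\xb' \in \mathbb{B}_p(\xb, r)$ were arbitrary, this verifies Definition~\ref{def:DMR} for $g$.

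There is no real obstacle here; the only subtlety is to note that the postprocessing map $f$ in the definition of $g$ is deterministic, which is a special case of the data processing inequality for R\'enyi divergence, and that the ``maximum over $\alpha$'' in $D_{MR}$ commutes with the pointwise-in-$\alpha$ bound. If one wanted to be fully self-contained, the hardest step would be proving the data processing inequality itself, which can be obtained either via the joint convexity of the $\alpha$-divergence functional or via a direct computation using Jensen's inequality with the convex function $t \mapsto t^\alpha$; but since this is a classical result, I would simply invoke it.
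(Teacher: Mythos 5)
Your proof is correct and follows exactly the route the paper itself takes: the data processing inequality $D_\alpha(f(\mathcal{M}(\xb))\|f(\mathcal{M}(\xb'))) \leq D_\alpha(\mathcal{M}(\xb)\|\mathcal{M}(\xb'))$ applied pointwise in $\alpha$, followed by taking the maximum over $\alpha$ to pass to $D_{MR}$. No differences worth noting.
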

This postprocessing property can be easily proved by $D_\alpha(f(\mathcal{M}(\xb))\|f(\mathcal{M}(\xb'))) \leq D_\alpha(\mathcal{M}(\xb)\|\mathcal{M}(\xb'))$  for any $\alpha\in (1, \infty)$ \cite{van2014renyi}. This property allows us to only concentrate on the randomized mechanism $\mathcal{M}(\cdot)$ without considering the specific form of the deterministic classifier $f(\cdot)$, and therefore makes the framework applicable to an arbitrary neural network.
\vspace{-0.1in}
\subsection{Connections between $D_{MR}$ robustness and the existing frameworks}
\vspace{-0.1in}
The framework defined by Definition~\ref{def:max_renyi} \& \ref{def:DMR} is generic since it is closely connected with the existing ones \cite{lecuyer2018certified, li2019certified}. Here we demonstrate the connections by the following two theorems. 
\begin{theorem}[$D_{MR}$ Robustness \& PixelDP]\label{thm:connect_le}
If $\mathcal{M}(\cdot): \mathbb{R}^d \mapsto \mathbb{R}^d$ is $(r, D_{MR}, \|\cdot\|_p, \epsilon)$-robust, then $\mathcal{M}(\cdot)$ is also $(\epsilon + 2\sqrt{\log{(1/\delta)}\epsilon}, \delta)$-PixelDP in $\mathbb{B}_p(\xb, r)$ for any $\xb \in \mathbb{R}^d$. 
\end{theorem}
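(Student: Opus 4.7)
The plan is to reduce the statement to the standard conversion from a Rényi divergence bound to $(\epsilon,\delta)$-DP, and then to optimize over the Rényi order $\alpha$ so that the resulting privacy parameter collapses to $\epsilon+2\sqrt{\log(1/\delta)\,\epsilon}$.

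First I would unpack the hypothesis. By Definition~\ref{def:max_renyi}, $D_{MR}(\mathcal{M}(\xb)\|\mathcal{M}(\xb'))\le\epsilon$ means exactly that for \emph{every} $\alpha\in(1,\infty)$ and every pair with $\|\xb'-\xb\|_p\le r$, the Rényi divergence satisfies $D_\alpha(\mathcal{M}(\xb)\|\mathcal{M}(\xb'))\le\alpha\epsilon$. So the task reduces to: given the family of Rényi bounds $\{D_\alpha\le\alpha\epsilon\}_{\alpha>1}$, exhibit a constant $\epsilon'$ such that for every measurable $S\subseteq\mathcal{O}$, $P(\mathcal{M}(\xb)\in S)\le e^{\epsilon'}P(\mathcal{M}(\xb')\in S)+\delta$.

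Next I would invoke a Mironov-style RDP-to-DP conversion: if $D_\alpha(P\|Q)\le\rho$, then $P(S)\le e^{\rho+\log(1/\delta)/(\alpha-1)}Q(S)+\delta$ for every $\delta\in(0,1)$ and every measurable $S$. A short self-contained proof splits $P(S)$ according to whether the likelihood ratio $L=dP/dQ$ is below or above a threshold $c$. The low-ratio piece is at most $cQ(S)$. For the high-ratio piece I would apply Markov's inequality to $L^{\alpha-1}$ together with the change-of-measure identity $\mathbb{E}_P[L^{\alpha-1}]=\mathbb{E}_Q[L^\alpha]\le e^{(\alpha-1)\rho}$ supplied directly by the Rényi hypothesis, yielding $P(L>c)\le e^{(\alpha-1)\rho}/c^{\alpha-1}$. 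Choosing $c=e^{\rho+\log(1/\delta)/(\alpha-1)}$ makes that tail exactly $\delta$.

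Finally, plugging $\rho=\alpha\epsilon$ produces a family of valid guarantees parametrized by $\alpha>1$, namely $\epsilon'(\alpha)=\alpha\epsilon+\log(1/\delta)/(\alpha-1)$. Substituting $u=\alpha-1>0$ rewrites this as $\epsilon+u\epsilon+\log(1/\delta)/u$; a one-line first-derivative check shows the minimum is attained at $u=\sqrt{\log(1/\delta)/\epsilon}$, where the two balanced terms each equal $\sqrt{\epsilon\log(1/\delta)}$, giving $\epsilon'=\epsilon+2\sqrt{\log(1/\delta)\,\epsilon}$. The only subtle point to watch is that the optimal $u$ depends on $\delta$ and $\epsilon$ but not on $\xb$, $\xb'$, or $S$, so the same $\alpha$ may be used uniformly in the PixelDP quantifiers — which is precisely why taking the supremum over $\alpha$ in the definition of $D_{MR}$ is the right primitive. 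I do not foresee a real obstacle beyond the standard care needed in the Markov split.
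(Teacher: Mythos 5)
Your proposal is correct and follows essentially the same route as the paper: interpret the $D_{MR}$ bound as $(\alpha,\alpha\epsilon)$-R\'{e}nyi DP for all $\alpha>1$, apply the Mironov RDP-to-DP conversion to get $(\alpha\epsilon+\log(1/\delta)/(\alpha-1),\delta)$-DP, and minimize over $\alpha$ via the substitution $u=\alpha-1$ to obtain $\epsilon+2\sqrt{\log(1/\delta)\,\epsilon}$. The only difference is that you sketch a self-contained likelihood-ratio/Markov proof of the conversion lemma where the paper simply cites it, which is a harmless (and arguably welcome) addition.
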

We note that the opposite of Theorem~\ref{thm:connect_le} holds only when $\delta = 0$, which indicates our framework is a relaxed version of the PixelDP framework. But this should not be a surprise since most of the following frameworks \cite{li2019certified, cohen2019certified, dvijotham2020framework} can somehow be considered more relaxed than the PixelDP framework and thus yield tighter certified bounds.
Similarly, our framework can provide the same bound on the robust radius as in \cite{li2019certified}, which is tighter than the bound in \cite{lecuyer2018certified} (Theorem~\ref{thm:connect_li}).
\begin{theorem}[$D_{MR}$ Robustness \& R\'{e}nyi Divergence-based Bound]\label{thm:connect_li}
If a randomized classifier $g(\cdot)$ is $(r, D_{MR}, \|\cdot\|_p, \epsilon)$-robust, then we have $\forall \xb'\in \mathbb{B}_p(\xb, r)$, $\argmax_{y} P(g(\xb) = y) = \argmax_{y'} P(g(\xb') = y')$ as long as 
\begin{align} 
\epsilon \leq \sup_{\alpha > 1} -\frac{1}{\alpha}\log(1 - p_{(1)} - p_{(2)} + 2(\frac{1}{2}(p_{(1)}^{1-\alpha} + p_{(2)}^{1-\alpha}))^{\frac{1}{1-\alpha}}),
\end{align}
\end{theorem}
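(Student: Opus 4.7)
The plan is to argue by contrapositive, combining the $D_{MR}$ robustness hypothesis with Lemma~\ref{lemma:renyi} (the R\'enyi Divergence Lemma of Li et al.). Specifically, suppose toward contradiction that there exists some $\xb' \in \mathbb{B}_p(\xb, r)$ at which the top-predicted class of $g$ changes, i.e.\ $\argmax_{y} P(g(\xb)=y) \neq \argmax_{y'} P(g(\xb')=y')$. Let $P=(p_1,\dots,p_K)$ and $Q=(q_1,\dots,q_K)$ denote the multinomial distributions of $g(\xb)$ and $g(\xb')$ over the $K$ classes, where $p_{(1)}$, $p_{(2)}$ are the largest and second-largest entries of $P$. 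Under this assumption the hypotheses of Lemma~\ref{lemma:renyi} are met, so for every $\alpha \in (1,\infty)$,
\begin{align*}
D_\alpha(P\|Q) \;\geq\; -\log\!\Bigl(1 - p_{(1)} - p_{(2)} + 2\bigl(\tfrac{1}{2}(p_{(1)}^{1-\alpha} + p_{(2)}^{1-\alpha})\bigr)^{\frac{1}{1-\alpha}}\Bigr).
\end{align*}

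Next I would invoke the $(r, D_{MR}, \|\cdot\|_p, \epsilon)$-robustness of $g$. By Definition~\ref{def:DMR} and Definition~\ref{def:max_renyi}, for every $\alpha>1$ we have $D_\alpha(P\|Q)/\alpha \leq D_{MR}(P\|Q)\leq \epsilon$, hence $D_\alpha(P\|Q) \leq \alpha\epsilon$. Chaining this with the previous display and dividing by $\alpha$ yields
\begin{align*}
\epsilon \;\geq\; -\tfrac{1}{\alpha}\log\!\Bigl(1 - p_{(1)} - p_{(2)} + 2\bigl(\tfrac{1}{2}(p_{(1)}^{1-\alpha} + p_{(2)}^{1-\alpha})\bigr)^{\frac{1}{1-\alpha}}\Bigr).
\end{align*}
Taking the supremum over $\alpha>1$ on the right-hand side contradicts the hypothesis that $\epsilon$ is (weakly) bounded by that same supremum, provided we either assume the inequality in the hypothesis is strict at the supremum, or use that the bound supplied by Lemma~\ref{lemma:renyi} is strict whenever both $p_{(1)}>p_{(2)}$ and the argmaxes genuinely differ. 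Either way the contrapositive establishes the claim.

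The routine obstacle is not conceptual but bookkeeping: one must verify that $p_{(1)},p_{(2)}$ in the bound are always defined with respect to the distribution $P$ at $\xb$ (not $Q$ at $\xb'$), matching the convention in Lemma~\ref{lemma:renyi} and in the statement of the theorem, and one must be careful that the postprocessing property (Corollary~\ref{thm:post}) lets us talk about $D_{MR}(g(\xb)\|g(\xb'))$ directly rather than about the underlying mechanism $\mathcal{M}$. A secondary subtlety is the strict-vs-non-strict inequality at the boundary $\epsilon = \sup_{\alpha>1}(\cdots)$; the cleanest fix, if needed, is to note that the supremum in $\alpha$ is not attained in a way that creates equality in the contradiction, or simply to read the theorem as guaranteeing robustness whenever $\epsilon$ is strictly less than the certified radius expression. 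No new analytic machinery beyond Lemma~\ref{lemma:renyi} and the definitions is required.
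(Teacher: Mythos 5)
Your proposal is correct and follows essentially the same route as the paper: both arguments amount to the contrapositive of Lemma~\ref{lemma:renyi} combined with the bound $D_\alpha(g(\xb)\|g(\xb'))\le\alpha\epsilon$ that follows from Definitions~\ref{def:max_renyi} and~\ref{def:DMR}. The strict-versus-non-strict boundary issue you flag at $\epsilon=\sup_{\alpha>1}(\cdots)$ is real, but the paper has the same gap and silently papers over it by writing $D_\alpha(g(\xb)\|g(\xb'))<\alpha\epsilon$ even though the definition of $D_{MR}$ robustness only yields a non-strict inequality.
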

where $p_{(1)}$ and $p_{(2)}$ also refer to the largest and the second largest probabilities in $\{p_i\}$, and $p_i$ is the probability that $g(\xb)$ returns the $i$-th class, {\em i.e.,} $p_i = P(g(\xb) = i)$.
Based on the above theorem, we can derive the same $\ell_2$ robust radius as in Lemma~\ref{lemma:renyi_gauss} \cite{li2019certified}. We will detail how to derive the $\ell_2$ robust radius after Theorem~\ref{thm:l2_gauss} in Section~\ref{sec:l2}. 

An interpretation of Theorem~\ref{thm:connect_le} and \ref{thm:connect_li} is that, as long as we can use a randomized mechanism with a certain amount of noise to certify $D_{MR}$ robustness, we can use the same mechanism with the same amount of noise to certify PixelDP and the R\'{e}nyi Divergence-based bound. Thus, Theorem~\ref{thm:connect_le} and \ref{thm:connect_li} indicate the assessment results based on the metric defined in Section~\ref{subsec:assessment} is very likely to generalize to the other frameworks.
\vspace{-0.1in}
\subsection{Assessment of Randomized Mechanisms}\label{subsec:assessment}
\vspace{-0.1in}
Since there are infinite randomized mechanisms, a natural problem is to determine whether a certain randomized mechanism is an appropriate option to certify adversarial robustness. However, we note that all the previous work \cite{li2019certified, cohen2019certified, salman2019provably} overlook this problem and assume the Gaussian mechanism to be an appropriate mechanism for certifying $\ell_2$-norm robustness without sufficient assessment. While in this paper, we attempt to provide a solution to this problem under our proposed framework. Specifically, we define a metric to assess randomized mechanisms as follows:

\begin{definition}\label{def:metric}
    \vspace{-0.03in}
    Specify a $p$-norm, a robust radius $r$, and an epsilon $\epsilon$, the magnitude (expected $\ell_\infty$-norm) of the additive noise required by a randomized mechanism $\mathcal{M}(\xb) = \xb + \zb$ to certify $(r, D_{MR}, \|\cdot\|_p, \epsilon)$-robustness is defined as the metric to assess the appropriateness of $\mathcal{M}(\cdot)$. 
    \vspace{-0.03in}
\end{definition}

We define this metric for assessing randomized mechanisms because {\em the accuracy of neural networks tends to decrease as the magnitude of the noise added to the inputs increases.} Note that if the magnitude of the noise required by a randomized classifier is too large, the accuracy of its predictions on clean samples will be very low, then robustness will be useless\footnote{Certified robustness only guarantees the predictions of the perturbed samples and the predictions of their clean samples are the same.\label{foot:robustness}}.
Given the above metric, we also need criteria to assess the (relative) appropriateness of a randomized mechanism. In this paper, we employ the lower bounds on the magnitude of the noise required by any randomized mechanism to certify $(r, D_{MR}, \|\cdot\|_p, \epsilon)$-robustness as the criteria. {\em We consider a randomized mechanism as an appropriate option if the gap between the magnitude of the additive noise required by this mechanism and the corresponding lower bound is small.} 
In the following, we will provide the lower bounds for $\ell_2$-norm and $\ell_\infty$-norm, {\em i.e., the two most popular norms}, and assess the appropriateness of the Gaussian and Exponential mechanisms for certifying $\ell_2$-norm and $\ell_\infty$-norm robustness. In Appendix, we generalize our framework to $\ell_p$-norm for any $p\geq2$.
\vspace{-0.1in}
\section{Assessing Mechanisms for Certifying $\ell_2$-norm Robustness}\label{sec:l2}
\vspace{-0.1in}
In this section, we first elaborate on how the Gaussian mechanism certifies $D_{MR}$ robustness, and then provide the lower bound on the magnitude of the additive noise required by any randomized mechanism ($\mathcal{M}(\xb) = \xb + \zb$) to certify $\ell_2$-norm robustness. By comparing the magnitude of the additive noise required by the Gaussian mechanism with the lower bound, we conclude that the Gaussian mechanism is an appropriate option to certify $\ell_2$-norm robustness.
\vspace{-0.05in}
\begin{theorem}[Gaussian Mechanism for Certifying $\ell_2$-norm robustness]\label{thm:l2_gauss}
	Let $f$ be any deterministic classifier and $g(\xb)=f(\mathcal{M}(\xb))$ be its corresponding randomized classifier for sample $\xb \in \mathbb{R}^d$, where $\mathcal{M}(\xb) = \xb + \zb$ with $\zb \sim \mathcal{N}(0, \sigma^2I_d)$. Then, $g(\cdot)$ is $(r, D_{MR}, \|\cdot\|_2, \frac{r^2}{2\sigma^2})$-robust. 
\end{theorem}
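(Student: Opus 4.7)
The plan is to reduce the claim about the randomized classifier $g(\cdot)$ to a claim about the underlying additive noise mechanism $\mathcal{M}(\cdot)$, and then compute the maximal relative R\'enyi divergence explicitly using the known closed form for R\'enyi divergences of Gaussians. By Corollary~\ref{thm:post}, since $g(\xb) = f(\mathcal{M}(\xb))$ with $f$ deterministic, it suffices to prove that $\mathcal{M}(\cdot)$ itself is $(r, D_{MR}, \|\cdot\|_2, \tfrac{r^2}{2\sigma^2})$-robust, i.e., that for every $\xb \in \RR^d$ and every $\xb' \in \mathbb{B}_2(\xb, r)$,
\begin{equation*}
D_{MR}\bigl(\mathcal{N}(\xb, \sigma^2 I_d)\,\|\,\mathcal{N}(\xb', \sigma^2 I_d)\bigr) \;\leq\; \frac{r^2}{2\sigma^2}.
\end{equation*}

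The key step is to invoke the standard closed-form expression for the R\'enyi divergence between two spherical Gaussians with common covariance $\sigma^2 I_d$:
\begin{equation*}
D_\alpha\bigl(\mathcal{N}(\xb, \sigma^2 I_d)\,\|\,\mathcal{N}(\xb', \sigma^2 I_d)\bigr) \;=\; \frac{\alpha\,\|\xb - \xb'\|_2^2}{2\sigma^2}.
\end{equation*}
This identity can be derived in a few lines from the definition $D_\alpha(P\|Q) = \frac{1}{\alpha - 1}\log \EE_{z \sim Q}(P(z)/Q(z))^\alpha$ by completing the square in the exponent of the Gaussian density ratio, which produces a Gaussian integral that evaluates cleanly. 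Dividing by $\alpha$ gives the remarkable cancellation
\begin{equation*}
\frac{D_\alpha\bigl(\mathcal{N}(\xb, \sigma^2 I_d)\,\|\,\mathcal{N}(\xb', \sigma^2 I_d)\bigr)}{\alpha} \;=\; \frac{\|\xb - \xb'\|_2^2}{2\sigma^2},
\end{equation*}
which is independent of $\alpha$, so the supremum in Definition~\ref{def:max_renyi} is trivially attained everywhere on $(1,\infty)$.

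Finally, using $\|\xb - \xb'\|_2 \leq r$ for $\xb' \in \mathbb{B}_2(\xb, r)$ bounds this quantity by $\tfrac{r^2}{2\sigma^2}$, establishing the $D_{MR}$ robustness of $\mathcal{M}(\cdot)$; the postprocessing corollary then transfers the bound to $g(\cdot)$. There is essentially no obstacle here beyond recalling the Gaussian R\'enyi identity; the only subtlety worth noting is that the $\alpha$-independence of $D_\alpha/\alpha$ for Gaussians is precisely what makes MR divergence a natural and tight probability measure for the Gaussian mechanism, and this is what will later allow the gap to the lower bound in Section~\ref{subsec:assessment} to be only $O(\sqrt{\log d})$.
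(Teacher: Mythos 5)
Your proposal is correct and follows essentially the same route as the paper's own proof: reduce to the mechanism $\mathcal{M}$ via the postprocessing property, invoke the closed form $D_\alpha(\mathcal{N}(\xb,\sigma^2 I_d)\|\mathcal{N}(\xb',\sigma^2 I_d)) = \frac{\alpha\|\xb-\xb'\|_2^2}{2\sigma^2}$, and bound $\|\xb-\xb'\|_2$ by $r$. If anything, your explicit division by $\alpha$ before taking the supremum is slightly cleaner than the paper's one-line inequality, which elides that step.
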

According to Theorem~\ref{thm:connect_li}, if we substitute $\epsilon$ with $\frac{r^2}{2\sigma^2}$, $r$ can be given by 
$
 r^2 \leq \sup_{\alpha > 1} -\frac{2\sigma^2}{\alpha} \log{(1 - p_{(1)} - p_{(2)} + 2(\frac{1}{2}(p_{(1)}^{1-\alpha} +  p_{(2)}^{1-\alpha}))^{\frac{1}{1-\alpha}})},   
$
which is same as the bound of the robust radii in \cite{li2019certified} (Lemma~\ref{lemma:renyi_gauss}).   
To provide a criterion for the assessment of randomized mechanisms in the $\ell_2$-norm case,
we prove a lower bound on the magnitude of the additive noise $\zb$ required by any randomized mechanism $\mathcal{M}(\xb)=\xb+\zb$ to ensure that $\mathcal{M}(\xb)$ (as well as $f(\mathcal{M}(\xb))$) is $(r, D_{MR}, \|\cdot\|_2, \epsilon)$-robust. As mentioned in Section~\ref{subsec:assessment}, if the magnitude of the additive Gaussian noise is close to the lower bound, then the Gaussian mechanism is considered as an appropriate option. The lower bound is given by the following theorem.
\vspace{-0.05in}
\begin{theorem}[$\ell_2$-norm Criterion for Assessment]\label{thm:l2lower}
	For any $\epsilon\leq O(1)$, if there is an $(r, D_{MR}, \|\cdot\|_2, \epsilon)$-robust randomized mechanism $\mathcal{M}(\xb)=\xb+\zb: \mathbb{R}^d \mapsto \mathbb{R}^d$ that satisfies 
	\begin{equation}
		\mathbb{E}[\|\zb\|_\infty]= \mathbb{E}[\|\mathcal{M}(\xb)-\xb\|_\infty] \leq \alpha
	\end{equation}
	for some $\alpha \leq O(1)$, then it must be true that  
	$\alpha \geq \Omega(\frac{r}{\sqrt{\epsilon}})$. 
	In another word, $\Omega(\frac{r}{\sqrt{\epsilon}})$ is the lower bound of the (expected) magnitude of the additive random noise.
\end{theorem}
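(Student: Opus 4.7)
The plan is to prove the lower bound by reducing an instance of the one-way marginals estimation problem in differential privacy to estimation with $\mathcal{M}$, and then invoking the fingerprinting-code lower bounds of Bun-Ullman-Vadhan / Steinke-Ullman. This route is explicitly suggested by the remark preceding the theorem.

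First, I would unpack what $(r, D_{MR}, \|\cdot\|_2, \epsilon)$-robustness actually gives us. By Definitions~\ref{def:max_renyi} and~\ref{def:DMR}, the condition $D_{MR}(\mathcal{M}(\xb)\|\mathcal{M}(\xb')) \leq \epsilon$ for every $\xb' \in \mathbb{B}_2(\xb, r)$ is equivalent to the statement $D_\lambda(\mathcal{M}(\xb)\|\mathcal{M}(\xb')) \leq \lambda \epsilon$ for every R\'{e}nyi order $\lambda > 1$ and every pair $\xb,\xb'$ with $\|\xb - \xb'\|_2 \leq r$. In other words, $\mathcal{M}$ is $(\lambda, \lambda\epsilon)$-R\'{e}nyi differentially private under the adjacency ``$\ell_2$-distance at most $r$'', simultaneously for every $\lambda$. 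Applying the standard R\'{e}nyi-to-approximate DP conversion with the optimized choice $\lambda = 1 + \sqrt{\log(1/\delta)/\epsilon}$ then yields that $\mathcal{M}$ is $(\tilde{\epsilon}, \delta)$-DP with $\tilde{\epsilon} = O(\sqrt{\epsilon \log(1/\delta)})$ for any fixed $\delta \in (0,1)$.

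Second, I would set up the reduction. For a dataset $D = (\xb_1, \ldots, \xb_n) \in \{-1, +1\}^{n \times d}$, let $\bar{\xb}(D) := \frac{1}{n}\sum_{i=1}^n \xb_i \in [-1, +1]^d$ and choose $n := \lceil 2\sqrt{d}/r\rceil$. For neighboring datasets $D, D'$ differing in a single row, $\|\bar{\xb}(D) - \bar{\xb}(D')\|_2 \leq 2\sqrt{d}/n \leq r$, so the composed algorithm $A(D) := \mathcal{M}(\bar{\xb}(D))$ is $(\tilde{\epsilon}, \delta)$-DP by the R\'{e}nyi-to-DP conversion above. Moreover, by the accuracy hypothesis,
\[
\mathbb{E}\|A(D) - \bar{\xb}(D)\|_\infty = \mathbb{E}\|\zb\|_\infty \leq \alpha,
\]
so $A$ answers all $d$ one-way marginals in expected $\ell_\infty$-error at most $\alpha$. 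Next, I would invoke the classical lower bound for privately estimating one-way marginals (Bun-Ullman-Vadhan; Steinke-Ullman), which states that any $(\tilde{\epsilon}, \delta)$-DP mechanism achieving expected $\ell_\infty$-accuracy $\alpha$ on $\{-1,+1\}^{n \times d}$ requires $n = \Omega(\sqrt{d\log(1/\delta)}/(\alpha\tilde{\epsilon}))$, provided $\tilde{\epsilon}, \alpha \leq O(1)$. Plugging in $n = O(\sqrt{d}/r)$ and $\tilde{\epsilon} = O(\sqrt{\epsilon\log(1/\delta)})$ and treating $\delta$ as a constant, the $\sqrt{d}$ and $\sqrt{\log(1/\delta)}$ factors cancel across both sides, and rearranging yields $\alpha \geq \Omega(r/\sqrt{\epsilon})$, which is the claimed bound.

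The main obstacle is matching the parameter regimes carefully: the hypothesis $\epsilon, \alpha \leq O(1)$ must place us inside the regime where the fingerprinting-code lower bound is known to hold, and the logarithmic factors introduced by the R\'{e}nyi-to-DP conversion must cancel against those in the marginals lower bound without polluting the final $\Omega(r/\sqrt{\epsilon})$ bound. A minor subtlety is that $\zb$ could a priori depend on $\xb$; but because the robustness and accuracy hypotheses are assumed for all $\xb \in \mathbb{R}^d$, the reduction applies verbatim at the specific input $\bar{\xb}(D)$, so we may restrict to input-independent additive noise without loss.
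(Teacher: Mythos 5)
Your proposal is correct in substance and rests on exactly the same two pillars as the paper's proof: the conversion from $(r,D_{MR},\|\cdot\|_2,\epsilon)$-robustness to $(\epsilon+2\sqrt{\epsilon\log(1/\delta)},\delta)$-DP (the paper's Theorem~\ref{thm:connect_le}, which is your optimized R\'{e}nyi-to-DP step), and the Steinke--Ullman fingerprinting lower bound for one-way marginals (the paper's Lemma~\ref{lemma:one_way}). Where you differ is in how the adversarial-robustness adjacency is matched to the dataset adjacency of the marginals problem. The paper keeps $n=1$ throughout: it restricts the input domain to the scaled hypercube $\{0,\tfrac{r}{2\sqrt{d}}\}^d$, on which any two points are within $\ell_2$-distance $r$, treats the single point as its own one-way marginal, proves the bound for the special radius $r=2\sqrt{d}$ (Theorem~\ref{thm:simple_l2bound}), and then rescales by $\tfrac{2\sqrt{d}}{r}$ to reach general $r$. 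You instead take $n=\lceil 2\sqrt{d}/r\rceil$ rows from $\{\pm1\}^d$ and feed the empirical mean to $\mathcal{M}$, so that a one-row change moves the mean by at most $r$ in $\ell_2$; the factor $\sqrt{d}/r$ then appears on the left of the sample-complexity inequality rather than being introduced by rescaling. The two reductions are equivalent in strength and yield the same cancellation of the $\log(1/\delta)$ factors; your version handles general $r$ in one shot, at the mild cost of having to choose $\delta$ consistent with the lemma's constraint $\delta\leq n^{-1-\Omega(1)}$ when $n$ is large (your "treat $\delta$ as a constant" needs this small adjustment, but since $\log(1/\delta)$ cancels between $\tilde{\epsilon}$ and the numerator of the lower bound, nothing is lost).

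The one step you skip that the paper treats at length is the range restriction required by Lemma~\ref{lemma:one_way}: the fingerprinting bound is stated for mechanisms with output in $[0,1]^d$ (its tracing argument needs bounded outputs), whereas your $A(D)=\mathcal{M}(\bar{\xb}(D))$ has range $\mathbb{R}^d$. This is precisely what the paper's reduction from Theorem~\ref{thm:formall2lower1} to Theorem~\ref{thm:formall2lower2} accomplishes: coordinatewise clipping of the output to the target box is a postprocessing map, so it preserves both DP and $D_{MR}$-robustness, and it can only decrease the expected $\ell_\infty$-distance to the true marginal (which lies in the box). You should insert this clipping step before invoking the lemma; it is routine, but without it the lemma does not literally apply to your $A$.
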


Note that proving this theorem on $\mathbb{R}^d$ is non-trivial, which is detailed in Appendix.
Theorem \ref{thm:l2lower} indicates that the magnitude (expected $\ell_\infty$-norm) of the additive noise should be at least $\Omega(\frac{r}{\sqrt{\epsilon}})$ to certify $(r, D_{MR}, \|\cdot\|_2, \epsilon)$-robustness. For the Gaussian mechanism, the expected $\ell_\infty$-norm of the additive noise is $O(\sigma\sqrt{\log d})$ according to \cite{orabona2015optimal}, which is $O(\frac{r}{\sqrt{\epsilon}}\sqrt{\log d})$ to guarantee $(r, D_{MR}, \|\cdot\|_2, \epsilon)$-robustness, according to Theorem~\ref{thm:l2_gauss}.
This means the gap between the magnitude of the noise required by the Gaussian mechanism and the lower bound is bounded by $O(\sqrt{\log d})$. 
\begin{remark}\label{remark:scale}
    We say Gaussian mechanism is an appropriate option because the gap $O(\sqrt{\log d})$ is small for most commonly-used datasets. For instance, for CIFAR-10 ($d=3072$), $\sqrt{\log_e d} \approx 2.83$, and for ImageNet ($d=150528$), $\sqrt{\log_e d} \approx 3.45$.
\end{remark}
Equivalently, if we fix the expected $\ell_\infty$-norm of the additive noise as $\alpha$, the radius $r$ that can be certified by any $(r, D_{MR}, \|\cdot\|_2, \epsilon)$-robust randomized mechanism is upper bounded by $O(\alpha \sqrt{\epsilon})$, according to Theorem~\ref{thm:l2lower}. For the Gaussian mechanism, since $\alpha = O(\sigma\sqrt{\log d})$, the certified robust radius $r$ is $O(\frac{\alpha \sqrt{\epsilon}}{\sqrt{\log d}})$\footnote{The theoretical results of the scales of the robust radii are verified by experiments.\label{foot:verification}}, according to Theorem~\ref{thm:l2_gauss}. 
This means the gap between the upper bound of the robust radius and the radius certified by the Gaussian mechanism is also $O(\sqrt{\log d})$.
\vspace{-0.1in}
\section{Assessing Mechanisms for Certifying $\ell_\infty$-norm Robustness}\label{sec:linfty}
\vspace{-0.1in}
In this section, we first discuss the possibility of using the Exponential mechanism, an analogue of the Gaussian mechanism in the $\ell_\infty$-norm case, to certify $\ell_\infty$-norm robustness. Then, we prove the lower bound on the magnitude of the additive noise required by any randomized mechanism to certify $\ell_\infty$-norm robustness. By comparing the magnitude of the noise required by the Exponential mechanism with the lower bound, we conclude that the Exponential mechanism is not an appropriate option to certify $\ell_\infty$-norm robustness. Surprisingly, we find that the Gaussian mechanism is a more appropriate option than the Exponential mechanism to certify $\ell_\infty$-norm robustness.

We first recall the form of the density function of Gaussian noise:
$ p(\zb)\propto \exp(-\frac{\|\zb\|_2^2}{\sigma^2})$.
Based on this, we conjecture that, to certify $\ell_\infty$-norm robustness, we can sample the noise using the Exponential mechanism, an analogue of the Gaussian mechanism in the $\ell_\infty$-norm case: 
\begin{align}\label{eq:exponential}
	p(\zb) \propto \exp{(-\frac{\|\zb\|_\infty}{\sigma})}.
\end{align}
We show in the following theorem that randomized smoothing using the Exponential mechanism can certify $(r, D_{MR}, \|\cdot\|_\infty, \frac{r^2}{2\sigma^2})$-robustness, which is seemingly an extension of the $\ell_2$-norm case. However, its required magnitude of noise is $O(d)$, which implies it is unscalable to high-dimensional data, {\em i.e.,} The Exponential mechanism should not be an appropriate mechanism to certify $\ell_\infty$-norm robustness. This conclusion is further verified by our assessment method, which will be detailed later. 
\begin{theorem}[Exponential Mechanism for Certifying $\ell_\infty$-norm Robustness]\label{thm:exponential}
Let $f$ be any deterministic classifier and  $g(\xb)=f(\mathcal{M}(\xb))$ be its corresponding randomized classifier for sample $\xb\in \mathbb{R}^d$, where $\mathcal{M}(\xb) = \xb + \zb$ with $\zb$ sampled from the Exponential mechanism. Then, $g(\cdot)$ is $(r, D_{MR}, \|\cdot\|_\infty, \frac{r}{\sigma})$-robust and also $(r, D_{MR}, \|\cdot\|_\infty, \frac{r^2}{2\sigma^2})$-robust.
\end{theorem}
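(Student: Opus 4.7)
The plan is to derive both claimed $D_{MR}$ bounds from a single pointwise control on the log-density ratio of the Exponential mechanism, and then lift them to the randomized classifier $g(\cdot)$ via the postprocessing property (Corollary~\ref{thm:post}). I would begin by writing the density $P_\xb(\wb)$ of $\mathcal{M}(\xb) = \xb + \zb$ at an arbitrary point $\wb \in \mathbb{R}^d$, which is proportional to $\exp(-\|\wb-\xb\|_\infty/\sigma)$ with a normalizing constant independent of $\xb$. Setting $L(\wb) := \log(P_\xb(\wb)/P_{\xb'}(\wb)) = (\|\wb-\xb'\|_\infty - \|\wb-\xb\|_\infty)/\sigma$ and applying the reverse triangle inequality for $\|\cdot\|_\infty$ gives the pointwise bound $|L(\wb)| \le \|\xb-\xb'\|_\infty/\sigma \le r/\sigma$.

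The first claim follows immediately. The pointwise bound yields $D_\infty(P_\xb\|P_{\xb'}) \le r/\sigma$, and since R\'enyi divergence is monotone non-decreasing in its order, $D_\alpha(P_\xb\|P_{\xb'}) \le r/\sigma$ for every $\alpha \in (1,\infty)$. Dividing by $\alpha > 1$ and taking the supremum yields $D_{MR}(P_\xb\|P_{\xb'}) \le r/\sigma$; postprocessing then lifts the bound to $g$.

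For the second claim, the same pointwise bound certifies that $\mathcal{M}$ satisfies pure $(r/\sigma, 0)$-differential privacy for any pair of inputs within $\ell_\infty$-distance $r$. I would then invoke the standard pure-DP-to-zCDP conversion (Bun and Steinke, Proposition 1.4), which states that an $\epsilon$-DP mechanism automatically satisfies $D_\alpha \le \alpha\epsilon^2/2$ for every $\alpha > 1$. Substituting $\epsilon = r/\sigma$ gives $D_\alpha(P_\xb\|P_{\xb'})/\alpha \le r^2/(2\sigma^2)$, hence $D_{MR} \le r^2/(2\sigma^2)$, which again lifts to $g$ by postprocessing.

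The main obstacle will be securing the tight constant $\epsilon^2/2$ in the DP-to-zCDP conversion. To keep the proof self-contained I would re-derive the required inequality from (i) the pointwise bound $|L| \le \epsilon := r/\sigma$ and (ii) the normalization $\mathbb{E}_{P_{\xb'}}[e^L] = 1$, by Taylor-expanding the cumulant generating function $\psi(\alpha) := \log\mathbb{E}_{P_{\xb'}}[e^{\alpha L}]$ around $\alpha = 1$ using $\psi(1) = 0$, $\psi'(1) = \mathrm{KL}(P_\xb\|P_{\xb'})$, and the uniform variance bound $\psi''(\alpha) \le \epsilon^2$ coming from Popoviciu's inequality on a random variable confined to an interval of length $2\epsilon$. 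Combined with the classical bound $\mathrm{KL} \le \epsilon^2/2$, which is saturated by the two-point distribution on $\pm\epsilon$, this yields $D_\alpha/\alpha \le \epsilon^2/2$ uniformly in $\alpha$. Unlike the Gaussian setting of Theorem~\ref{thm:l2_gauss}, where $L(\wb)$ is linear in $\wb$ and $D_\alpha$ admits a clean closed form, the $\ell_\infty$-Exponential log-ratio is only piecewise linear, making a direct closed-form computation of $\mathbb{E}[(P_\xb/P_{\xb'})^\alpha]$ on $\mathbb{R}^d$ unwieldy; the DP-based detour is what makes the argument clean.
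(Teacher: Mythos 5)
Your proposal is correct and follows essentially the same route as the paper: a pointwise bound on the $\ell_\infty$-Exponential log-density ratio gives $D_\infty \le r/\sigma$, monotonicity of R\'enyi divergence in $\alpha$ yields the first claim, and the Bun--Steinke pure-DP-to-zCDP conversion ($D_\infty$ bounded by $\epsilon$ in both directions implies $D_\alpha \le \alpha\epsilon^2/2$) yields the second, with postprocessing lifting both to $g$. The only difference is cosmetic: the paper cites that conversion as a black-box lemma from \cite{bun2016concentrated}, whereas you additionally sketch its proof via the cumulant generating function, which is fine but not needed.
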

According to Theorem~\ref{thm:connect_li}, if we substitute $\epsilon$ with $\frac{r}{\sigma}$ or $\frac{r^2}{2\sigma^2}$, then $r$ can be given by 
$
 r \leq \sup_{\alpha > 1} -\frac{\sigma}{\alpha} \log{(1 - p_{(1)} - p_{(2)} + 2(\frac{1}{2}(p_{(1)}^{1-\alpha} +  p_{(2)}^{1-\alpha}))^{\frac{1}{1-\alpha}})},   
$
or
$
 r^2 \leq \sup_{\alpha > 1} -\frac{2\sigma^2}{\alpha} \log(1 - p_{(1)} - p_{(2)} + 2(\frac{1}{2}(p_{(1)}^{1-\alpha} +  p_{(2)}^{1-\alpha}))^{\frac{1}{1-\alpha}}).  
$
Comparing this result and Theorem~\ref{thm:l2_gauss}, we can see that 
randomized smoothing via the Exponential mechanism certifies a similar form of the radius as that certified by the Gaussian mechanism in the $\ell_2$-norm case, indicating similarity in their robustness guarantees. However, the following corollary shows that the magnitude of the noise required by the Exponential mechanism is much larger than that of the Gaussian mechanism in the $\ell_2$-norm case. 
\begin{corollary}\label{thm:exponential_noise_bound}
	For the Exponential mechanism that can guarantee Theorem~\ref{thm:exponential},
		$\mathbb{E}[\|\zb\|_\infty]=d\sigma. $
\end{corollary}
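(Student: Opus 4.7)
The plan is to compute the expected $\ell_\infty$-norm directly by reducing the $d$-dimensional integral to a one-dimensional integral in the radial variable $t = \|\zb\|_\infty$. The key observation is that the level sets of $\|\cdot\|_\infty$ are $\ell_\infty$-balls (cubes), whose volumes and surface measures have simple closed forms. Specifically, the $\ell_\infty$-ball of radius $t$ in $\mathbb{R}^d$ has volume $(2t)^d$, so its derivative with respect to $t$ yields the ``co-area measure'' $d \cdot 2^d \cdot t^{d-1}$. This lets us convert any integral of a radial function $h(\|\zb\|_\infty)$ over $\mathbb{R}^d$ into $\int_0^\infty h(t) \cdot d \cdot 2^d \cdot t^{d-1}\,dt$.

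First I would compute the normalizing constant
\begin{equation*}
Z = \int_{\mathbb{R}^d} \exp\!\bigl(-\|\zb\|_\infty/\sigma\bigr)\, d\zb = d \cdot 2^d \int_0^\infty t^{d-1} e^{-t/\sigma}\, dt = d \cdot 2^d \cdot \sigma^d \cdot (d-1)! = d!\, 2^d\, \sigma^d,
\end{equation*}
using the standard Gamma integral $\int_0^\infty t^{d-1} e^{-t/\sigma} dt = \sigma^d (d-1)!$. Next, applying the same radial reduction to the integrand $t \cdot e^{-t/\sigma}$ gives
\begin{equation*}
\mathbb{E}[\|\zb\|_\infty] \;=\; \frac{1}{Z}\int_{\mathbb{R}^d} \|\zb\|_\infty \exp\!\bigl(-\|\zb\|_\infty/\sigma\bigr)\, d\zb \;=\; \frac{d \cdot 2^d}{Z}\int_0^\infty t^d e^{-t/\sigma}\, dt \;=\; \frac{d \cdot 2^d \cdot d!\, \sigma^{d+1}}{d!\, 2^d\, \sigma^d} \;=\; d\sigma,
\end{equation*}
which is exactly the claimed identity.

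There is essentially no obstacle here; the only delicate step is justifying the radial change of variables, which follows either from the coarea formula or, more elementarily, by writing $\|\zb\|_\infty = \max_i |z_i|$ and noting that $\mathrm{Vol}(\{\|\zb\|_\infty \leq t\}) = (2t)^d$ so that the pushforward of Lebesgue measure under $\zb \mapsto \|\zb\|_\infty$ has density $d \cdot 2^d \cdot t^{d-1}$ on $[0,\infty)$. Once that is in place, the result reduces to two Gamma integrals and a ratio. It is worth emphasizing in the write-up that this $d\sigma$ scaling is qualitatively much worse than the $O(\sigma\sqrt{\log d})$ magnitude achieved by the Gaussian mechanism, since this quantitative contrast is what drives the assessment conclusion in the main text.
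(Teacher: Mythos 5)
Your proposal is correct and follows essentially the same route as the paper: both reduce to the observation that the pushforward of $\|\zb\|_\infty$ has density proportional to $t^{d-1}e^{-t/\sigma}$ (via the surface measure $d\cdot 2^d t^{d-1}$ of the $\ell_\infty$-ball), i.e., a Gamma distribution with shape $d$ and scale $\sigma$. The only difference is that you evaluate the two Gamma integrals explicitly, whereas the paper simply cites the mean $d\sigma$ of that Gamma distribution.
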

Equivalently, if we fix the expected $\ell_\infty$-norm of the additive noise as $\alpha$, according to Theorem~\ref{thm:exponential} \& Corollary~\ref{thm:exponential_noise_bound}, the robust radius $r$ certified by the Exponential mechanism is $\max\{O(\frac{\alpha\epsilon}{d}), O(\frac{\alpha\sqrt{\epsilon}}{d})\}$\footref{foot:verification}. The following theorem shows that there is a huge gap between the additive noise required by the Exponential mechanism and the lower bound, indicating that the Exponential mechanism is indeed not an appropriate option for certifying $\ell_\infty$-norm robustness here.

\begin{theorem}[$\ell_\infty$-norm Criterion for Assessment]\label{thm:linflower}
For any $\epsilon\leq O(1)$, if there is an $(r, D_{MR}, \|\cdot\|_\infty, \epsilon)$-robust mechanism $\mathcal{M}(\xb)=\xb+\zb: \mathbb{R}^d \mapsto \mathbb{R}^d$ that satisfies
	\begin{equation}
		\mathbb{E}[\|\zb\|_\infty]= \mathbb{E}[\|\mathcal{M}(\xb)-\xb\|_\infty] \leq \alpha
	\end{equation}
	for some $\alpha \leq O(1)$, then it must be true that $\alpha \geq \Omega(\frac{r\sqrt{d}}{\sqrt{\epsilon}})$. In another word, $\Omega(\frac{r\sqrt{d}}{\sqrt{\epsilon}})$ is the lower bound of the (expected) magnitude of the additive random noise. 
\end{theorem}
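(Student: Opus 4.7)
The plan is to adapt differential-privacy lower bounds for one-way marginal release to our randomized-smoothing setting, exploiting the fact that the $\ell_\infty$-ball of radius $r$ in $\mathbb{R}^d$ has $\ell_2$-diameter scaling as $r\sqrt{d}$ --- much larger than the $\ell_2$-ball of the same radius --- and this extra geometric room is precisely what produces the additional $\sqrt{d}$ factor compared with Theorem~\ref{thm:l2lower}. The argument will proceed by (i)~converting the hypothesis to differential privacy, (ii)~packing the $\ell_\infty$-ball with a hypercube of exponentially many vertices, and (iii)~invoking a fingerprinting-style DP lower bound.

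Concretely, I would first use Theorem~\ref{thm:connect_le} to translate the hypothesis that $\mathcal{M}(\xb)=\xb+\zb$ is $(r, D_{MR}, \|\cdot\|_\infty, \epsilon)$-robust into an $(\epsilon',\delta)$-PixelDP guarantee on $\mathbb{B}_\infty(\xb, r)$, with $\epsilon' = \epsilon + 2\sqrt{\epsilon\log(1/\delta)}$, choosing $\delta$ sub-polynomial in $d$. I would then instantiate the hypercube packing $\{\xb_S = r\cdot \mathbf{1}_S : S\subseteq [d]\}$ inside the robustness ball; any two such vertices lie at $\ell_\infty$-distance at most $r$, so $\mathcal{M}(\xb_S)$ and $\mathcal{M}(\xb_{S'})$ must be $(\epsilon',\delta)$-DP-indistinguishable for every pair $S, S'$. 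If $\alpha=\mathbb{E}\|\zb\|_\infty$ were smaller than a suitable constant times $r$, then Markov's inequality combined with coordinatewise thresholding at $r/2$ would let us reconstruct the indicator vector $\mathbf{1}_S$ exactly with constant probability from a single evaluation of $\mathcal{M}(\xb_S)$, turning $\mathcal{M}$ into an $(\epsilon',\delta)$-DP algorithm that releases $d$ one-way marginals of a single input with $\ell_\infty$-error below $1/2$. Standard fingerprinting-code lower bounds (in the spirit of Bun--Ullman--Vadhan and Steinke--Ullman) then force $\epsilon' = \Omega(\sqrt{d})$; plugging back into $\epsilon' = O(\sqrt{\epsilon\log(1/\delta)})$ and rescaling the hypercube's edge length to match the true noise scale yields the desired $\alpha \geq \Omega(r\sqrt{d}/\sqrt{\epsilon})$.

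The main obstacle will be the adaptation of the fingerprinting-code lower bounds, which are classically stated for DP mechanisms operating on $n$-record databases, to the single-input noise-adding regime here: I would either specialize the $n=1$ case of those proofs, or, following the template of Theorem~\ref{thm:l2lower} alluded to in the Appendix, replace the fingerprinting step by a direct packing-plus-hypothesis-testing argument whose information-theoretic core does not pass through databases at all. A subtler but essential point is calibrating $\delta$ in step~(i) so that the PixelDP conversion and the fingerprinting-style bound are simultaneously tight; this calibration is what ensures the final exponent of $d$ is $1/2$ (giving the claimed $\sqrt{d}$) rather than $1$, which would collapse back to something only as strong as the upper bound achieved by the Exponential mechanism in Corollary~\ref{thm:exponential_noise_bound}.
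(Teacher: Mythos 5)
Your proposal follows essentially the same route as the paper: convert the $D_{MR}$ guarantee to $(\epsilon+2\sqrt{\epsilon\log(1/\delta)},\delta)$-DP via Theorem~\ref{thm:connect_le}, place a hypercube of edge length $\Theta(r)$ inside the $\ell_\infty$-ball (this larger edge, versus $r/(2\sqrt{d})$ in the $\ell_2$ case, is exactly where the extra $\sqrt{d}$ comes from), invoke the Steinke--Ullman one-way-marginal lower bound with $n=1$, and rescale. The only technical step you gloss over is clipping the output of $\mathcal{M}$ into the bounded cube so the fingerprinting lemma applies, and your closing remark slightly misattributes the $d^{1/2}$ exponent to the choice of $\delta$ (that calibration governs the $\epsilon$-dependence; the $\sqrt{d}$ comes from the lemma plus the hypercube scale), but neither affects the correctness of the argument.
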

According to Corollary \ref{thm:exponential_noise_bound} and Theorem \ref{thm:exponential}, for the Exponential mechanism, its required magnitude of noise is $O(\frac{rd}{\sqrt{\epsilon}})$ or $O(\frac{rd}{\epsilon})$ to certify $(r, D_{MR}, \|\cdot\|_\infty, \epsilon)$-robustness. Compared with Theorem~\ref{thm:linflower}, we can see that the gap between the magnitude of the noise required by the Exponential mechanism and the lower bound is $O(\sqrt{d})$, which can be very large for high-dimensional datasets. Therefore, we can conclude that the Exponential mechanism is probably not an appropriate mechanism for certifying $\ell_\infty$-norm robustness.
Surprisingly, the following theorem shows that the Gaussian mechanism is an appropriate choice for certifying $(r, D_{MR}, \|\cdot\|_\infty, \epsilon)$-robustness.
\vspace{-0.02in}
\begin{theorem}[Gaussian Mechanism for Certifying $\ell_\infty$-norm robustness]\label{thm:gauss_linf}
	Let $r, \epsilon>0$ be some fixed number and  $\mathcal{M}(\xb)=\xb+\zb$ with $\zb \sim \mathcal{N}(0, \frac{dr^2}{2\epsilon}I_d)$. Then, $\mathcal{M}(\cdot)$  is $(r, D_{MR}, \|\cdot\|_\infty, \epsilon)$-robust, and $\mathbb{E}[\|\zb\|_\infty]= \mathbb{E}[\|\mathcal{M}(\xb)-\xb\|_\infty]$ is upper bounded by $O(\frac{r\sqrt{d\log d}}{\sqrt{\epsilon}})$.
\end{theorem}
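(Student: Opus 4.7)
The plan is to reduce the $\ell_\infty$ case to the already-established $\ell_2$ case (Theorem~\ref{thm:l2_gauss}) via the standard norm inequality $\|\vb\|_2 \leq \sqrt{d}\,\|\vb\|_\infty$, and then tune the variance so that the $\epsilon$ obtained matches the target. Concretely, for any $\xb'$ with $\|\xb'-\xb\|_\infty \leq r$, we have $\|\xb'-\xb\|_2 \leq r\sqrt{d}$, so $\mathbb{B}_\infty(\xb,r)\subseteq \mathbb{B}_2(\xb,r\sqrt{d})$. Thus any randomized mechanism that is $(r\sqrt{d},D_{MR},\|\cdot\|_2,\epsilon)$-robust is automatically $(r,D_{MR},\|\cdot\|_\infty,\epsilon)$-robust.

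Next, I would invoke Theorem~\ref{thm:l2_gauss} with the isotropic Gaussian $\zb\sim \mathcal{N}(0,\sigma^2 I_d)$ at radius $r'=r\sqrt{d}$, which yields $(r\sqrt{d},D_{MR},\|\cdot\|_2,\frac{(r\sqrt{d})^2}{2\sigma^2})$-robustness. Plugging in $\sigma^2=\frac{dr^2}{2\epsilon}$ as prescribed, a direct calculation gives $\frac{(r\sqrt{d})^2}{2\sigma^2}=\frac{dr^2}{2\cdot dr^2/(2\epsilon)}=\epsilon$, matching the target MR-divergence budget. Combined with the norm inclusion, this establishes the claimed $(r,D_{MR},\|\cdot\|_\infty,\epsilon)$-robustness.

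For the noise-magnitude bound, I would appeal to the standard concentration estimate for the maximum of $d$ centered Gaussians with variance $\sigma^2$, namely $\mathbb{E}[\|\zb\|_\infty]=O(\sigma\sqrt{\log d})$ (the same fact from \cite{orabona2015optimal} used in the $\ell_2$ discussion after Theorem~\ref{thm:l2lower}). Substituting $\sigma=\sqrt{dr^2/(2\epsilon)}=r\sqrt{d}/\sqrt{2\epsilon}$ yields $\mathbb{E}[\|\zb\|_\infty]=O\!\bigl(\tfrac{r\sqrt{d\log d}}{\sqrt{\epsilon}}\bigr)$, as claimed.

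No step is really hard: the only conceptual obstacle is recognizing that $\sigma$ must scale by a factor of $\sqrt{d}$ relative to the $\ell_2$ case because the worst-case $\ell_2$-length of an $\ell_\infty$-bounded perturbation grows by $\sqrt{d}$, and the MR-divergence budget depends on $r^2/\sigma^2$, which forces $\sigma^2$ to grow linearly in $d$. This in turn explains why the resulting expected noise magnitude is $\sqrt{d}$ times larger than in the $\ell_2$ case, matching the lower bound of Theorem~\ref{thm:linflower} up to the same $O(\sqrt{\log d})$ slack already observed for the $\ell_2$ setting.
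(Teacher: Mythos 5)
Your proposal is correct and is essentially the paper's own argument: the paper proves the robustness claim by computing $D_\alpha(\mathcal{N}(\xb,\tfrac{dr^2}{2\epsilon}I_d)\|\mathcal{N}(\xb',\tfrac{dr^2}{2\epsilon}I_d))=\tfrac{\alpha\epsilon\|\xb-\xb'\|_2^2}{dr^2}\leq\tfrac{\alpha d\epsilon\|\xb-\xb'\|_\infty^2}{dr^2}\leq\alpha\epsilon$, which is exactly your reduction $\|\cdot\|_2\leq\sqrt{d}\,\|\cdot\|_\infty$ applied to the Gaussian R\'{e}nyi-divergence formula underlying Theorem~\ref{thm:l2_gauss}, just written inline rather than by citation. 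The noise-magnitude bound via $O(\sigma\sqrt{\log d})$ with $\sigma=\sqrt{dr^2/(2\epsilon)}$ is also identical to the paper's.
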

\vspace{-0.05in}
From Theorem~\ref{thm:linflower} and \ref{thm:gauss_linf}, we can see that the gap between the magnitude of the noise required by the Gaussian mechanism and the lower bound is also $O(\sqrt{\log d})$. Thus, we can say the Gaussian mechanism is an appropriate option to certify $\ell_\infty$-norm robustness (see Remark~\ref{remark:scale}). Equivalently, if we fix the expected $\ell_\infty$-norm of the additive noise as $\alpha$, the certified robust radius is $O(\frac{\alpha\sqrt{\epsilon}}{\sqrt{d\log d}})$\footref{foot:verification}.
\vspace{-0.05in}
\begin{remark}
Note that in the previous sections, we only consider $\ell_2$-norm and $\ell_\infty$-norm and the corresponding mechanisms because they are the two most important norms. But actually, we can extend our framework to $\ell_p$-norm for any $p\geq 2$. See Section~\ref{sec:lp} in Appendix. 
\end{remark}

\vspace{-0.15in}
\section{Experiments}
\vspace{-0.1in}
\paragraph{Datasets and Models}
Our theories are verified on two widely-used datasets, {\em i.e.,} CIFAR10 and ImageNet\footnote{Pixel value range is $[0.0, 1.0]$}. We follow \cite{cohen2019certified, salman2019provably} to use a 110-layer residual network and a ResNet-50 as the base models for CIFAR10 and ImageNet. The certified accuracy for radius $R$ is defined as the fraction of the test set whose certified radii are larger than $R$, and predictions are correct. We note that the lower bounds (criteria) are not verifiable by experiments since we are still not sure if there exist any randomized mechanism that can achieve those lower bounds. {\em So in the following, we mainly verify the theoretical results regarding the Gaussian mechanism and the Exponential mechanism. We provide more details about the numerical method (to compute the robust radii) and more experimental results compared to the other frameworks in Appendix in the supplementary material.}
\paragraph{Empirical Results}
In the following, we verify our framework by comparing our theoretical results of the $\ell_2/\ell_\infty$ robust radii with the $\ell_2/\ell_\infty$ radii at which the Gaussian/Exponential mechanism can certify $40\sim60\%$ accuracy in the experiments. Note that in the previous literature, $40\sim60\%$ robust accuracy is considered as a reasonably good performance \cite{madry2017towards, cohen2019certified}. Besides, selecting another reasonable accuracy does not affect the verification results too much because what our theories characterize are the asymptotic behaviors rather than the exact values of the robust radii.

In Fig.~\ref{fig:l2_noise}, we demonstrate the results of the Gaussian mechanism for certifying $\ell_2$-norm robustness. The red dashed lines show that the Gaussian mechanism can certify $40\sim60\%$ accuracy at $\ell_2~\mbox{radius}~=0.34$ (CIFAR-10, $d=3072$) and $\ell_2~\mbox{radius}~=0.29$ (ImageNet, $d=150568$), {\em i.e.,} approximately $1/\sqrt{\log d}$. These results verify that the $\ell_2$ radius certified by the Gaussian mechanism is $O(\frac{\alpha \sqrt{\epsilon}}{\sqrt{\log d}})$\footnote{$\alpha \leq O(1)$, and $\epsilon \leq O(1)$ (equality can hold).}.
We also argue that, $O(\frac{\alpha \sqrt{\epsilon}}{\sqrt{\log d}})$ is the scale of the largest certified $\ell_2$ radius ({\em i.e.,} $\frac{\sigma}{2}(\Phi^{-1}(p_{(1)})-\Phi^{-1}(p_{(2)}))$ \cite{cohen2019certified}) in the previous literature since the $\ell_\infty$-norm of the Gaussian noise $\alpha$ is $O(\sigma\sqrt{\log d})$. This argument is verified by Fig.~\ref{fig:cohen_l2_gauss} \& \ref{fig:salman_l2_gauss} in Appendix.

Fig.~\ref{fig:linf_noise_gauss} (1\&3 subfigures) shows that the Gaussian mechanism certifies $40\sim60\%$ accuracy at $\ell_\infty~\mbox{radius}~=6e-3$ on CIFAR-10 ($d=3072$) and $\ell_\infty~\mbox{radius}~=1.1e-3$ on ImageNet ($d=150568$), {\em i.e.,} approximately $O(1/\sqrt{d\log d})$. These results verify that the $\ell_\infty$ radius certified by the Gaussian mechanism is $O(\frac{\alpha\sqrt{\epsilon}}{\sqrt{d\log d}})$. 
Fig.~\ref{fig:linf_noise_gauss} (2\&4 subfigures) also shows that the Exponential mechanism certifies approximately $40\sim60\%$ accuracy at $\ell_\infty~\mbox{radius}~=1.5e-4$ on CIFAR-10 and $\ell_\infty~\mbox{radius}~=7e-6$ on ImageNet, {\em i.e.,} approximately $O(1/d)$. These results verify that the $\ell_\infty$ robust radius certified by the Exponential mechanism scales in $O(\frac{\alpha\epsilon}{d})$ or $O(\frac{\alpha\sqrt{\epsilon}}{d})$.
If we compare the performance of the Gaussian mechanism and the Exponential mechanism in Fig.~\ref{fig:linf_noise_gauss}, we can see that the Gaussian mechanism is a much more appropriate option for certifying $\ell_\infty$-norm robustness. 
It is worth noting that the performance of the Gaussian mechanism can be better with the bound proved in \cite{cohen2019certified}, which is comparable to the other state-of-the-art approaches introduced in Section~\ref{sec:related}. 
We detail some results regarding the comparison in Appendix. 

\begin{figure}[h]
    \centering
	\includegraphics[width=0.24\columnwidth]{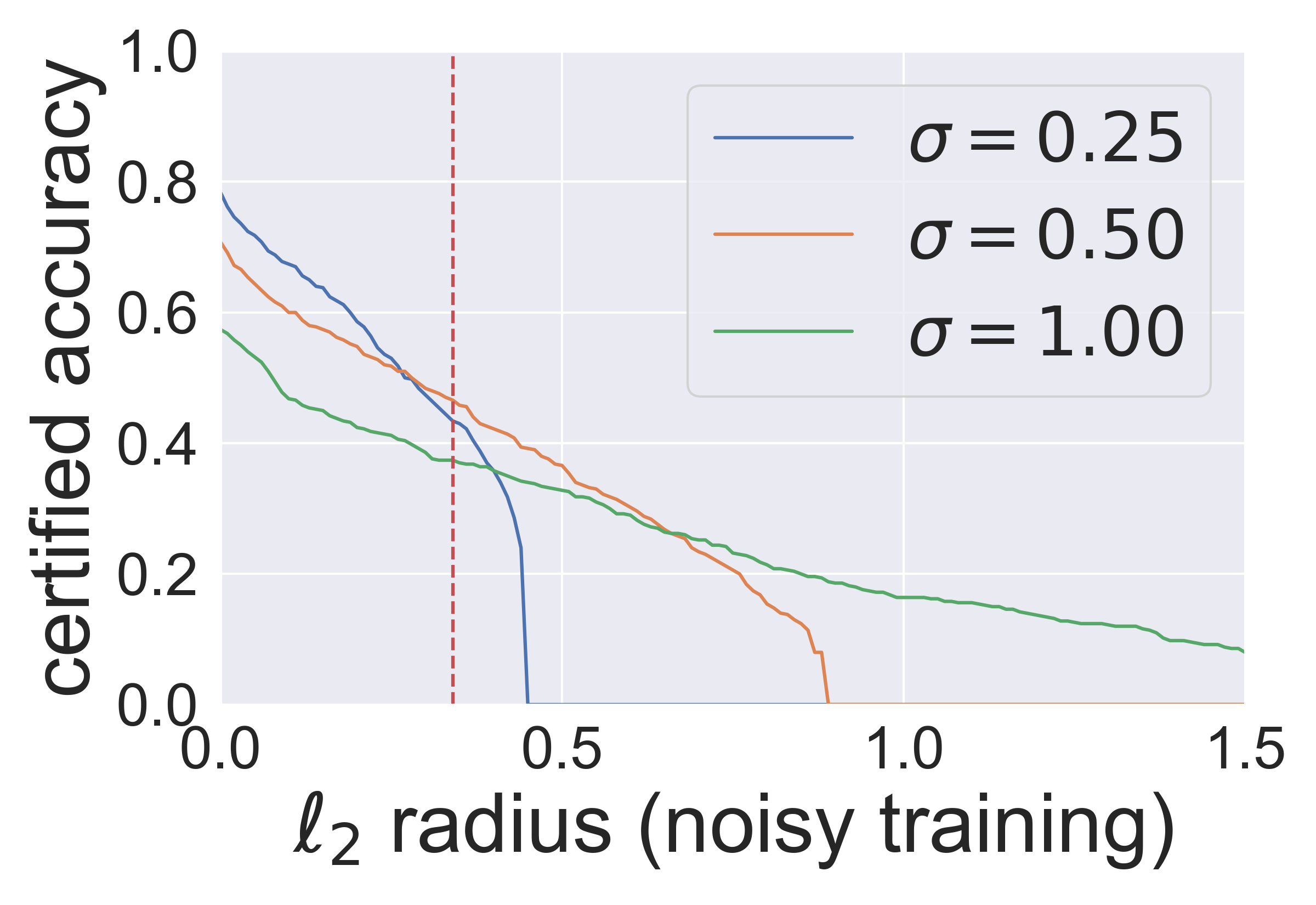}
	\centering
	\includegraphics[width=0.24\columnwidth]{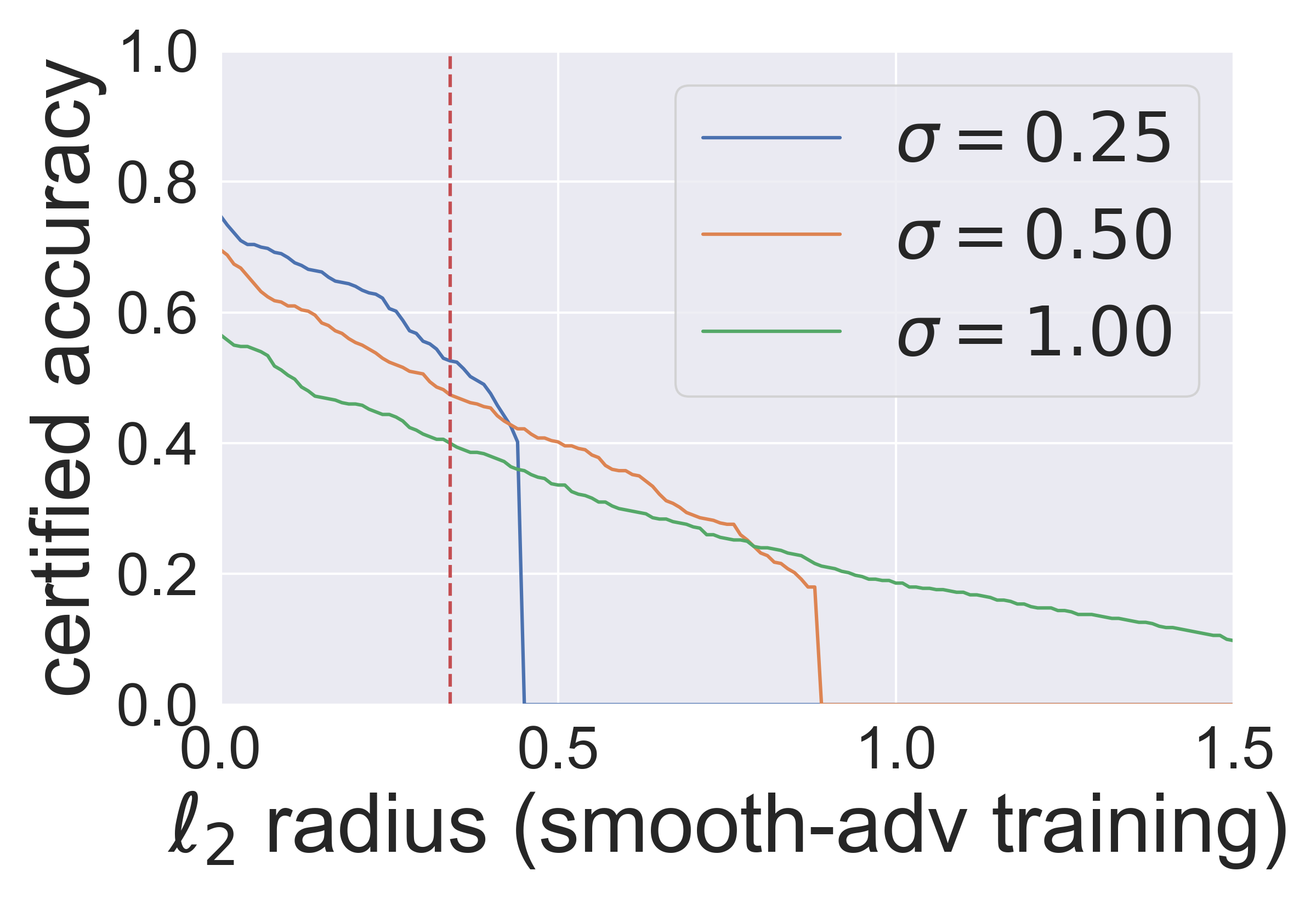}
	\includegraphics[width=0.24\columnwidth]{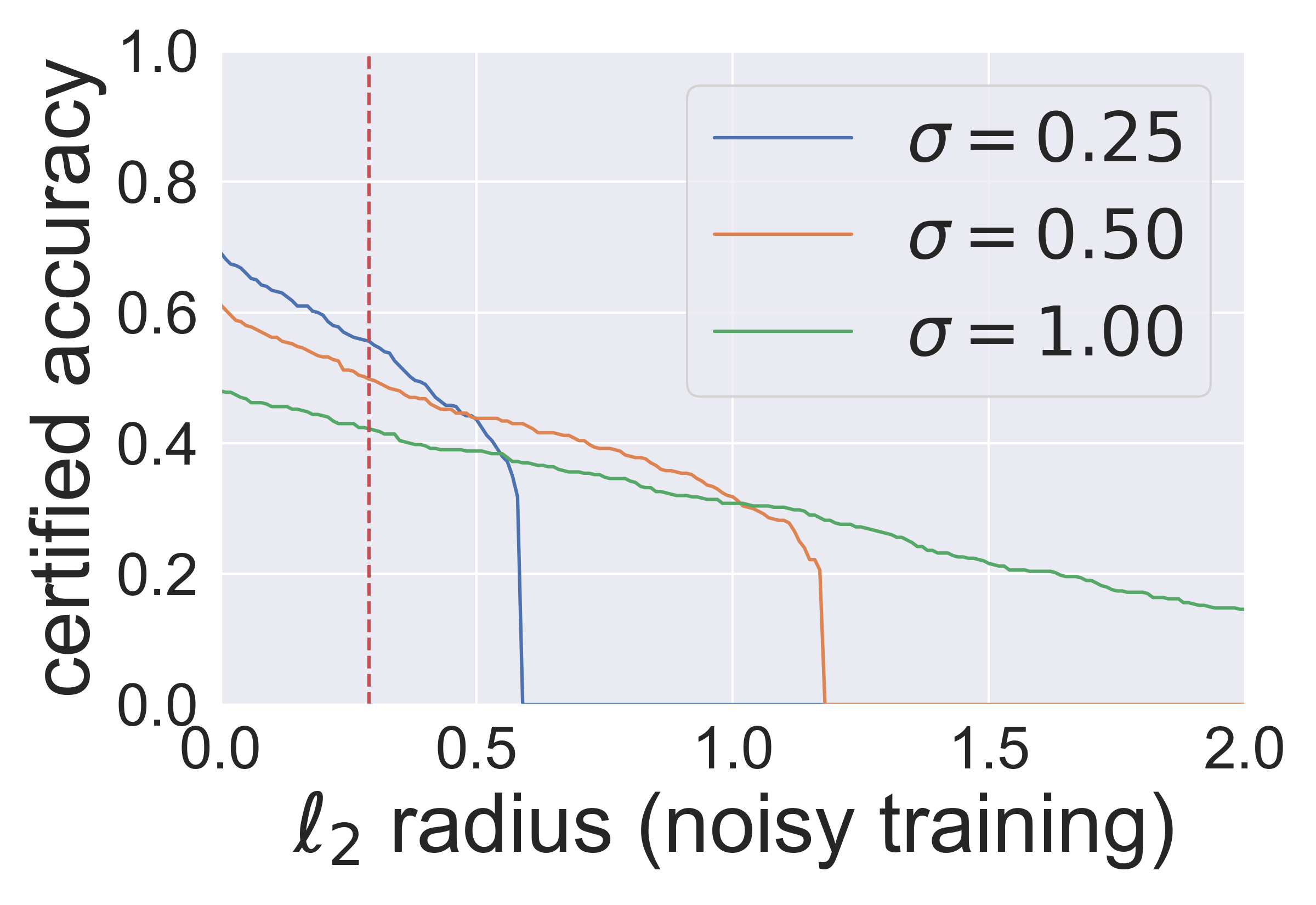}
	\centering
	\includegraphics[width=0.24\columnwidth]{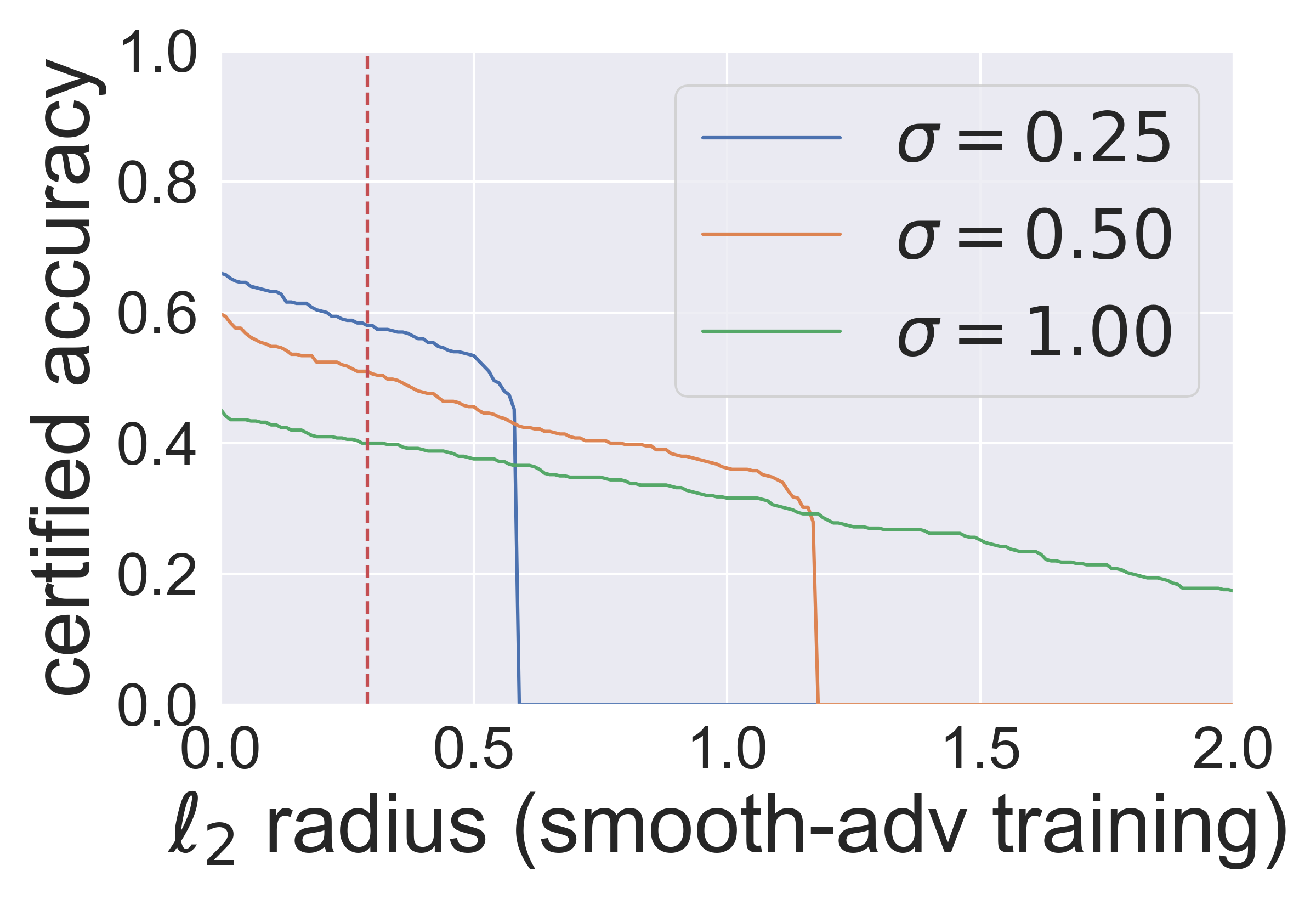}
	\vspace{-0.1cm}
	\caption{Certify $\ell_2$-norm robustness by the Gaussian mechanism: certified accuracy for CIFAR-10 (left two) and ImageNet (right two). Models: noisy training \cite{cohen2019certified} \& smooth-adv training \cite{salman2019provably}.}
	\vspace{-0.3cm}
	\label{fig:l2_noise}
\end{figure}
\begin{figure}[h]
    \centering
	\includegraphics[width=0.24\columnwidth]{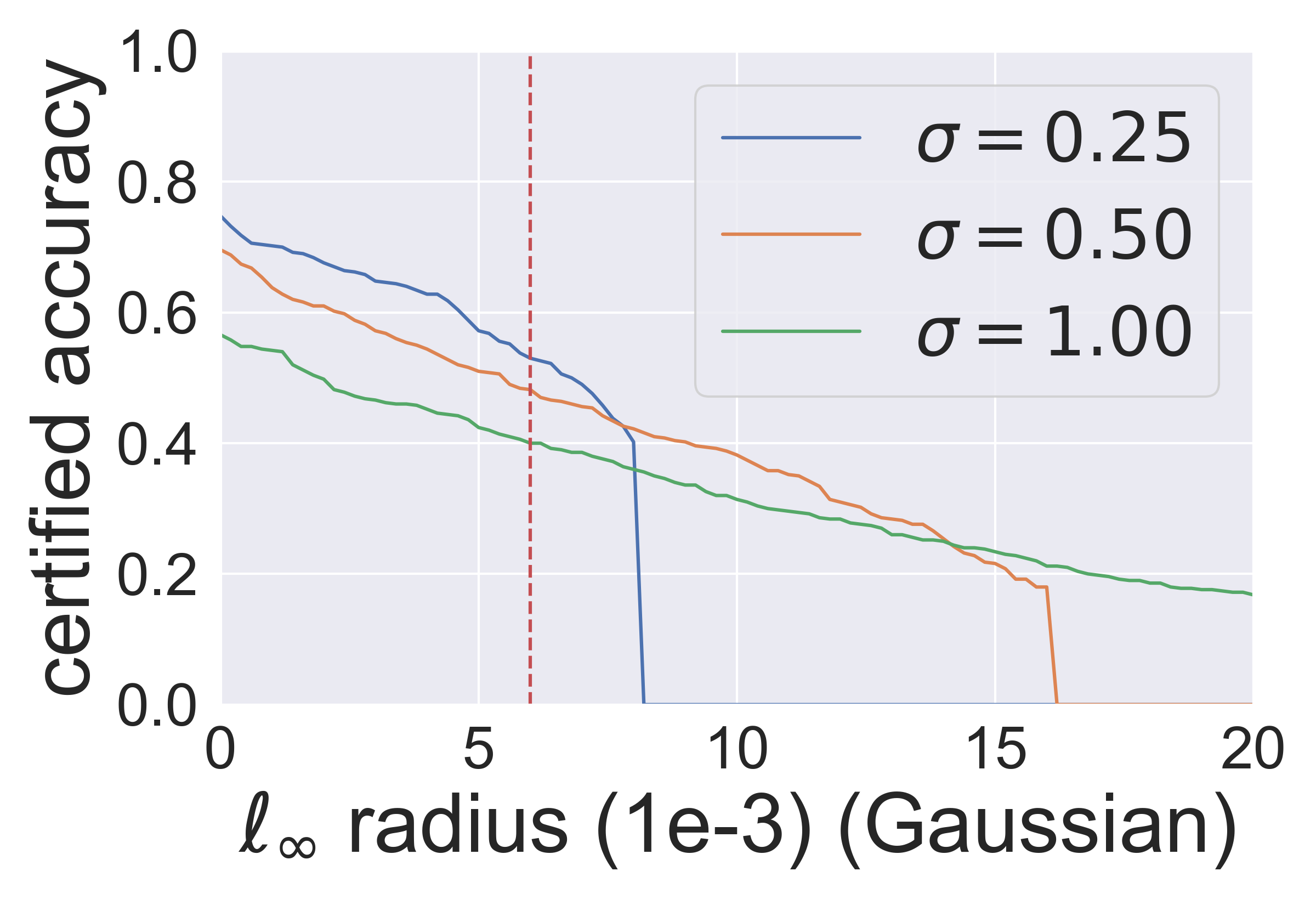}
	\includegraphics[width=0.24\columnwidth]{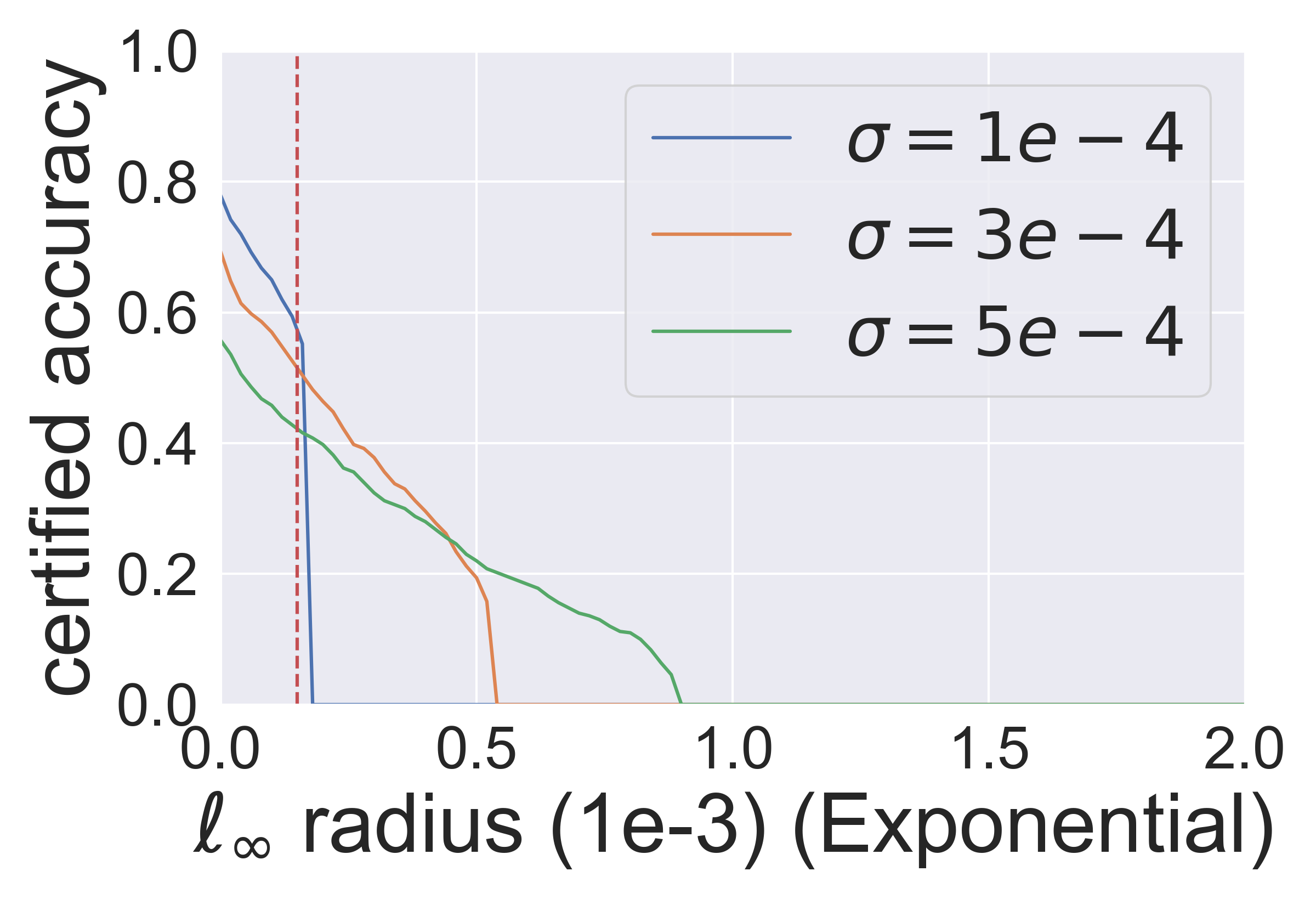}
	\includegraphics[width=0.24\columnwidth]{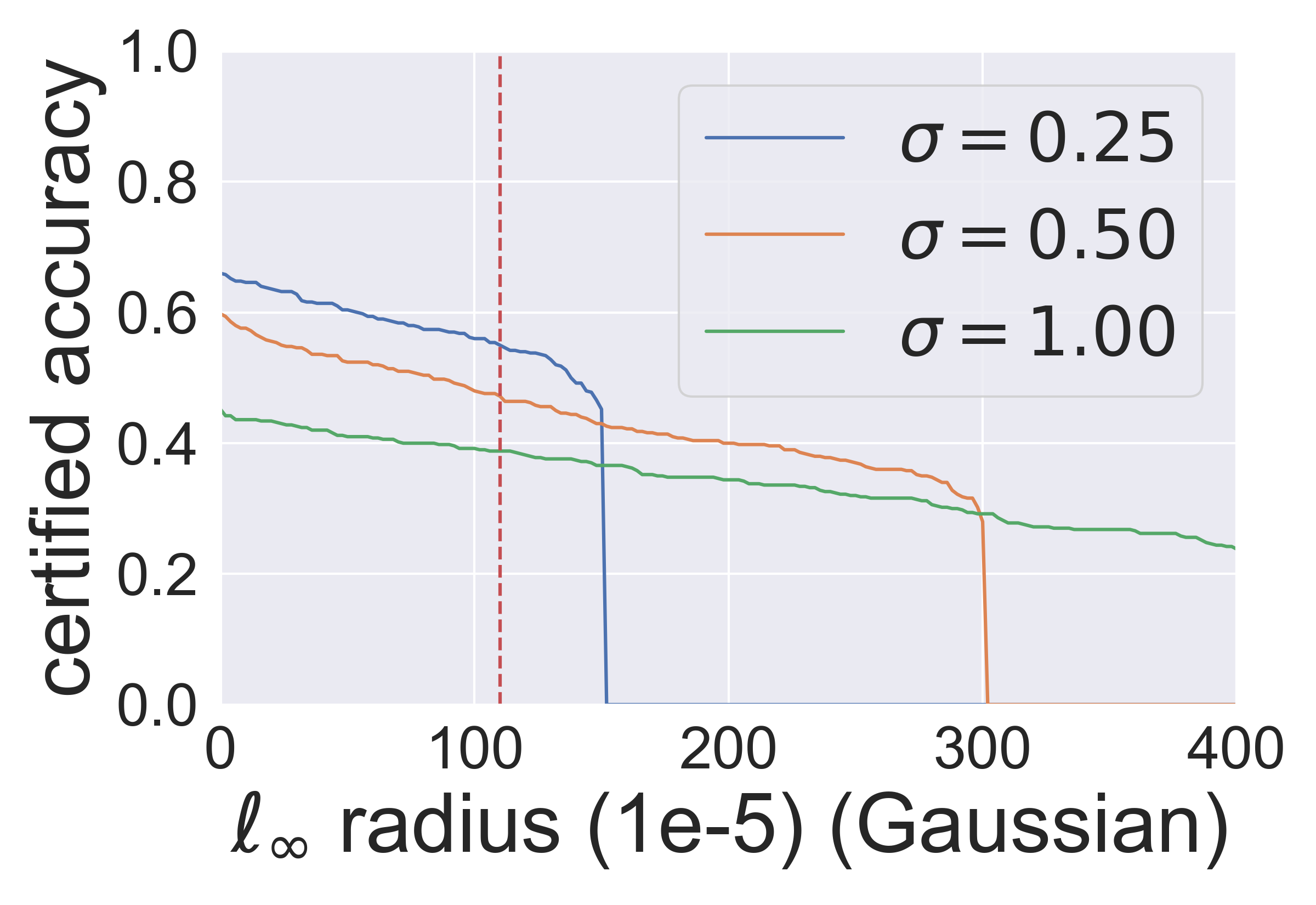}
	\includegraphics[width=0.24\columnwidth]{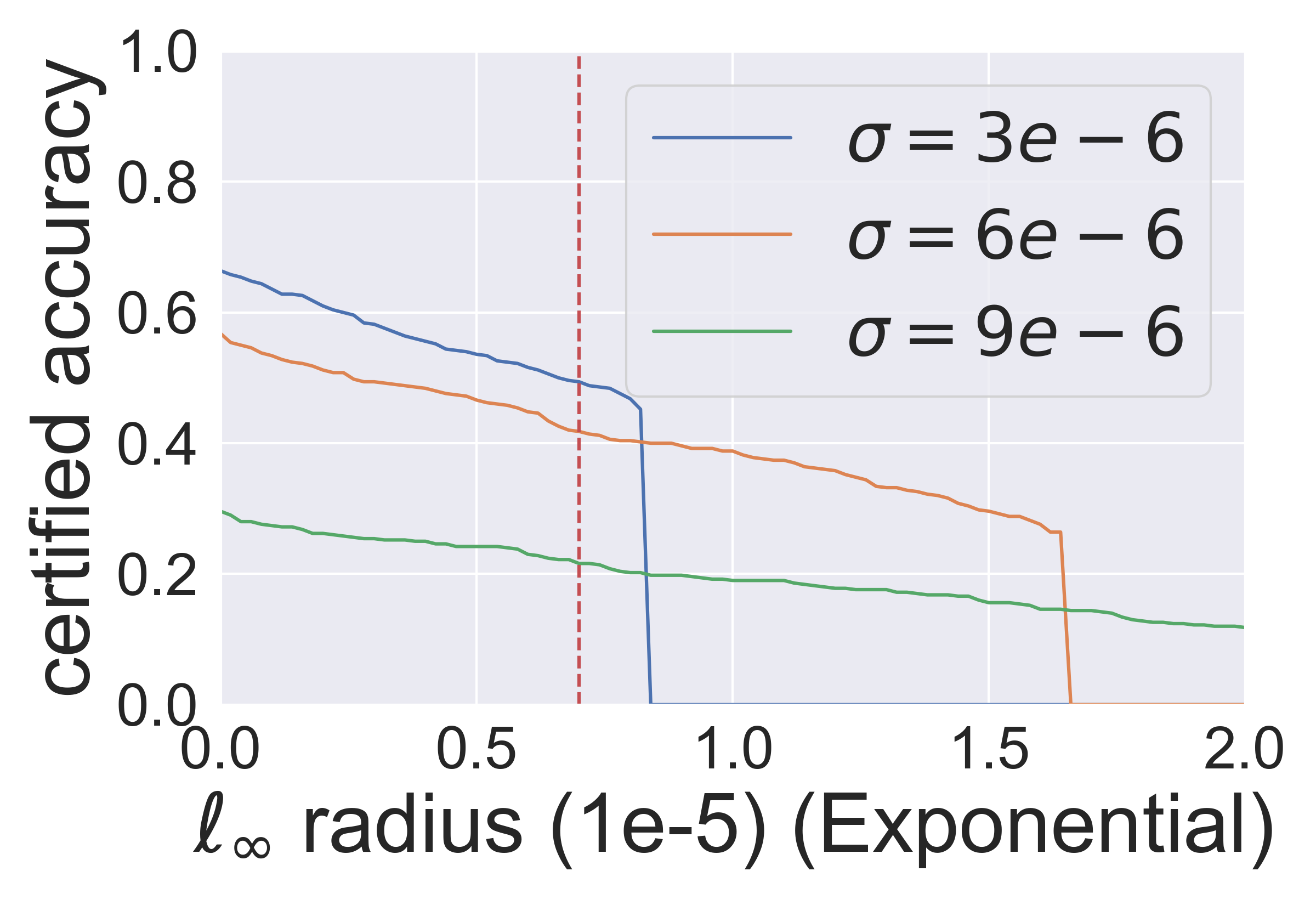}
	\vspace{-0.3cm}
	\caption{Certify $\ell_\infty$-norm robustness by the Gaussian/Exponential mechanism: certified accuracy for CIFAR-10 (left two) and ImageNet (right two). Model: smooth-adv training \cite{salman2019provably}.}
	\label{fig:linf_noise_gauss}
\end{figure}

\vspace{-0.15in}
\section{Conclusion}
\vspace{-0.1in}
In this paper, we present a generic and self-contained framework, which applies to different norms and connects the existing frameworks such as \cite{lecuyer2018certified, li2019certified}, for assessing randomized mechanisms.
Under our framework, we define the magnitude of the noise added by a randomized mechanism to certify a certain extent of robustness as the metric for assessing this mechanism. We also provide the general lower bounds on the magnitude of the additive noise as the assessment criteria. Comparing the noise required by the Gaussian and Exponential mechanism and the lower bounds, we conclude that (i) The Gaussian mechanism is an appropriate option to certify $\ell_2$-norm and $\ell_\infty$-norm robustness. (ii) The Exponential mechanism is not an appropriate mechanism to certify $\ell_\infty$-norm robustness. Moreover, we extend our assessment framework to $\ell_p$-norm for any $p\geq 2$.



\bibliographystyle{unsrt}
\bibliography{citation}

\newpage
\appendix
To make the paper more readable, we first review some definitions about differential privacy \cite{dwork2006calibrating}. 
\begin{definition} \label{def:DP}
Given a data universe $X$, we say
that two datasets $D, D' \subset X$ are neighbors if they differ by only one entry, which is denoted by
$D\sim D'$. A randomized algorithm $\mathcal{M}$ is $(\epsilon,\delta)$-differentially private (DP) if for all neighboring datasets $D, D'$ and all events $S$
the following holds
\begin{equation*}
    P(\mathcal{M}(D)\in S)\leq e^\epsilon P(\mathcal{M}(D')\in S)+\delta.
\end{equation*}
\end{definition}
\begin{definition}
 A randomized algorithm $\mathcal{M}$ is $(\alpha , \epsilon)$-R\'{e}nyi differentially private (DP) if for all neighboring datasets $D, D'$ 
the following holds
\begin{equation*}
    D_\alpha(\mathcal{M}(D)\|\mathcal{M}(D'))\leq \epsilon. 
\end{equation*}
\end{definition}
\section{Omitted Proofs in Section \ref{sec:over} } 
\begin{proof}[Proof of Theorem \ref{thm:connect_le}]
According to Definition \ref{def:max_renyi}, for a fixed $\xb$, we have $\forall \xb'\in \mathbb{B}_p(\xb, r)$ and any $\alpha\in (1, \infty)$,    $$D_\alpha(\mathcal{M}(\xb)||\mathcal{M}(\xb')) < \alpha\epsilon.$$ Therefore, $\mathcal{M}(\cdot)$ satisfies $(\alpha, \alpha\epsilon)$-R\'{e}nyi  DP ($\xb \in D,~  \xb' \in D'$, and, $\|\xb'-\xb\|_p \leq r$). According to the following lemma, {\em i.e.,}
\begin{lemma}[\cite{mironov2017renyi}]
If a randomized mechanism is $(\alpha, \alpha\epsilon)$-R\'{e}nyi DP, then it is $(\alpha\epsilon + \frac{\log(1/\delta)}{\alpha-1}, \delta)$-DP for any $\delta>0$ (we substitude $\epsilon$ with $\alpha\epsilon$),
\end{lemma}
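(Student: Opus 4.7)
The plan is to prove the standard Mironov Rényi-DP-to-approximate-DP conversion, first for a generic parameter and then specialize with the substitution stated. Fix neighboring datasets $D, D'$ and abbreviate $P = \mathcal{M}(D)$, $Q = \mathcal{M}(D')$, viewed as densities (the discrete case is analogous). Writing $\epsilon' := \alpha\epsilon$, the hypothesis $D_\alpha(P\|Q) \leq \epsilon'$ is, by the definition recalled earlier in the excerpt, the tail-moment bound
\begin{equation}
\mathbb{E}_{x\sim Q}\!\left[\left(\tfrac{P(x)}{Q(x)}\right)^{\!\alpha}\right] \;\leq\; e^{(\alpha-1)\epsilon'},\qquad\text{equivalently}\qquad \mathbb{E}_{x\sim P}\!\left[\left(\tfrac{P(x)}{Q(x)}\right)^{\!\alpha-1}\right] \;\leq\; e^{(\alpha-1)\epsilon'}.
\end{equation}

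Next I would introduce the ``bad'' set where the pointwise privacy loss exceeds some threshold $\epsilon''$ to be optimized, namely $B := \{x : P(x) > e^{\epsilon''} Q(x)\}$, and control its $P$-mass by a Markov argument on the $(\alpha-1)$-th power of the likelihood ratio:
\begin{align}
P(B) \;=\; \mathbb{P}_{x\sim P}\!\left(\left(\tfrac{P(x)}{Q(x)}\right)^{\!\alpha-1} > e^{(\alpha-1)\epsilon''}\right)
\;\leq\; e^{-(\alpha-1)\epsilon''}\,\mathbb{E}_{x\sim P}\!\left[\left(\tfrac{P(x)}{Q(x)}\right)^{\!\alpha-1}\right]
\;\leq\; e^{(\alpha-1)(\epsilon'-\epsilon'')}.
\end{align}
Solving $e^{(\alpha-1)(\epsilon'-\epsilon'')} = \delta$ for the threshold gives the canonical choice $\epsilon'' = \epsilon' + \tfrac{\log(1/\delta)}{\alpha-1}$, which makes $P(B) \leq \delta$.

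Finally I would decompose an arbitrary event $S$ along $B$ and $B^c$ to get the $(\epsilon,\delta)$-DP inequality: on $B^c$ the likelihood ratio is bounded by $e^{\epsilon''}$ pointwise, so $P(S\cap B^c) \leq e^{\epsilon''} Q(S\cap B^c) \leq e^{\epsilon''} Q(S)$, while $P(S\cap B) \leq P(B) \leq \delta$. Adding these yields
\begin{equation}
P(S) \;\leq\; e^{\epsilon''} Q(S) + \delta \;=\; e^{\,\epsilon' + \tfrac{\log(1/\delta)}{\alpha-1}}\, Q(S) + \delta,
\end{equation}
which is exactly $(\epsilon' + \tfrac{\log(1/\delta)}{\alpha-1},\delta)$-DP. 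Plugging in the parenthetical substitution $\epsilon' = \alpha\epsilon$ recovers the stated bound $(\alpha\epsilon + \tfrac{\log(1/\delta)}{\alpha-1}, \delta)$-DP.

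The only nontrivial step is the Markov bound on the exponentiated privacy loss and the subsequent optimization of the threshold $\epsilon''$; everything else is bookkeeping via the standard splitting of an event into a ``high-loss'' and ``low-loss'' part. A minor technical wrinkle is measure-theoretic: if $Q$ is not absolutely continuous with respect to $P$ one should interpret $P/Q$ via Radon--Nikodym derivatives and treat null sets separately, but this does not affect the inequalities above.
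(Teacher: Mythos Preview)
Your argument is correct and is exactly the standard Markov-inequality proof of this conversion from Mironov's paper. Note, however, that the present paper does not give its own proof of this lemma: it simply cites \cite{mironov2017renyi} and uses the statement as a black box inside the proof of Theorem~\ref{thm:connect_le}. So there is nothing to compare against beyond observing that what you wrote is the canonical proof from the cited reference.
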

we have $\mathcal{M}(\cdot)$ is $(\alpha\epsilon + \frac{\log(1/\delta)}{1-\alpha}, \delta)$-DP, for all $\alpha \in (1, +\infty)$. Since
\begin{align*}
    \min_{\alpha \in (1, +\infty)} \{\alpha \epsilon + \log(1/\delta)/(\alpha - 1)\} &\stackrel{\beta=\alpha-1}{=}
\min_{\beta \in (0, \infty)}\{\epsilon(1 + \beta + \frac{\log(1/\delta)}{\epsilon \beta}) \} \\
& = \epsilon + 2\sqrt{\log{(1/\delta)}\epsilon}.
\end{align*}
Thus, by the definition of Approximate Differential Privacy (Definition \ref{def:DP}), in total we have for any $\xb$, $\xb'\in \mathbb{B}_p(\xb, r)$, and any event $S$ 
\begin{equation*}
    P(\mathcal{M}(\xb')\in S)\leq e^{\epsilon + 2\sqrt{\log{(1/\delta)}\epsilon}}P(\mathcal{M}(\xb)\in S)+\delta.
\end{equation*}
Thus by Definition \ref{def:pixeldp}, $\mathcal{M}(\cdot)$ is $(\epsilon +2\sqrt{\log{(1/\delta)}\epsilon}, \delta)$ Pixel-DP. 
\end{proof}

\begin{proof}[Proof of Theorem \ref{thm:connect_li}]
Recall that Lemma \ref{lemma:renyi} indicates that we have $\argmax_{y} P(g(\xb) = y) = \argmax_{y'} P(g(\xb') = y')$ as long as $$D_\alpha(g(\xb)||g(\xb')) < \sup_{\alpha>1} -\log{(1 - p_{(1)} - p_{(2)} + 2(\frac{1}{2}(p_{(1)}^{1-\alpha} + p_{(2)}^{1-\alpha}))^{\frac{1}{1-\alpha}})}.$$
Thus we just need to prove that the above condition holds. 
Since $g(\cdot)$ is $(r, D_{MR}, \|\cdot\|_p, \epsilon)$-robust, for any $\xb$ and $\|\xb'-\xb\|_p\leq r$ we have $$D_\alpha(g(\xb)||g(\xb')) < \alpha\epsilon.$$  If we also have the additional condition: $$\epsilon \leq \sup_{\alpha > 1} -\frac{1}{\alpha}\log{(1 - p_{(1)} - p_{(2)} + 2(\frac{1}{2}(p_{(1)}^{1-\alpha} + p_{(2)}^{1-\alpha}))^{\frac{1}{1-\alpha}})},$$ then $D_\alpha(g(\xb)||g(\xb')) < \sup_{\alpha > 1} -\log{(1 - p_{(1)} - p_{(2)} + 2(\frac{1}{2}(p_{(1)}^{1-\alpha} + p_{(2)}^{1-\alpha}))^{\frac{1}{1-\alpha}})}$. 
Thus, the additional condition to guarantee $\argmax_{y} P(g(\xb) = y) = \argmax_{y'} P(g(\xb') = y')$ can be stated as $\epsilon \leq \sup_{\alpha > 1} -\frac{1}{\alpha}\log{(1 - p_{(1)} - p_{(2)} + 2(\frac{1}{2}(p_{(1)}^{1-\alpha} + p_{(2)}^{1-\alpha}))^{\frac{1}{1-\alpha}})}$. 
\end{proof}
\section{Omitted Proofs in Section \ref{sec:l2}} 
\begin{proof}[Proof of Theorem \ref{thm:l2_gauss}]
By the postprocessing property we just need to show $\mathcal{M}(\xb)= \xb+\zb$ is $(r, D_{MR}, \|\cdot\|_2, \frac{r^2}{2\sigma^2})$ robust.

    Fix any $x$, we have for any $\xb'\in \mathbb{B}_2(\xb, r)$ and $\alpha\in (1, \infty)$ 
    \begin{align*}
        D_\alpha(\mathcal{M}(\xb)\|\mathcal{M}(\xb') )&=D_\alpha (\mathcal{N}(\xb,\sigma^2I_d)\|\mathcal{N}(\xb',\sigma^2I_d)) \\
        &= \frac{\alpha\|\xb'-\xb\|_2^2}{2\sigma^2}\leq \frac{r^2}{2\sigma^2}.
    \end{align*}
\end{proof}

\begin{proof}[Proof of Theorem~\ref{thm:l2lower}] We first show that, in order to prove Theorem~\ref{thm:l2lower}, we only need to prove Theorem~\ref{thm:formall2lower1}. Then we show that, to prove Theorem~\ref{thm:formall2lower1}, we only need to prove Theorem~\ref{thm:formall2lower2}. Finally, we give a formal proof of Theorem~\ref{thm:formall2lower2}.

\begin{theorem} \label{thm:formall2lower1}
For any $\epsilon \leq O(1)$, if there is a $(r, D_{MR}, \|\cdot\|_2, \epsilon)$ randomized (smoothing) mechanism $\mathcal{M}(\xb)=\xb+\zb: \{0, \frac{r}{2\sqrt{d}}\}^d \mapsto \mathbb{R}^d$ such that for any $\xb\in \{0, \frac{r}{2\sqrt{d}}\}^d$,
\begin{equation*}
    \mathbb{E}[\|\zb\|_\infty]= \mathbb{E}[\|\mathcal{M}(\xb)-\xb\|_\infty]\leq \alpha
\end{equation*}
for some $\alpha \leq O(1)$. Then it must be true that $\alpha \geq \Omega(\frac{r}{\sqrt{\epsilon}})$. 
\end{theorem}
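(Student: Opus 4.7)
The plan is to reduce the statement to the fingerprinting-code lower bound for $(\epsilon', \delta)$-differentially-private release of one-way marginals (Bun--Ullman--Vadhan, Steinke--Ullman). Set $c := r/(2\sqrt d)$, so that the domain of $\mathcal{M}$ is the scaled Boolean cube $\{0, c\}^d$.

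\textbf{Step 1: from $D_{MR}$ to approximate DP.} Any two points $\xb, \xb' \in \{0, c\}^d$ satisfy $\|\xb - \xb'\|_2 \le c\sqrt d = r/2 \le r$, so the hypothesis gives $D_{MR}(\mathcal{M}(\xb)\,\|\,\mathcal{M}(\xb')) \le \epsilon$ for every such pair. Applying Theorem~\ref{thm:connect_le} then shows that $\mathcal{M}$ restricted to $\{0,c\}^d$ is $(\epsilon', \delta)$-DP between every pair of inputs, where $\epsilon' := \epsilon + 2\sqrt{\epsilon \log(1/\delta)}$ and $\delta \in (0,1)$ is to be fixed in Step~3.

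\textbf{Step 2: rescale to the Boolean identity problem.} Define the auxiliary mechanism $\mathcal{M}'(\yb) := \tfrac{1}{c}\mathcal{M}(c\yb)$ for $\yb \in \{0,1\}^d$. Because R\'{e}nyi (and therefore approximate) DP is invariant under bijective post-processing of the output, $\mathcal{M}'$ is also $(\epsilon',\delta)$-DP between every pair of points in $\{0,1\}^d$, and direct rescaling gives $\mathbb{E}\|\mathcal{M}'(\yb)-\yb\|_\infty \le \alpha/c$. Viewed as a single-row ($n=1$) dataset problem, $\mathcal{M}'$ is exactly a DP mechanism releasing the one-way marginal of the record $\yb$ with $\ell_\infty$ error $\beta := \alpha/c$.

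\textbf{Step 3: invoke the fingerprinting lower bound and optimise $\delta$.} The fingerprinting-code lower bound asserts that any $(\epsilon', \delta)$-DP mechanism $\mathcal{A}: \{0,1\}^{n \times d} \to \mathbb{R}^d$ estimating one-way marginals with expected $\ell_\infty$ error $\beta$ satisfies $n \beta \epsilon' \ge \Omega\bigl(\sqrt{d \log(1/\delta)}\bigr)$, whenever $\epsilon' \le O(1)$ and $\delta$ is a sufficiently small positive constant. With $n = 1$, $\beta = \alpha/c$, and $c = r/(2\sqrt d)$, substitution gives $\alpha\, \epsilon' \ge \Omega\bigl(r \sqrt{\log(1/\delta)}\bigr)$. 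Now fix $\delta$ to be an absolute constant small enough that in the regime $\epsilon \le O(1)$ the second term of $\epsilon'$ dominates, i.e.\ $\epsilon' = O(\sqrt{\epsilon \log(1/\delta)})$. The two $\sqrt{\log(1/\delta)}$ factors then cancel, yielding $\alpha \ge \Omega(r/\sqrt{\epsilon})$ as claimed.

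\textbf{Main obstacle.} The delicate point is that both the conversion from $D_{MR}$ to approximate DP (Step~1) and the fingerprinting lower bound (Step~3) contribute a factor of $\sqrt{\log(1/\delta)}$, and these must cancel exactly in order to obtain the dimension-free bound $\Omega(r/\sqrt\epsilon)$ rather than a weaker bound scaling with $\log(1/\delta)$ in either direction. This coordination is presumably what the authors' intermediate Theorem~\ref{thm:formall2lower2} is engineered to package. A secondary but non-trivial point is to verify that the fingerprinting tracing attack remains valid in the single-sample ($n=1$) regime, where neighbouring datasets are \emph{arbitrary} pairs of Boolean vectors rather than Hamming-neighbours; this is by now a standard consequence of the one-way-marginal lower bound but must be stated carefully when $\epsilon'$ is allowed to be a constant rather than $o(1)$.
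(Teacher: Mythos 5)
Your proposal follows essentially the same route as the paper: convert $D_{MR}$ robustness on the scaled cube to $(\epsilon+2\sqrt{\epsilon\log(1/\delta)},\delta)$-DP via Theorem~\ref{thm:connect_le}, rescale to $\{0,1\}^d$, and apply the Steinke--Ullman one-way-marginal lower bound with $n=1$, with the two $\sqrt{\log(1/\delta)}$ factors cancelling exactly as you describe. The one point you gloss over is that Lemma~\ref{lemma:one_way} requires the mechanism's range to be $[0,1]^d$ rather than $\mathbb{R}^d$; the paper's intermediate Theorem~\ref{thm:formall2lower2} exists precisely to justify coordinatewise clipping of the output (which preserves robustness by postprocessing and can only decrease the expected $\ell_\infty$ error against inputs in the cube), not to coordinate the $\log(1/\delta)$ factors as you conjecture.
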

For any $\mathcal{M}(\xb) = \xb + \zb: \mathbb{R}^d\mapsto \mathbb{R}^d$, in Theorem~\ref{thm:formall2lower1}, we only consider the expected $\ell_\infty$-norm of the noise added by $\mathcal{M}(\xb)$ on $\xb \in \{0, \frac{r}{2\sqrt{d}}\}^d$. Thus, the $\alpha$ in Theorem~\ref{thm:formall2lower1} should be less than or equal to the $\alpha$ in Theorem~\ref{thm:l2lower} (on $\xb \in \mathbb{R}^d$). Therefore, the lower bound for the $\alpha$ in Theorem~\ref{thm:formall2lower1} ({\em i.e.,} $\Omega(\frac{r}{\sqrt{\epsilon}})$) is also a lower bound for the $\alpha$ in Theorem~\ref{thm:l2lower}. That is to say, if Theorem~\ref{thm:formall2lower1} holds, then Theorem~\ref{thm:l2lower} also holds true. 

Next, we show that if Theorem~\ref{thm:formall2lower2} holds, then Theorem~\ref{thm:formall2lower1} also holds.

\begin{theorem} \label{thm:formall2lower2}
For any $\epsilon \leq O(1)$, if there is a $(r, D_{MR}, \|\cdot\|_2, \epsilon)$-robust randomized (smoothing) mechanism $\mathcal{M}(\xb)\footnote{This mechanism might not be simply $\xb + \zb$ since it must involve operations to clip the output into $[0, \frac{r}{2\sqrt{d}}]^d$}: \{0, \frac{r}{2\sqrt{d}}\}^d\mapsto [0, \frac{r}{2\sqrt{d}}]^d$ such that for any $\xb\in \{0, \frac{r}{2\sqrt{d}}\}^d$ 
\begin{equation*}
    \mathbb{E}[\|\zb\|_\infty]= \mathbb{E}[\|\mathcal{M}(\xb)-\xb\|_\infty]\leq \alpha
\end{equation*}
for some $\alpha \leq O(1)$. Then it must be true that $\alpha \geq \Omega(\frac{r}{\sqrt{\epsilon}})$. 
\end{theorem}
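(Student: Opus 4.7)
The strategy is to reduce Theorem~\ref{thm:formall2lower2} to the Bun--Ullman--Vadhan fingerprinting-code lower bound on privately estimating one-way marginals, via the $D_{MR} \to (\epsilon', \delta)$-DP bridge provided by Theorem~\ref{thm:connect_le}.

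First, set $h := r/(2\sqrt{d})$ and note that the hypercube $\{0, h\}^d$ has $\ell_2$-diameter $h\sqrt{d} = r/2 \leq r$, so every pair $\xb, \xb' \in \{0, h\}^d$ satisfies $\|\xb - \xb'\|_2 \leq r$. Consequently, the $(r, D_{MR}, \|\cdot\|_2, \epsilon)$-robustness assumption gives $D_{MR}(\mathcal{M}(\xb) \| \mathcal{M}(\xb')) \leq \epsilon$ for \emph{every} such pair---not only for single-coordinate neighbors but for arbitrary hypercube pairs. Then, for any $\delta \in (0,1)$ to be chosen later, Theorem~\ref{thm:connect_le} converts this pairwise $D_{MR}$ bound into $(\epsilon', \delta)$-differential privacy of $\mathcal{M}$ on the hypercube, with $\epsilon' := \epsilon + 2\sqrt{\epsilon \log(1/\delta)}$.

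Next, rescale the input and output spaces by $1/h$: the induced mechanism $\widetilde{\mathcal{M}}(\widetilde{\xb}) := \mathcal{M}(h\widetilde{\xb})/h$ sends $\{0,1\}^d$ to $[0,1]^d$, inherits the $(\epsilon', \delta)$-DP guarantee, and has expected $\ell_\infty$-error $\widetilde{\alpha} := \alpha/h$. Interpreting $\widetilde{\mathcal{M}}$ as a differentially private estimator of the one-way marginals of a single-row database $D = (\widetilde{\xb})$, I invoke the fingerprinting-code lower bound in the $n=1$ regime: any $(\epsilon', \delta)$-DP one-way-marginal mechanism with error $\widetilde{\alpha} \leq O(1)$ and $\delta$ at most a sufficiently small absolute constant must satisfy $\widetilde{\alpha} \geq \Omega(\sqrt{d \log(1/\delta)}/\epsilon')$, provided the right-hand side is itself at most $O(1)$ (otherwise the bound degenerates to $\widetilde{\alpha} \geq \Omega(1)$).

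Finally, picking $\delta$ to be a small absolute constant (say $\delta = 1/8$) makes $\log(1/\delta) = \Theta(1)$, so that $\epsilon' = \Theta(\sqrt{\epsilon})$ whenever $\epsilon \leq O(1)$. Plugging in yields $\widetilde{\alpha} \geq \Omega(\sqrt{d/\epsilon})$, and unrescaling gives $\alpha = h \widetilde{\alpha} \geq \Omega(h\sqrt{d/\epsilon}) = \Omega(r/\sqrt{\epsilon})$, the claimed bound. The main obstacle is the careful application of the fingerprinting attack in the single-sample regime: the classical Bun--Ullman--Vadhan statement is often phrased for $n \geq 2$ data records, but the tracing attack still succeeds here because a single hypercube record $\widetilde{\xb}$ drawn uniformly from $\{0,1\}^d$ plays the role of the ``marked'' user in the standard reduction, and the $D_{MR}$-to-$(\epsilon', \delta)$-DP conversion feeds cleanly into the BUV framework. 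One must also track that the parameter regimes ($\epsilon \leq O(1)$, $\alpha \leq O(1)$, and the constant-$\delta$ choice) are all simultaneously compatible with the assumptions of the lower bound.
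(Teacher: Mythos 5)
Your proposal is correct and follows essentially the same route as the paper's own proof: both convert the pairwise $D_{MR}$ bound on the hypercube $\{0,\frac{r}{2\sqrt{d}}\}^d$ into $(\epsilon+2\sqrt{\epsilon\log(1/\delta)},\delta)$-DP via Theorem~\ref{thm:connect_le}, rescale to $\{0,1\}^d\mapsto[0,1]^d$, and invoke the one-way-marginal (fingerprinting-code) lower bound with $n=1$ to get $\widetilde{\alpha}\geq\Omega(\sqrt{d/\epsilon})$ and hence $\alpha\geq\Omega(r/\sqrt{\epsilon})$. The only cosmetic differences are the order of the rescaling step and your fixing $\delta$ to a constant rather than optimizing the ratio $\sqrt{\log(1/\delta)}/(\epsilon+2\sqrt{\epsilon\log(1/\delta)})$; both yield the same bound.
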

For any $\mathcal{M}(\xb)=\xb + \zb: \{0, \frac{r}{2\sqrt{d}}\}^d\mapsto \mathbb{R}^d$ considered in Theorem~\ref{thm:formall2lower1}, there exists a $(r, D_{MR}, \|\cdot\|_2, \epsilon)$-robust randomized mechanism $\mathcal{M}''(\xb): \{0, \frac{r}{2\sqrt{d}}\}^d\mapsto [0, \frac{r}{2\sqrt{d}}]^d$ considered in Theorem~\ref{thm:formall2lower2} such that for all $\xb \in \{0, \frac{r}{2\sqrt{d}}\}^d$
\begin{equation*}
   \mathbb{E}[\|\mathcal{M}''(\xb)-\xb\|_\infty]\leq  \mathbb{E}[\|\mathcal{M}(\xb)-\xb\|_\infty].
\end{equation*}
To prove the above statement, we first let $a=\frac{r}{2\sqrt{d}}$ and $\mathcal{M}'(\xb)=\min\{\mathcal{M}(\xb), a\}$, where $\min$ is a coordinate-wise operator. Now we fix the randomness of $\mathcal{M}(\xb)$ (that is $\mathcal{M}(\xb)$ is deterministic), and we assume that $\|\mathcal{M}(\xb)-\xb\|_\infty= |\mathcal{M}_j(\xb)-x_j|$, $\|\mathcal{M}'(\xb)-\xb\|_\infty=|\mathcal{M}_i'(\xb)-x_i|$. If $\mathcal{M}_i(\xb)< a$, then by the definitions, we have $\|\mathcal{M}'(\xb)-x\|_\infty = |\mathcal{M}_i'(\xb)-x_i| = |\mathcal{M}_i(\xb)-x_i| \leq \|\mathcal{M}(\xb)-\xb\|_\infty$. If $\mathcal{M}_i(\xb)\geq  a$, then we have $|\mathcal{M}_i'(\xb)-x_i|=|a-x_i|$. Since $x_i \in \{0, a\}$ and $\mathcal{M}_i(\xb)\geq a$, $|\mathcal{M}_i(\xb)-x_i|\geq|a-x_i|$. $\|\mathcal{M}(\xb)-\xb\|_\infty \geq |\mathcal{M}_i(\xb)-x_i|\geq |a-x_i|$. Thus, $\mathbb{E}[\|\mathcal{M}'(\xb)-\xb\|_\infty]\leq  \mathbb{E}[\|\mathcal{M}(\xb)-\xb\|_\infty]$.

Then, we let $\mathcal{M}''(\xb)=\max\{\mathcal{M}'(\xb), 0\}$ where $\max$ is also a coordinate-wise operator. We can use a similar method to prove that $\mathbb{E}[\|\mathcal{M}''(\xb)-\xb\|_\infty]\leq \mathbb{E}[\|\mathcal{M}'(\xb)-\xb\|_\infty] \leq \mathbb{E}[\|\mathcal{M}(\xb)-\xb\|_\infty]$. Also, we can see that $\mathcal{M}''(\xb)=\max\{0, \min\{\mathcal{M}(\xb), a\}\}$, which means $\mathcal{M}''$ is also $(r, D_{MR}, \|\cdot\|_2, \epsilon)$-robust randomized mechanism due to the postprocessing property. 

Since $\mathbb{E}[\|\mathcal{M}''(\xb)-\xb\|_\infty]\leq \mathbb{E}[\|\mathcal{M}(\xb)-\xb\|_\infty]$, and $\mathcal{M}''(\xb)$ is a randomized mechanism satisfying the conditions in Theorem~\ref{thm:formall2lower2}, the $\alpha$ in Theorem~\ref{thm:formall2lower2} should be less than or equal to the $\alpha$ in Theorem~\ref{thm:formall2lower1}.  
Therefore, the lower bound for the $\alpha$ in Theorem~\ref{thm:formall2lower2} ({\em i.e.,} $\Omega(\frac{r}{\sqrt{\epsilon}})$) is also a lower bound for the $\alpha$ in Theorem~\ref{thm:formall2lower1}. That is to say, if Theorem~\ref{thm:formall2lower2} holds, then Theorem~\ref{thm:formall2lower1} also holds.

{\em Finally, we give a proof of Theorem \ref{thm:formall2lower2}.}

	Since $\mathcal{M}$ is $(r, D_{MR}, \|\cdot\|_2, \epsilon)$-robust on $\{0, \frac{r}{2\sqrt{d}}\}^d$, and for any $\xb_i, \xb_j \in \{0, \frac{r}{2\sqrt{d}}\}^d$, 
	$\|\xb_i-\xb_j\|_2 \leq r$ ({\em i.e.,} $\xb_j \in \mathbb{B}_2(\xb_i, r)$), $\mathcal{M}$ is $(\epsilon + 2\sqrt{\log{(1/\delta)}\epsilon}, \delta)$-PixelDP on $\{0, \frac{r}{2\sqrt{d}}\}^d$, according to Theorem \ref{thm:connect_le}. Thus, we also have $\mathcal{M}$ is $(\epsilon + 2\sqrt{\log{(1/\delta)}\epsilon}, \delta)$ DP on the database $\{0, \frac{r}{2\sqrt{d}}\}^d$. 
	
	Then let us take use of the above condition by connecting the lower bound of the sample complexity to estimate one-way marginals ({\em i.e.,} mean estimation) for DP mechanisms with the lower bound studied in Theorem \ref{thm:formall2lower2}. Suppose an $n$-size dataset $X\in \mathbb{R}^{n\times d}$, the one-way marginal is $h(D)=\frac{1}{n}\sum_{i=1}^n{X_i}$, where $X_i$ is the $i$-th row of $X$. In particular, when $n=1$, one-way marginal is just the data point itself, and thus, the condition in Theorem \ref{thm:formall2lower2} can be rewritten as 
	\begin{equation}
		\mathbb{E}[\|\mathcal{M}(D)-h(D)\|_\infty] \leq \alpha.
	\end{equation}
	
	Based on this connection, we first prove the case where $r=2\sqrt{d}$, and then generalize it to any $r$. For $r=2\sqrt{d}$, the conclusion reduces to $\alpha \geq \Omega(\sqrt{\frac{d}{\epsilon}})$. To prove this, we employ the following lemma, which provides a one-way margin estimation for all DP mechanisms.
		\begin{lemma}[Theorem 1.1 in \cite{steinke2016between}]\label{lemma:one_way}
			For any $\epsilon\leq O(1)$, every $2^{-\Omega(n)}\leq \delta \leq \frac{1}{n^{1+\Omega(1)}}$ and every $\alpha\leq \frac{1}{10}$, if $\mathcal{M}: (\{0, 1\}^d)^n \mapsto [0,1]^d$ is $(\epsilon, \delta)$-DP and $\mathbb{E}[\|\mathcal{M}(D)-h(D)\|_\infty]\leq \alpha$, then  we have 
			$
				n\geq \Omega(\frac{\sqrt{d\log \frac{1}{\delta}}}{\epsilon \alpha}). 
			$
		\end{lemma}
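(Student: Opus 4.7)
The plan is to establish this lower bound via a fingerprinting/tracing-attack argument, the standard paradigm for $\sqrt{d}/\epsilon$-type lower bounds in approximate differential privacy (Bun--Ullman--Vadhan, Steinke--Ullman). The idea is to exhibit a product distribution over datasets $D$ on which every $\alpha$-accurate one-way-marginal mechanism must output a vector that is measurably correlated with each in-sample row $X_i$, and then argue that $(\epsilon,\delta)$-DP permits only very weak correlation with any single row, forcing $n$ to be large.

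First I would fix a \emph{fingerprinting} distribution $\mathcal{P}$ on $[0,1]$ (a shifted/rescaled Beta of variance $\Theta(\alpha^2)$ bounded away from $\{0,1\}$), draw column parameters $p_j \sim \mathcal{P}$ i.i.d., and then $X_{ij} \sim \mathrm{Bernoulli}(p_j)$ independently to form $D \in (\{0,1\}^d)^n$. The Bun--Ullman--Vadhan fingerprinting lemma applied column-wise to $f_j = \mathcal{M}(D)_j$ and summed over $j$ yields
\[
\sum_{i=1}^{n} \mathbb{E}\bigl\langle X_i - p,\ \mathcal{M}(D) - p\bigr\rangle \;+\; \mathbb{E}\|\mathcal{M}(D) - p\|_1 \;\geq\; \Omega(d\alpha).
\]
Using the $\alpha$-accuracy hypothesis together with concentration of the empirical mean $h(D)$ around $p$ to bound $\mathbb{E}\|\mathcal{M}(D)-p\|_1 \leq O(d\alpha)$, I can pigeonhole to extract a row $i^\star$ whose score $Z_{i^\star} := \langle X_{i^\star} - p,\ \mathcal{M}(D) - p\rangle$ satisfies $\mathbb{E}[Z_{i^\star}] \geq \Omega(d\alpha/n)$.

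Next I would upper-bound $\mathbb{E}[Z_{i^\star}]$ via $(\epsilon,\delta)$-DP after replacing $X_{i^\star}$ by an independent fresh copy $X_{i^\star}'$ drawn from the same product Bernoulli marginal; under the swapped dataset $D'$, the fresh score has mean zero by independence. The crux, and the source of the $\sqrt{\log(1/\delta)}$ factor, is truncation: define the clipped score $Z^{\mathrm{trunc}}_{i^\star}$ at threshold $T = \Theta(\alpha\sqrt{d\log(1/\delta)})$, chosen so that under the hard distribution the sub-Gaussian tail $\Pr[|Z_{i^\star}| > T]$ is $\ll \delta/T$. Then the truncated score has $\ell_\infty$-sensitivity $O(T)$, so $(\epsilon,\delta)$-DP gives $\mathbb{E}[Z^{\mathrm{trunc}}_{i^\star}] \leq O\bigl((\epsilon+\delta)T\bigr)$, and the truncation bias is negligible. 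Combining with the lower bound from the previous step yields $\Omega(d\alpha/n) \leq O(\epsilon\alpha\sqrt{d\log(1/\delta)})$, hence $n \geq \Omega\!\bigl(\sqrt{d\log(1/\delta)}/(\epsilon\alpha)\bigr)$ as claimed.

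The hardest step, I expect, is the truncation/tail-control argument. The naive DP inequality $\mathbb{E}_D[Z_{i^\star}] \leq e^\epsilon \mathbb{E}_{D'}[Z_{i^\star}] + \delta\,\|Z\|_\infty$ with the worst-case bound $\|Z\|_\infty = O(\sqrt{d})$ only produces a $\sqrt{d}/\epsilon$ lower bound and entirely loses the $\sqrt{\log(1/\delta)}$ factor. Recovering that factor requires (i) sub-Gaussian tail bounds on $Z_{i^\star}$ with scale $\Theta(\alpha\sqrt{d})$ under the hard distribution, and (ii) a delicate choice of $T$ balancing the DP slack $\epsilon T$, the truncation bias $\delta T$, and the tail mass. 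The parameter regimes in the lemma's statement ($\delta \in [2^{-\Omega(n)}, n^{-1-\Omega(1)}]$, $\alpha \leq 1/10$, $\epsilon \leq O(1)$) are precisely those for which this book-keeping balances correctly, and getting each constant right is the technical core that distinguishes the $\delta > 0$ case from the simpler pure-DP packing argument.
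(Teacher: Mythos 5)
The paper itself does not prove this lemma; it is imported verbatim as Theorem~1.1 of the cited reference, so there is no in-paper argument to compare against. Judged on its own terms, your reconstruction correctly identifies the fingerprinting paradigm as the engine behind the result, and the first half (hard distribution from a fingerprinting prior, column-wise correlation lower bound, pigeonholing to a row $i^\star$ with $\mathbb{E}[Z_{i^\star}] \geq \Omega(d\alpha/n)$) is sound in outline. The problem is the step you yourself identify as the crux. From $\Omega(d\alpha/n) \leq O\bigl((\epsilon+\delta)T\bigr)$ with $T = \Theta(\alpha\sqrt{d\log(1/\delta)})$ the algebra gives $n \geq \Omega\bigl(\sqrt{d}/(\epsilon\sqrt{\log(1/\delta)})\bigr)$, not $\Omega\bigl(\sqrt{d\log(1/\delta)}/(\epsilon\alpha)\bigr)$: enlarging the truncation threshold only \emph{weakens} the upper bound on the score and hence weakens the lower bound on $n$, so your $\log(1/\delta)$ dependence comes out with the wrong sign. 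More fundamentally, no single-row swap argument can produce the $\sqrt{\log(1/\delta)}$ gain. The sharp consequence of $(\epsilon,\delta)$-indistinguishability for a score that is centered and $\sigma$-sub-Gaussian under the swapped distribution is $\mathbb{E}_P[Z] \leq O(\epsilon\sigma + \delta\cdot\mathrm{range})$, and the $O(\epsilon\sigma)$ term is already attained by pure-$\epsilon$ pairs such as $\mathrm{Bern}(1/2)$ versus $\mathrm{Bern}(1/2+\Theta(\epsilon))$, which are simultaneously $(\epsilon,\delta)$-indistinguishable for every $\delta$; hence no truncation scheme can improve it to $O(\epsilon\sigma/\sqrt{\log(1/\delta)})$. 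This ceiling yields exactly the Bun--Ullman--Vadhan bound $n \geq \Omega(\sqrt{d}/(\epsilon\alpha))$ and nothing more.

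The missing idea is a \emph{group-privacy} reduction, which is the actual content of the cited theorem beyond Bun--Ullman--Vadhan. One replaces a block of $k \approx \log(1/\delta)/\epsilon$ rows simultaneously --- equivalently, runs $\mathcal{M}$ on $n/k$ samples each duplicated $k$ times, which is $(k\epsilon,\ O(e^{k\epsilon}\delta/\epsilon))$-DP with respect to single-row changes of its smaller input and is still $\alpha$-accurate. The fingerprinting correlation per effective row then scales up by a factor of $k$, while the sub-Gaussian tail integral $\int_0^\infty \min\{1,\ e^{k\epsilon}Q(Z>t)+\delta_k\}\,\mathrm{d}t \lesssim \sigma\sqrt{k\epsilon+\log(1/\delta_k)}$ grows only like $\sigma\sqrt{\log(1/\delta)}$ for this choice of $k$; optimizing the trade-off manufactures the extra $\sqrt{\log(1/\delta)}$. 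Note that the hypotheses you call ``book-keeping'' are exactly the footprint of this reduction and play no role in your truncation argument: $\delta \geq 2^{-\Omega(n)}$ is the condition $k \lesssim n$ needed so that a group of size $\log(1/\delta)/\epsilon$ fits inside the dataset, and $\delta \leq n^{-1-\Omega(1)}$ keeps the accumulated $\delta_k$ terms from swamping the correlation signal. You would need to replace your truncation step with this group-privacy step to obtain the stated bound.
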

		Setting $n=1, \epsilon=\epsilon+2\sqrt{\epsilon \log \frac{1}{\delta}}$ in Lemma~\ref{lemma:one_way}, we can see that if $\mathbb{E}[\|\mathcal{M}(\xb)-\xb\|_\infty]\leq \alpha$, then we must have 
		$$1\geq \Omega(\frac{\sqrt{d\log \frac{1}{\delta}}}{(\epsilon+2\sqrt{\epsilon \log \frac{1}{\delta}})\alpha})\geq \Omega(\frac{\sqrt{d}}{\sqrt{\alpha^2 \epsilon}}),$$ where the last inequality is due to the fact that $\frac{\sqrt{\log \frac{1}{\delta}}}{\epsilon+2\sqrt{\epsilon \log \frac{1}{\delta}}}\geq \Omega(\frac{1}{\sqrt{\epsilon}})$, since $\epsilon\leq O(1)$. 
    Therefore, we have the following theorem, 
    \begin{theorem}\label{thm:simple_l2bound}
		For any $\epsilon\leq O(1)$, if there is a $(2\sqrt{d}, D_{MR}, \|\cdot\|_2, \epsilon)$-robust randomized mechanism $\mathcal{M}: \{0, 1\}^d \mapsto [0,1]^d$ satisfies that
		for all $\xb\in \{0, 1\}^d$
		\begin{equation}
			\mathbb{E}[\|\mathcal{M}(\xb)- \xb\|_\infty] \leq \alpha,
		\end{equation}
		for some $\alpha \leq O(1)$. Then $1\geq \Omega(\sqrt{\frac{d}{\epsilon \alpha^2}})$, {\em i.e.,} $\alpha \geq \Omega(\sqrt{\frac{d}{\epsilon}})$.
	\end{theorem}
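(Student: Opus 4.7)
The plan is to reduce the claim to a known sample-complexity lower bound for one-way marginal estimation under approximate differential privacy, namely Lemma~\ref{lemma:one_way}. The bridge between the two settings is the observation that when $r = 2\sqrt{d}$, the $\ell_2$-ball of radius $r$ around any point of $\{0,1\}^d$ contains the entire hypercube, whose diameter is only $\sqrt{d}$. Consequently, the $(2\sqrt{d}, D_{MR}, \|\cdot\|_2, \epsilon)$-robustness assumption is equivalent to a uniform MR-divergence bound between $\mathcal{M}(\xb)$ and $\mathcal{M}(\xb')$ for \emph{every} pair $\xb, \xb' \in \{0,1\}^d$. This is exactly the kind of indistinguishability needed to view $\mathcal{M}$ as a differentially private mechanism over the degenerate database class $(\{0,1\}^d)^n$ with $n = 1$, where the one-way marginal $h(\xb)$ coincides with $\xb$ itself, so the hypothesis $\mathbb{E}[\|\mathcal{M}(\xb)-\xb\|_\infty]\leq \alpha$ matches the lemma's hypothesis verbatim.

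The concrete steps I would take are as follows. First, invoke Theorem~\ref{thm:connect_le} to upgrade the $D_{MR}$ robustness into an $(\epsilon', \delta)$-DP guarantee with $\epsilon' = \epsilon + 2\sqrt{\epsilon \log(1/\delta)}$ for a constant $\delta$. Second, specialize Lemma~\ref{lemma:one_way} to $n = 1$ (so a "dataset" is a single hypercube vertex) to conclude
\[
 1 \;\geq\; \Omega\!\left(\frac{\sqrt{d \log(1/\delta)}}{\epsilon' \, \alpha}\right).
\]
Third, simplify using $\epsilon \leq O(1)$: the ratio $\sqrt{\log(1/\delta)}/\epsilon' = \sqrt{\log(1/\delta)}/(\epsilon + 2\sqrt{\epsilon \log(1/\delta)})$ is $\Omega(1/\sqrt{\epsilon})$ because the additive square-root term dominates the denominator when $\epsilon$ is bounded. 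Substituting yields $1 \geq \Omega(\sqrt{d}/(\sqrt{\epsilon}\,\alpha))$, which rearranges to $\alpha \geq \Omega(\sqrt{d/\epsilon})$, as desired.

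The main obstacle I anticipate is not conceptual but rather parameter bookkeeping: Lemma~\ref{lemma:one_way} imposes the constraint $2^{-\Omega(n)} \leq \delta \leq n^{-1-\Omega(1)}$, and at $n = 1$ this pinches $\delta$ into a small window of constants, so I would need to pick $\delta$ (say $\delta = 1/e$) that simultaneously fits that window and keeps the post-conversion $\epsilon'$ inside the lemma's $\epsilon \leq O(1)$ regime. A minor secondary point is verifying the output-range match $\mathcal{M}: \{0,1\}^d \to [0,1]^d$ required by the lemma, but this is immediate from the theorem hypothesis. Once those items are discharged, all the heavy lifting is absorbed into Lemma~\ref{lemma:one_way}, and the rest is algebraic manipulation of $\epsilon$, $\delta$, and $\alpha$.
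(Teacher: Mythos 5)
Your proposal is correct and follows essentially the same route as the paper: convert the $D_{MR}$ robustness over the hypercube (whose $\ell_2$-diameter $\sqrt{d}$ is within the radius $2\sqrt{d}$) into $(\epsilon + 2\sqrt{\epsilon\log(1/\delta)}, \delta)$-DP via Theorem~\ref{thm:connect_le}, then apply Lemma~\ref{lemma:one_way} with $n=1$ and the same simplification $\sqrt{\log(1/\delta)}/(\epsilon + 2\sqrt{\epsilon\log(1/\delta)}) \geq \Omega(1/\sqrt{\epsilon})$. Your attention to the $\delta$-window constraint of the lemma at $n=1$ is a detail the paper glosses over, but it is handled exactly as you suggest, by fixing a suitable constant $\delta$.
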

	Apparently, Theorem~\ref{thm:simple_l2bound} is special case of Theorem~\ref{thm:formall2lower2} where $r = 2\sqrt{d}$.
	Now we come back to the proof for any $r$. 
	For any $(r, D_{MR}, \|\cdot\|_2, \epsilon)$-robust $\mathcal{M}(\xb): \{0, \frac{r}{2\sqrt{d}}\}^d\mapsto [0, \frac{r}{2\sqrt{d}}]^d$, we substitute $\frac{2\sqrt{d}}{r}\xb$ with $\tilde\xb \in \{0, 1\}^d$ and construct $\tilde{\mathcal{M}}$ as $\tilde{\mathcal{M}}(\tilde\xb) = \frac{2\sqrt{d}}{r}\mathcal{M}(\xb) \in [0, 1]^d$. Since $\mathcal{M}(\xb)$ satisfies
	$$\mathbb{E}[\|\mathcal{M}(\xb)-\xb\|_\infty] \leq \alpha,$$
	then we have $$\mathbb{E}[\|\tilde{\mathcal{M}}(\tilde\xb)-\tilde\xb\|_\infty] = \mathbb{E}[\|\frac{2\sqrt{d}}{r}\mathcal{M}(\xb)-\frac{2\sqrt{d}}{r}\xb\|_\infty] \leq \frac{2\sqrt{d}}{r}\alpha.$$ 
	We claim that $\tilde{\mathcal{M}}: \{0, 1\}^d\mapsto [0, 1]^d$ is $(2\sqrt{d}, D_{MR}, \|\cdot\|_2, \epsilon)$-robust since $\mathcal{M}: \{0, \frac{r}{2\sqrt{d}}\}^d\mapsto [0, \frac{r}{2\sqrt{d}}]^d$ is $(r, D_{MR}, \|\cdot\|_2, \epsilon)$-robust. This is because $D_{MR}(\tilde{\mathcal{M}}(\tilde\xb)\|\tilde{\mathcal{M}}(\tilde\xb')) =  D_{MR}(\frac{2\sqrt{d}}{r}\mathcal{M}(\xb)\|\frac{2\sqrt{d}}{r}\mathcal{M}(\xb'))$, and $D_{MR}(\frac{2\sqrt{d}}{r}\mathcal{M}(\xb)\|\frac{2\sqrt{d}}{r}\mathcal{M}(\xb')) \leq D_{MR}(\mathcal{M}(\xb)\|\mathcal{M}(\xb'))$ since $D_\alpha(f(\mathcal{M}(\xb))\|f(\mathcal{M}(\xb'))) \leq D_\alpha(\mathcal{M}(\xb)\|\mathcal{M}(\xb'))$ for any $f(\cdot)$.
	
    Considering $\tilde{\mathcal{M}}: \{0, 1\}^d\mapsto [0, 1]^d$ in Theorem~\ref{thm:simple_l2bound} with $\alpha=\frac{2\sqrt{d}}{r}\alpha\leq O(1)$ (because $\mathbb{E}[\|\tilde{\mathcal{M}}(\tilde\xb)-\tilde\xb\|_\infty] \leq \frac{2\sqrt{d}}{r}\alpha$), we have 
	\begin{equation}
		1\geq \Omega(\frac{r}{\sqrt{\epsilon\alpha^2}}), ~~~\mbox{i.e., } \alpha \geq \Omega(\frac{r}{\sqrt{\epsilon}}).
	\end{equation}
	Therefore, Theorem~\ref{thm:formall2lower2} holds true, thus, Theorem~\ref{thm:formall2lower1} also holds true, and Theorem~\ref{thm:l2lower} is proved.
\end{proof}

\section{Omitted Proofs in Section \ref{sec:linfty}}

\begin{proof}[Proof of Theorem \ref{thm:exponential}]
    We first prove that $D_\infty(g(\xb)\|g(\xb')) \leq  \frac{r}{\sigma}$ for all $\xb' \in \mathbb{B}_\infty(\xb, r)$.
    Since $\|\xb'-\xb\|_\infty\leq r$, for any $\yb$, 
	\begin{align}
		\frac{p(\yb - \xb)}{p(\yb - \xb')}=& \frac{\exp(-\frac{\|\yb-\xb\|_\infty}{\sigma} ) }{\exp(-\frac{\|\yb-\xb'\|_\infty}{\sigma} )} 
		\leq& \exp( \frac{\|\yb-\xb'\|_\infty-\|\yb-\xb\|_\infty}{\sigma}) \leq& \exp(\frac{\|\xb'-\xb\|_\infty}{\sigma})\leq \exp(\frac{r}{\sigma}). ~~\nonumber
	\end{align}
	Since $$D_\alpha(g(\xb)\|g(\xb')) < D_\infty(g(\xb)\|g(\xb'))=\mathbb{E}[\log\frac{p(\xb)}{p(\xb')}] \leq \frac{r}{\sigma} < \frac{r}{\sigma}\alpha,$$ $\forall \alpha \in (1, +\infty)$, $g(\cdot)$ is $(r, D_{MR}, \|\cdot\|_\infty, \frac{r}{\sigma})$-robust. Also, based on the following lemma,
	\begin{lemma}[\cite{bun2016concentrated}]\label{thm:infty_mr}
	Let $P$ and $Q$ be two probability distributions satisfying $D_\infty(P\|Q)\leq \epsilon$ and $D_\infty(Q\|P)\leq \epsilon$. Then, $D_\alpha(P\|Q)\leq \frac{1}{2}\epsilon^2\alpha$,
    \end{lemma}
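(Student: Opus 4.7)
The plan is to bound the Rényi divergence by analyzing the log-moment-generating function of the log-likelihood ratio $Z := \log(dP/dQ)$. Set $F(t) := \log \mathbb{E}_Q[e^{tZ}]$, so that, by the definition of Rényi divergence, $D_\alpha(P\|Q) = F(\alpha)/(\alpha-1)$. The two hypotheses combine nicely: $D_\infty(P\|Q)\leq\epsilon$ forces $Z\leq\epsilon$ almost surely, and $D_\infty(Q\|P)\leq\epsilon$ forces $-Z = \log(dQ/dP)\leq\epsilon$ almost surely, hence $|Z|\leq\epsilon$ a.s. Moreover the boundary values are pinned down: $F(0)=0$ trivially, and $F(1) = \log\mathbb{E}_Q[dP/dQ] = \log 1 = 0$.

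The next step is to control the Hessian of $F$. Since $F$ is the standard log-MGF of $Z$ under $Q$, it is convex, and a direct computation gives $F''(t) = \mathrm{Var}_{Q_t}(Z)$, where $Q_t$ is the exponentially tilted measure with density $e^{tZ}/\mathbb{E}_Q[e^{tZ}]$ with respect to $Q$. Because $Q_t \ll Q$ with the same support, the almost-sure bound $|Z|\leq\epsilon$ transfers to $Q_t$, so Popoviciu's inequality yields
\begin{equation*}
F''(t) \;\leq\; \frac{(2\epsilon)^2}{4} \;=\; \epsilon^2 \qquad \text{for every } t.
\end{equation*}

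The last step is pure calculus. Using Taylor's theorem with integral remainder at $t=1$ and evaluating at $t=0$ together with $F(0)=F(1)=0$, one obtains
\begin{equation*}
F'(1) \;=\; \int_0^1 s\, F''(s)\, ds \;\leq\; \int_0^1 s\, \epsilon^2\, ds \;=\; \tfrac{\epsilon^2}{2}.
\end{equation*}
A second Taylor expansion at $t=1$ evaluated at $t=\alpha>1$ then gives
\begin{equation*}
F(\alpha) \;\leq\; F'(1)(\alpha-1) + \int_1^\alpha (\alpha-s) F''(s)\, ds \;\leq\; \tfrac{\epsilon^2}{2}(\alpha-1) + \tfrac{\epsilon^2}{2}(\alpha-1)^2 \;=\; \tfrac{\epsilon^2}{2}\alpha(\alpha-1),
\end{equation*}
and dividing by $\alpha-1$ delivers $D_\alpha(P\|Q)\leq \tfrac{1}{2}\epsilon^2\alpha$.

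The main obstacle is the uniform Hessian bound $F''\leq\epsilon^2$: one must recognize that exponential tilting preserves the support of $Q$, so the a.s.\ range bound on $Z$ actually applies under every $Q_t$, not merely under $Q$ itself. This is also where both $D_\infty$ hypotheses are essential — a one-sided log-ratio bound would not constrain the range of $Z$ on both ends and Popoviciu's inequality would fail to give the clean $\epsilon^2$. Once the Hessian bound is established, everything else is a bookkeeping exercise in Taylor expansion exploiting the two boundary conditions $F(0)=F(1)=0$.
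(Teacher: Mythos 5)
The paper does not prove this lemma itself --- it is imported verbatim from \cite{bun2016concentrated} --- so there is no internal proof to compare against. Your argument is correct and self-contained: the reduction of both $D_\infty$ hypotheses to $|Z|\leq\epsilon$ a.s., the two normalization constraints $F(0)=F(1)=0$, the tilted-variance bound $F''\leq\epsilon^2$ via Popoviciu (legitimate because every tilted measure $Q_t$ is absolutely continuous with respect to $Q$, and boundedness of $Z$ justifies differentiating under the integral), and the two Taylor expansions combine exactly to give $F(\alpha)\leq\tfrac{1}{2}\epsilon^2\alpha(\alpha-1)$. This is essentially the same mechanism as the Dwork--Rothblum/Bun--Steinke argument behind the cited result --- in particular, using $F(1)=0$ and not just $F(0)=0$ is what yields the factor $\alpha(\alpha-1)$ rather than the weaker $\alpha^2$ one would get from a naive Hoeffding bound --- so nothing further is needed.
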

    we have $D_\alpha(g(\xb)\|g(\xb'))\leq \frac{1}{2}(\frac{r}{\sigma})^2\alpha$, {\em i.e.,} $g(\cdot)$ is $(r, D_{MR}, \|\cdot\|_\infty, \frac{r^2}{2\sigma^2})$-robust.
\end{proof}

\begin{proof}[Proof of Corollary \ref{thm:exponential_noise_bound}]
	Define the distribution $D$ on $[0, \infty)$ to be $Z\sim D$, meaning $Z=\|\zb\|_\infty$ for $\zb\sim p(\zb)$, where $p(\zb)$ is defined in Eq.(\ref{eq:exponential}). The probability density function of $D$ is given by 
	\begin{equation*}
		p_D(Z)\propto Z^{d-1}\exp(-\frac{Z}{\sigma}),
	\end{equation*}
	which is obtained by integrating the probability density function in Eq. (\ref{eq:exponential}) over the infinity ball of radius $Z$ with surface area $d2^dZ^{d-1}\propto Z^{d-1}$. $p_D$ is the Gamma distribution with shape $d$ and mean $\sigma$, and thus $\mathbb{E}[Z]=d\sigma$.
\end{proof}

\begin{proof}[Proof of Theorem~\ref{thm:linflower}]
Similar to the proof of Theorem \ref{thm:l2lower}, in order to prove Theorem~\ref{thm:linflower}, we only need to prove the following theorem:
\begin{theorem}\label{thm:formallinf}
For any $\epsilon\leq O(1)$, if there is a $(r, D_{MR}, \|\cdot\|_\infty, \epsilon)$ robust randomized (smoothing) mechanism $\mathcal{M}(\xb): \{0, \frac{r}{2}\}^d\mapsto [0, \frac{r}{2}]^d$ such that for any $\xb\in \{0,\frac{r}{2}\}^d$, the following holds 
\begin{equation*}
    \mathbb{E}[\|\zb\|_\infty]= \mathbb{E}[\|\mathcal{M}(\xb)-\xb\|_\infty] \leq \alpha
\end{equation*}
for some $\alpha \leq O(1)$. Then it must be true that $\alpha\geq \Omega (\frac{r\sqrt{d}}{\sqrt{\epsilon}})$. 
\end{theorem}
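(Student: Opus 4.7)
The plan is to mirror the three-step reduction used for Theorem \ref{thm:l2lower}, replacing the $\ell_2$ diameter argument with the (much easier) $\ell_\infty$ diameter argument. First, I would reduce Theorem \ref{thm:linflower} to Theorem \ref{thm:formallinf} by restricting attention to inputs on the discrete hypercube $\{0,\tfrac{r}{2}\}^d$, noting that the expected $\ell_\infty$-noise on this subdomain lower-bounds the expected $\ell_\infty$-noise on all of $\mathbb{R}^d$. Then I would reduce to the case of an output range $[0,\tfrac{r}{2}]^d$ by the same coordinatewise clipping trick: define $\mathcal{M}''(\xb)=\max\{0,\min\{\mathcal{M}(\xb),\tfrac{r}{2}\}\}$. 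By the postprocessing property (Corollary \ref{thm:post}), $\mathcal{M}''$ remains $(r,D_{MR},\|\cdot\|_\infty,\epsilon)$-robust, and coordinatewise clipping can only reduce $|\mathcal{M}_i(\xb)-x_i|$ when $x_i\in\{0,\tfrac{r}{2}\}$, so $\mathbb{E}[\|\mathcal{M}''(\xb)-\xb\|_\infty]\leq\mathbb{E}[\|\mathcal{M}(\xb)-\xb\|_\infty]$.

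The heart of the proof is then the DP lower bound. The crucial observation specific to the $\ell_\infty$ case is that any two points $\xb_i,\xb_j\in\{0,\tfrac{r}{2}\}^d$ satisfy $\|\xb_i-\xb_j\|_\infty\leq \tfrac{r}{2}\leq r$, regardless of dimension (this is precisely why the $\ell_\infty$ bound will carry a factor $\sqrt{d}$ more than the $\ell_2$ bound, which required shrinking the hypercube by $1/\sqrt{d}$). Hence $\mathcal{M}''$ is $(r,D_{MR},\|\cdot\|_\infty,\epsilon)$-robust on every pair in the cube, so by Theorem \ref{thm:connect_le} it is $(\epsilon+2\sqrt{\epsilon\log(1/\delta)},\delta)$-PixelDP (hence $(\epsilon+2\sqrt{\epsilon\log(1/\delta)},\delta)$-DP when viewed as taking single-record inputs from the universe $\{0,\tfrac{r}{2}\}^d$).

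Next I would rescale: set $\tilde\xb=\tfrac{2}{r}\xb\in\{0,1\}^d$ and $\tilde{\mathcal{M}}(\tilde\xb)=\tfrac{2}{r}\mathcal{M}''(\xb)\in[0,1]^d$. The rescaled mechanism is still $(1,D_{MR},\|\cdot\|_\infty,\epsilon)$-robust (again using postprocessing since scaling is deterministic), hence $(\epsilon+2\sqrt{\epsilon\log(1/\delta)},\delta)$-DP on $\{0,1\}^d$, and satisfies $\mathbb{E}[\|\tilde{\mathcal{M}}(\tilde\xb)-\tilde\xb\|_\infty]\leq\tfrac{2\alpha}{r}$. Now I apply Lemma \ref{lemma:one_way} with $n=1$, treating $\tilde\xb$ as a single-record dataset whose one-way marginal equals itself. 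This yields
\begin{equation*}
1\geq \Omega\!\left(\frac{\sqrt{d\log(1/\delta)}}{\bigl(\epsilon+2\sqrt{\epsilon\log(1/\delta)}\bigr)\cdot(2\alpha/r)}\right)\geq \Omega\!\left(\frac{r\sqrt{d}}{\alpha\sqrt{\epsilon}}\right),
\end{equation*}
where the last step uses $\epsilon\leq O(1)$ to absorb the $\log(1/\delta)$ factors exactly as in the $\ell_2$ proof. Rearranging gives $\alpha\geq\Omega(r\sqrt{d}/\sqrt{\epsilon})$, which is Theorem \ref{thm:formallinf} and hence Theorem \ref{thm:linflower}.

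The main obstacle, as in the $\ell_2$ case, is really the invocation of Lemma \ref{lemma:one_way}: one must check that its technical conditions on $\delta$ and $\alpha$ can be satisfied by an appropriate choice of $\delta$ while keeping $\epsilon+2\sqrt{\epsilon\log(1/\delta)}$ of order $\sqrt{\epsilon\log(1/\delta)}$; everything else (the diameter observation, the clipping argument, and the rescaling) is essentially a re-parameterization of the $\ell_2$ proof.
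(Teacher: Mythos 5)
Your proposal is correct and follows essentially the same route as the paper: observe that the cube $\{0,\tfrac{r}{2}\}^d$ has $\ell_\infty$-diameter at most $r$ (no $1/\sqrt{d}$ shrinkage needed, which is where the extra $\sqrt{d}$ comes from), convert $D_{MR}$-robustness to $(\epsilon+2\sqrt{\epsilon\log(1/\delta)},\delta)$-DP via Theorem~\ref{thm:connect_le}, rescale to $\{0,1\}^d\mapsto[0,1]^d$, and invoke Lemma~\ref{lemma:one_way} with $n=1$. The only cosmetic difference is that you fold the $r=2$ base case and the rescaling into one step (and you spell out the clipping reduction that the paper only cites as ``similar to the $\ell_2$ case''), which changes nothing substantive.
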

	Since $\mathcal{M}$ is $(r, D_{MR}, \|\cdot\|_\infty, \epsilon)$-robust on $\{0, \frac{r}{2}\}^d$, and for any $\xb_i, \xb_j \in \{0, \frac{r}{2}\}^d$, 
	$\|\xb_i-\xb_j\|_\infty \leq r$ ({\em i.e.,} $\xb_j \in \mathbb{B}_\infty(\xb_i, r)$), $\mathcal{M}$ is $(\epsilon + 2\sqrt{\log{(1/\delta)}\epsilon}, \delta)$-PixelDP on $\{0, \frac{r}{2\sqrt{d}}\}^d$, according to Theorem \ref{thm:connect_le}. Thus, we also have $\mathcal{M}$ is $(\epsilon + 2\sqrt{\log{(1/\delta)}\epsilon}, \delta)$ DP on the database $\{0, \frac{r}{2\sqrt{d}}\}^d$.
	
     We first consider the case where $r=2$. By setting $n=1$ and $\epsilon=\epsilon+2\sqrt{\epsilon \log 1/\delta}$ in Lemma~\ref{lemma:one_way}, we have a similar result as in Theorem \ref{thm:simple_l2bound}: 
	    \begin{theorem}\label{thm:simple_linfbound}
		For any $\epsilon\leq O(1)$, if there is a $(2, D_{MR}, \|\cdot\|_\infty, \epsilon)$-robust randomized mechanism $\mathcal{M}: \{0, 1\}^d \mapsto [0,1]^d$ satisfies that
		for all $\xb\in \{0, 1\}^d$
		\begin{equation}
			\mathbb{E}[\|\mathcal{M}(\xb)- \xb\|_\infty] \leq \alpha,
		\end{equation}
		for some $\alpha \leq O(1)$. Then $1\geq \Omega(\sqrt{\frac{d}{\epsilon \alpha^2}})$. 
	\end{theorem}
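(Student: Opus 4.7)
The plan is to replicate the argument used for Theorem~\ref{thm:simple_l2bound} almost verbatim, replacing the $\ell_2$-ball condition by the corresponding $\ell_\infty$-ball condition. The key geometric fact is that for any $\xb,\xb' \in \{0,1\}^d$ one has $\|\xb - \xb'\|_\infty \leq 1 \leq 2$, so every pair of points in the universe lies in each other's $\ell_\infty$-ball of radius $2$; hence the assumed $(2, D_{MR}, \|\cdot\|_\infty, \epsilon)$-robustness applies pairwise across $\{0,1\}^d$, exactly as the $(2\sqrt{d}, D_{MR}, \|\cdot\|_2, \epsilon)$-robustness did in the $\ell_2$ argument.

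First I would invoke Theorem~\ref{thm:connect_le} to convert the $D_{MR}$-robustness into approximate differential privacy: for every $\delta>0$, $\mathcal{M}$ is $(\epsilon + 2\sqrt{\epsilon\log(1/\delta)},\,\delta)$-PixelDP, and therefore $(\epsilon + 2\sqrt{\epsilon\log(1/\delta)},\,\delta)$-DP when its input is interpreted as a size-$1$ database drawn from $\{0,1\}^d$. Next, viewing $\mathcal{M}$ as a one-way marginal estimator in the trivial $n=1$ case (where the one-way marginal is $h(\xb)=\xb$), the accuracy hypothesis $\mathbb{E}[\|\mathcal{M}(\xb) - \xb\|_\infty] \leq \alpha$ fits the accuracy hypothesis of Lemma~\ref{lemma:one_way} exactly.

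Plugging $n=1$ and the above DP parameters into Lemma~\ref{lemma:one_way}, I would obtain
\[
1 \;\geq\; \Omega\!\left(\frac{\sqrt{d\log(1/\delta)}}{\bigl(\epsilon + 2\sqrt{\epsilon\log(1/\delta)}\bigr)\alpha}\right) \;\geq\; \Omega\!\left(\sqrt{\frac{d}{\epsilon\alpha^2}}\right),
\]
where the second inequality uses $\tfrac{\sqrt{\log(1/\delta)}}{\epsilon + 2\sqrt{\epsilon\log(1/\delta)}} \geq \Omega(1/\sqrt{\epsilon})$ under the assumption $\epsilon \leq O(1)$, for any suitably chosen constant $\delta$. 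This matches the claimed bound.

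The only subtlety I expect is verifying that the admissible range $2^{-\Omega(n)} \leq \delta \leq n^{-1-\Omega(1)}$ of Lemma~\ref{lemma:one_way} remains non-empty at $n=1$; since both endpoints are constants there, it suffices to pick $\delta$ to be a small absolute constant, at which point the chain of inequalities goes through. Apart from this bookkeeping point, the entire derivation is a direct norm-substitution into the $\ell_2$ proof and involves no new ideas.
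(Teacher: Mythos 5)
Your proposal is correct and follows essentially the same route as the paper: convert the $(2, D_{MR}, \|\cdot\|_\infty, \epsilon)$-robustness (which applies pairwise on $\{0,1\}^d$ since $\|\xb-\xb'\|_\infty\leq 1\leq 2$) into $(\epsilon+2\sqrt{\epsilon\log(1/\delta)},\delta)$-DP via Theorem~\ref{thm:connect_le}, then apply Lemma~\ref{lemma:one_way} with $n=1$ and simplify using $\frac{\sqrt{\log(1/\delta)}}{\epsilon+2\sqrt{\epsilon\log(1/\delta)}}\geq\Omega(1/\sqrt{\epsilon})$. Your remark about verifying that the admissible range of $\delta$ in Lemma~\ref{lemma:one_way} is non-empty at $n=1$ is a small point the paper leaves implicit, and you resolve it correctly.
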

	
	For general $r$, similar to the proof of Theorem~\ref{thm:formall2lower2}, we substitute $\frac{2}{r}\xb$ with $\tilde\xb \in \{0, 1\}^d$ and construct $\tilde{\mathcal{M}}$ as $\tilde{\mathcal{M}}(\tilde\xb) = \frac{2}{r}\mathcal{M}(\xb) \in [0, 1]^d$. Since $\mathcal{M}(\xb)$ satisfies
	$$\mathbb{E}[\|\mathcal{M}(\xb)-\xb\|_\infty] \leq \alpha,$$
	then we have $$\mathbb{E}[\|\tilde{\mathcal{M}}(\tilde\xb)-\tilde\xb\|_\infty] = \mathbb{E}[\|\frac{2}{r}\mathcal{M}(\xb)-\frac{2}{r}\xb\|_\infty] \leq \frac{2}{r}\alpha.$$ 
	Also, $\tilde{\mathcal{M}}: \{0, 1\}^d\mapsto [0, 1]^d$ is $(2, D_{MR}, \|\cdot\|_\infty, \epsilon)$-robust since $\mathcal{M}: \{0, \frac{r}{2}\}^d\mapsto [0, \frac{r}{2}]^d$ is $(r, D_{MR}, \|\cdot\|_\infty, \epsilon)$-robust. This is because $D_{MR}(\tilde{\mathcal{M}}(\tilde\xb)\|\tilde{\mathcal{M}}(\tilde\xb')) =  D_{MR}(\frac{2}{r}\mathcal{M}(\xb)\|\frac{2}{r}\mathcal{M}(\xb'))$, and $D_{MR}(\frac{2}{r}\mathcal{M}(\xb)\|\frac{2}{r}\mathcal{M}(\xb')) \leq D_{MR}(\mathcal{M}(\xb)\|\mathcal{M}(\xb'))$ since $D_\alpha(f(\mathcal{M}(\xb))\|f(\mathcal{M}(\xb'))) \leq D_\alpha(\mathcal{M}(\xb)\|\mathcal{M}(\xb'))$ for any $f(\cdot)$.
    Thus by Theorem \ref{thm:simple_linfbound} with $\alpha=\frac{2}{r}\alpha$ we have 
	\begin{equation*}
	    1\geq \Omega(\frac{\sqrt{d}}{\sqrt{\epsilon (2/r \alpha)^2}}),
	\end{equation*}
	thus we have Theorem~\ref{thm:formallinf}. 
\end{proof}

\begin{proof}[Proof of Theorem \ref{thm:gauss_linf}] By simple calculation we have 
	$$
		D_{\alpha} (\mathcal{N}(\xb, \frac{dr^2}{2\epsilon}I_d)\|\mathcal{N}(\xb', \frac{dr^2}{2\epsilon}I_d))=\frac{\alpha\epsilon\|\xb-\xb'\|_2^2}{dr^2}
		 \leq \frac{\alpha d\epsilon\|\xb-\xb'\|_\infty^2}{dr^2}\leq \alpha\epsilon. 
	$$
	Therefore, $\mathcal{M}(\xb)=\xb+\zb$ with $\zb\sim \mathcal{N}(0, \frac{dr^2}{2\epsilon}I_d)$ is  $(r, D_{MR}, \|\cdot\|_\infty, \epsilon)$-robust. The bound of $\mathbb{E}[\|\zb\|_\infty]$ can be easily proved by substituting $\sigma$ in $O(\sigma\sqrt{\log d})$ \cite{orabona2015optimal} with $\sigma=\sqrt{\frac{dr^2}{2\epsilon}}$.
\end{proof}
\section{Extension to $\ell_p$-norm robustness for Any $\p\in [2, \infty)$} \label{sec:lp}
In previous sections, we studied $\ell_2$-norm and $\ell_\infty$-norm robustness. As we mentioned earlier, our framework can be applied to general norm. In this section, we will study the general $\ell_p$-norm robustness with $p\geq 2$. Just as the previous sections, here we first investigate the $\ell_p$-norm criteria for assessment. 

\begin{theorem}\label{thm:lplower}
Given $p\geq 2$, for any $\epsilon\leq O(1)$, if there is a $(r, D_{MR}, \|\cdot\|_p, \epsilon)$ randomized (smoothing) mechanism $\mathcal{M}(\xb)=\xb+\zb: \mathbb{R}^d\mapsto \mathbb{R}^d$ such that 
\begin{equation*}
    \mathbb{E}[\|\zb\|_\infty]= \mathbb{E}[\|\mathcal{M}(\xb)-\xb\|_\infty]\leq \alpha
\end{equation*}
for some $\alpha \leq O(1)$. Then it must be true that $\alpha \geq \Omega(\frac{rd^{\frac{1}{2}-\frac{1}{p}}}{\sqrt{\epsilon}})$. Note that when $p\rightarrow\infty$, according to Theorem~\ref{thm:linflower}, $\alpha \geq \Omega(\frac{rd^{\frac{1}{2}}}{\sqrt{\epsilon}})$
\end{theorem}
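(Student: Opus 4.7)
The plan is to follow the same template used in the proofs of Theorem \ref{thm:l2lower} and Theorem \ref{thm:linflower}, but with the hypercube side length chosen so that the $\ell_p$-diameter of the cube stays within $r$. Concretely, set $a = \frac{r}{2 d^{1/p}}$. Then any two points in $\{0,a\}^d$ have $\ell_p$-distance at most $a \cdot d^{1/p} = \frac{r}{2} \leq r$, so the cube $\{0,a\}^d$ sits inside $\mathbb{B}_p(\xb, r)$ for any $\xb$ in it. This generalizes the choices $a = \tfrac{r}{2\sqrt{d}}$ (for $p=2$) and $a = \tfrac{r}{2}$ (for $p=\infty$) used before.

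The first step is a reduction exactly parallel to the pair of auxiliary theorems (Theorems \ref{thm:formall2lower1} and \ref{thm:formall2lower2}) used in the proof of Theorem \ref{thm:l2lower}: it suffices to lower bound $\alpha$ for mechanisms $\mathcal{M}:\{0,a\}^d \mapsto [0,a]^d$ that are $(r, D_{MR}, \|\cdot\|_p, \epsilon)$-robust, since any mechanism on $\mathbb{R}^d$ can be coordinate-wise clipped to $[0,a]^d$ by the composition $\max\{0, \min\{\mathcal{M}(\xb), a\}\}$ without increasing the expected $\ell_\infty$-norm of the noise and, by post-processing, without violating $D_{MR}$ robustness. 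The next step is to convert the robustness statement to a DP statement: by Theorem \ref{thm:connect_le}, the restricted mechanism is $(\epsilon + 2\sqrt{\epsilon \log(1/\delta)}, \delta)$-PixelDP on $\{0,a\}^d$, and since any two elements of this cube are within $\mathbb{B}_p$-distance $r$ of each other, we obtain $(\epsilon + 2\sqrt{\epsilon \log(1/\delta)}, \delta)$-DP on the database $\{0,a\}^d$ (with $n=1$).

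Now I would rescale. Define $\tilde{\xb} = \xb/a \in \{0,1\}^d$ and $\tilde{\mathcal{M}}(\tilde{\xb}) = \mathcal{M}(\xb)/a \in [0,1]^d$. Rescaling does not change R\'enyi divergences, so $\tilde{\mathcal{M}}$ inherits the same DP guarantee, and
\begin{equation*}
\mathbb{E}[\|\tilde{\mathcal{M}}(\tilde{\xb}) - \tilde{\xb}\|_\infty] \leq \frac{\alpha}{a} = \frac{2 d^{1/p} \alpha}{r}.
\end{equation*}
Applying Lemma \ref{lemma:one_way} with $n=1$ and privacy parameter $\epsilon + 2\sqrt{\epsilon \log(1/\delta)}$ (and using, as in the proof of Theorem \ref{thm:simple_l2bound}, that $\tfrac{\sqrt{\log(1/\delta)}}{\epsilon + 2\sqrt{\epsilon \log(1/\delta)}} \geq \Omega(1/\sqrt{\epsilon})$ when $\epsilon \leq O(1)$), we deduce
\begin{equation*}
1 \geq \Omega\!\left(\frac{\sqrt{d}}{\sqrt{\epsilon} \cdot (2 d^{1/p} \alpha / r)}\right),
\end{equation*}
which rearranges to $\alpha \geq \Omega\bigl(r d^{\frac{1}{2} - \frac{1}{p}} / \sqrt{\epsilon}\bigr)$, as required. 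Note this interpolates correctly: at $p=2$ it recovers the $\Omega(r/\sqrt{\epsilon})$ bound, and as $p \to \infty$ it recovers the $\Omega(r\sqrt{d}/\sqrt{\epsilon})$ bound.

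The main obstacle is purely mechanical rather than conceptual: I need to verify carefully that the clipping-and-rescaling trick behaves correctly with respect to $\ell_p$-diameters (especially that the rescaled cube $\{0,1\}^d$ still has its pairs within $\ell_p$-distance $2$ so the $(2, D_{MR}, \|\cdot\|_p, \epsilon)$-robustness applies after rescaling). Everything else is bookkeeping: the structural arguments (postprocessing, conversion of $D_{MR}$ to PixelDP to DP, and invocation of Lemma \ref{lemma:one_way}) are norm-agnostic once the side length $a$ is matched to the diameter constraint.
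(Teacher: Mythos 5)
Your proposal is correct and follows essentially the same route as the paper's own proof: the same cube side length $a=\tfrac{r}{2\sqrt[p]{d}}$, the same clipping/post-processing reduction, the same conversion from $D_{MR}$ robustness to PixelDP to DP via Theorem~\ref{thm:connect_le}, and the same rescaling to $\{0,1\}^d$ followed by an application of Lemma~\ref{lemma:one_way} with $n=1$. The final rearrangement to $\alpha \geq \Omega\bigl(r d^{\frac{1}{2}-\frac{1}{p}}/\sqrt{\epsilon}\bigr)$ matches the paper's calculation.
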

\begin{proof}[Proof of Theorem \ref{thm:lplower}]
	The proof is also almost the same as that of Theorem \ref{thm:l2lower}. Following the proof of Theorem \ref{thm:l2lower}, we can only constrain on the case where $\mathcal{M}(\xb):  \{0, \frac{r}{2\sqrt[p]{d}}\}^d\mapsto [0, \frac{r}{2\sqrt[p]{d}}]^d$.

Since $\mathcal{M}$ is $(r, D_{MR}, \|\cdot\|_p, \epsilon)$-robust on $\{0, \frac{r}{2\sqrt[p]{d}}\}^d$, and for any $\xb_i, \xb_j \in  \{0, \frac{r}{2\sqrt[p]{d}}\}^d$, 
$\|\xb_i-\xb_j\|_p \leq r$ ({\em i.e.,} $\xb_j \in \mathbb{B}_p(\xb_i, r)$), $\mathcal{M}$ is $(\epsilon + 2\sqrt{\log{(1/\delta)}\epsilon}, \delta)$-PixelDP on $ \{0, \frac{r}{2\sqrt[p]{d}}\}^d$, according to Theorem \ref{thm:connect_le}. Thus, we can also say $\mathcal{M}$ is $(\epsilon + 2\sqrt{\log{(1/\delta)}\epsilon}, \delta)$ DP on the database $ \{0, \frac{r}{2\sqrt[p]{d}}\}^d$.

We first consider the case where $r=2\sqrt[p]{d}$, then we extend to the general case. When $r=2\sqrt[p]{d}$, by setting $n=1$ and $\epsilon=\epsilon+2\sqrt{\epsilon \log 1/\delta}$ in Lemma~\ref{lemma:one_way}, we have the following theorem, similar to Theorem \ref{thm:simple_l2bound}.

    \begin{theorem}
	For any $\epsilon\leq O(1)$, if a $(2\sqrt[p]{d}, D_{MR}, \|\cdot\|_p, \epsilon)$-robust randomized mechanism $\mathcal{M}: \{0, 1\}^d \mapsto [0,1]^d$ satisfies that
		for all $\xb\in \{0, 1\}^d$
		\begin{equation}
			\mathbb{E}[\|\mathcal{M}(\xb)- \xb\|_\infty] \leq \alpha,
		\end{equation}
		for some $\alpha \leq O(1)$. Then $1\geq \Omega(\sqrt{\frac{d}{\epsilon \alpha^2}})$. 
	\end{theorem}
\end{proof}

For general $r$, similar to the proof of Theorem~\ref{thm:formall2lower2}, we substitute $\frac{2\sqrt[p]{d}}{r}\xb$ with $\tilde\xb \in \{0, 1\}^d$ and construct $\tilde{\mathcal{M}}$ as $\tilde{\mathcal{M}}(\tilde\xb) = \frac{2\sqrt[p]{d}}{r}\mathcal{M}(\xb) \in [0, 1]^d$. Since $\mathcal{M}(\xb)$ satisfies
	$$\mathbb{E}[\|\mathcal{M}(\xb)-\xb\|_\infty] \leq \alpha,$$
	then we have $$\mathbb{E}[\|\tilde{\mathcal{M}}(\tilde\xb)-\tilde\xb\|_\infty] = \mathbb{E}[\|\frac{2\sqrt[p]{d}}{r}\mathcal{M}(\xb)-\frac{2\sqrt[p]{d}}{r}\xb\|_\infty] \leq \frac{2\sqrt[p]{d}}{r}\alpha.$$ 
	Also, $\tilde{\mathcal{M}}: \{0, 1\}^d\mapsto [0, 1]^d$ is $(2\sqrt[q]{d}, D_{MR}, \|\cdot\|_p, \epsilon)$-robust since $\mathcal{M}: \{0, \frac{r}{2\sqrt[p]{d}}\}^d\mapsto [0, \frac{r}{2\sqrt[p]{d}}]^d$ is $(r, D_{MR}, \|\cdot\|_p, \epsilon)$-robust. This is because $D_{MR}(\tilde{\mathcal{M}}(\tilde\xb)\|\tilde{\mathcal{M}}(\tilde\xb')) =  D_{MR}(\frac{2\sqrt[p]{d}}{r}\mathcal{M}(\xb)\|\frac{2\sqrt[p]{d}}{r}\mathcal{M}(\xb'))$, and $D_{MR}(\frac{2\sqrt[p]{d}}{r}\mathcal{M}(\xb)\|\frac{2\sqrt[p]{d}}{r}\mathcal{M}(\xb')) \leq D_{MR}(\mathcal{M}(\xb)\|\mathcal{M}(\xb'))$ since $D_\alpha(f(\mathcal{M}(\xb))\|f(\mathcal{M}(\xb'))) \leq D_\alpha(\mathcal{M}(\xb)\|\mathcal{M}(\xb'))$ for any $f(\cdot)$.
  Considering $\tilde{\mathcal{M}}$ in Theorem \ref{thm:lplower} with $\alpha = \frac{2\sqrt[p]{d}}{r}\alpha\leq O(1)$, we have
\begin{equation*}
    1\geq \Omega(\sqrt{\frac{d}{\epsilon(2\sqrt[p]{d}/{r}\alpha)^2}})
\end{equation*}
Thus we have $\alpha\geq \Omega(\frac{r d^{\frac{1}{2}-\frac{1}{p}}} {\sqrt{\epsilon}})$. 
\begin{remark}
First, we can see that when $p=2$, Theorem \ref{thm:lplower} is the same as Theorem \ref{thm:l2lower}. Thus, we can see it as a generalization of the previous theorems. Second, Theorem \ref{thm:lplower} indicates that, to certify a certain extent of robustness, the magnitude of the noise we add should be at least $\Omega(d^{\frac{1}{2}-\frac{1}{p}})$, which can be quite large for high dimensional datasets. This means that for $\ell_p$-norm robustness with $p>2$, as a defensive method, the random smoothing method is not very scalable to high dimensional data, {\em i.e.,}, we can call it as the curse of dimensionality on randomized smoothing for certifying $\ell_p (p\geq2)$ robustness. Note that if $p < 2$, then the $\Omega(d^{\frac{1}{2}-\frac{1}{p}}) \rightarrow 0$ as $d \rightarrow \infty$. Therefore, the lower bound is useful when $p\geq2$. 
\end{remark}
In the following theorem, based on the above criteria for $\ell_p$-norm, {\em we show that the Gaussian mechanism is an appropriate option for certifying $\ell_p$-norm robustness.} 
This is because the gap between the criteria and the magnitude of the additive noise required by the Gaussian mechanism is bounded by $O(\sqrt{\log d})$. 
\begin{theorem}[Gaussian Mechanism for Certifying $\ell_p$-norm robustness]\label{thm:gauss_pnorm}
	Let $r, \epsilon>0$ be some fixed number and  $\mathcal{M}(\xb)=\xb+\zb$ with $\zb \sim \mathcal{N}(0, \frac{d^{1-\frac{2}{p}}r^2}{2\epsilon}I_d)$. Then, $\mathcal{M}(\cdot)$  is $(r, D_{MR}, \|\cdot\|_p, \epsilon)$-robust, and $\mathbb{E}[\|\zb\|_\infty]= \mathbb{E}[\|\mathcal{M}(\xb)-\xb\|_\infty]$ is upper bounded by $O(\frac{rd^{\frac{1}{2}-\frac{1}{p}}\sqrt{\log d}}{\sqrt{\epsilon}})$.
\end{theorem}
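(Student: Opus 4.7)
The plan is to mirror the argument used for Theorem~\ref{thm:gauss_linf} (the $\ell_\infty$ case), now with a general $p \geq 2$. First I would establish the robustness claim. Fix any $\xb$ and take $\xb' \in \mathbb{B}_p(\xb, r)$. Since $\mathcal{M}(\xb) = \xb + \zb$ with $\zb \sim \mathcal{N}(0, \sigma^2 I_d)$ for $\sigma^2 = d^{1-2/p} r^2/(2\epsilon)$, the Rényi divergence of two Gaussians with the same covariance is the standard closed form
\[
D_\alpha\bigl(\mathcal{N}(\xb,\sigma^2 I_d)\,\|\,\mathcal{N}(\xb',\sigma^2 I_d)\bigr) \;=\; \frac{\alpha \, \|\xb-\xb'\|_2^2}{2\sigma^2}.
\]
Dividing by $\alpha$ and maximizing over $\alpha \in (1,\infty)$ yields $D_{MR}(\mathcal{M}(\xb)\|\mathcal{M}(\xb')) = \|\xb-\xb'\|_2^2/(2\sigma^2)$, so all the work is to control $\|\xb-\xb'\|_2$ by $\|\xb-\xb'\|_p$.

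The key lever is the standard norm-embedding inequality: for any vector $\ub \in \mathbb{R}^d$ and $p \geq 2$, Hölder's inequality gives $\|\ub\|_2 \leq d^{1/2 - 1/p}\,\|\ub\|_p$. Applying this to $\ub = \xb - \xb'$ (whose $p$-norm is at most $r$) yields $\|\xb-\xb'\|_2^2 \leq d^{1-2/p} r^2$. Plugging in the chosen $\sigma^2$,
\[
D_{MR}(\mathcal{M}(\xb)\|\mathcal{M}(\xb')) \;\leq\; \frac{d^{1-2/p} r^2}{2\sigma^2} \;=\; \frac{d^{1-2/p} r^2}{d^{1-2/p} r^2/\epsilon} \;=\; \epsilon,
\]
establishing $(r, D_{MR}, \|\cdot\|_p, \epsilon)$-robustness. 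Note that when $p = 2$ this recovers Theorem~\ref{thm:l2_gauss} and when $p \to \infty$ it recovers Theorem~\ref{thm:gauss_linf}, which is a good sanity check.

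For the noise-magnitude bound, I would invoke the well-known tail estimate that for $\zb \sim \mathcal{N}(0, \sigma^2 I_d)$ one has $\mathbb{E}[\|\zb\|_\infty] = O(\sigma \sqrt{\log d})$ (the same fact cited from \cite{orabona2015optimal} in the $\ell_\infty$ proof). Substituting $\sigma = r\,d^{1/2 - 1/p}/\sqrt{2\epsilon}$ immediately gives
\[
\mathbb{E}[\|\zb\|_\infty] \;=\; O\!\left(\frac{r\, d^{1/2-1/p} \sqrt{\log d}}{\sqrt{\epsilon}}\right),
\]
which is the claimed upper bound. There is no real obstacle here—the only ingredient beyond the $p=2$ and $p=\infty$ proofs is the Hölder step bounding $\|\cdot\|_2$ by $d^{1/2-1/p}\|\cdot\|_p$, which is tight (attained on constant-sign vectors) and therefore aligns the exponent with the $\Omega(d^{1/2-1/p})$ lower bound in Theorem~\ref{thm:lplower}, leaving the same $O(\sqrt{\log d})$ gap as in the $\ell_2$ and $\ell_\infty$ cases.
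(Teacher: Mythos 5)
Your proposal is correct and follows essentially the same route as the paper's proof: the closed-form R\'{e}nyi divergence between equal-covariance Gaussians, the inequality $\|\xb-\xb'\|_2 \leq d^{1/2-1/p}\|\xb-\xb'\|_p$ for $p\geq 2$, and substitution of the chosen $\sigma$ into the $O(\sigma\sqrt{\log d})$ bound for $\mathbb{E}[\|\zb\|_\infty]$. (The only quibble is a side remark: equality in that norm comparison is attained on vectors of constant \emph{magnitude}, not merely constant sign, but this does not affect the argument.)
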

\begin{proof}[Proof of Theorem \ref{thm:gauss_pnorm}] By simple calculation we have 
	$$
		D_{\alpha} (\mathcal{N}(\xb, \frac{d^{1-\frac{2}{p}}r^2}{2\epsilon}I_d)\|\mathcal{N}(\xb', \frac{d^{1-\frac{2}{p}}r^2}{2\epsilon}I_d))=\frac{\alpha\epsilon\|\xb-\xb'\|_2^2}{d^{1-\frac{2}{p}}r^2}
		 \leq \frac{\alpha d^{1-\frac{2}{p}}\epsilon\|\xb-\xb'\|_p^2}{d^{1-\frac{2}{p}}r^2}\leq \alpha\epsilon. 
	$$
	Therefore, $\mathcal{M}(\xb)=\xb+\zb$ with $\zb\sim \mathcal{N}(0, \frac{d^{1-\frac{2}{p}}r^2}{2\epsilon}I_d)$ is  $(r, D_{MR}, \|\cdot\|_p, \epsilon)$-robust. The bound of $\mathbb{E}[\|\zb\|_\infty]$ can be easily proved by substituting $\sigma$ in $O(\sigma\sqrt{\log d})$ \cite{orabona2015optimal} with $\sigma=\sqrt{\frac{d^{1-\frac{2}{p}}r^2}{2\epsilon}}$.
\end{proof}

\section{Additional Details \& Results}
\subsection{Numerical Method} We first detail the numerical method for the experiments in the following. The core algorithm is detailed in Alg.~\ref{alg:certification}. We follow \cite{cohen2019certified} to conduct evaluations on 500 testing samples from CIFAR10 and ImageNet, and we set $n=1000$ for CIFAR-10 and $n=10000$ for ImageNet in Alg.~\ref{alg:certification}.
Also, we refer the interested readers to our code for more technical details.
\begin{algorithm}
	\caption{Certifying $\ell_2$/$\ell_\infty$-norm Robustness}
	\label{alg:certification}
	\begin{algorithmic}
		\REQUIRE Input $\xb$, a classifier $f(\cdot)$, parameter $\sigma > 0$, number of samples for estimating confidence interval $n$.
		\STATE Sample $n$ samples from the Gaussian/Exponential mechanism $\{\zb_i\}_{i=1...n}$
		\STATE Compute $c_i = f(\xb + \zb_i)$, and estimate the distribution of $c_i$, {\em i.e.,} $p_j \approx \frac{\#\{c_i = j\}_{i=1...n}}{n}$ 
		\vspace{0.1cm}
		\STATE Note that we compute the multinomial confident intervals for $\{p_j\}$. $p_{(1)}$ is set as the lower bound of the largest one in $\{p_j\}$, and $p_{(2)}$ is set as the upper bound of the second largest one in $\{p_j\}$.
		\IF{Choose the Gaussian mechanism}
		\STATE Compute the robust radius by $r_2 = sup_{\alpha > 1} (-\frac{2\sigma^2}{\alpha} \log{(1 - p_{(1)} - p_{(2)} + 2(\frac{1}{2}(p_{(1)}^{1-\alpha} +  p_{(2)}^{1-\alpha}))^{\frac{1}{1-\alpha}})})^{\frac{1}{2}}$
		\ELSIF{Choose the Exponential mechanism}
		\STATE $r_a = sup_{\alpha > 1} -\frac{\sigma}{\alpha} \log{(1 - p_{(1)} - p_{(2)} + 2(\frac{1}{2}(p_{(1)}^{1-\alpha} +  p_{(2)}^{1-\alpha}))^{\frac{1}{1-\alpha}})}$
		\STATE $r_b = (sup_{\alpha > 1} -\frac{2\sigma^2}{\alpha} \log(1 - p_{(1)} - p_{(2)} + 2(\frac{1}{2}(p_{(1)}^{1-\alpha} +  p_{(2)}^{1-\alpha}))^{\frac{1}{1-\alpha}}))^{\frac{1}{2}}$
		\STATE $r_\infty = \max\{r_a, r_b\}$
		\ENDIF
		\STATE \textbf{Output:} For the Gaussian mechanism, $\ell_2$ robust radius is $r_2$, and $\ell_\infty$ robust radius is $\sqrt{r_2^2/d}$. For the Exponential mechanism, $\ell_\infty$ robust radius is $r_\infty$.
	\end{algorithmic}
\end{algorithm}
Here we highlight the sampling method for the Exponential mechanism, which is not detailed in \cite{li2019certified, cohen2019certified}. Due to the high dimensionality of samples in real world applications, directly sampling $\zb\sim p(\zb)$ as in Eq.~\ref{eq:exponential} by the Markov Chain Monte Carlo (MCMC) algorithm requires a large number of random-walks that can incur high computational cost. To alleviate this issue, we adopt an efficient sampling method from \cite{steinke2015between} that first samples $R$ from $Gamma(d+1, \sigma)$ and then samples $\zb$ from $[-R, R]^d$ uniformly. The complexity of this sampling algorithm is only $O(d)$.

\subsection{Additional Experiment Results}
\paragraph{$\ell_2$-norm Case} In Fig.~\ref{fig:cohen_l2_gauss}, we can see that, although \cite{cohen2019certified} proves a tighter bound than ours, it also certifies approximately $40\sim60\%$ accuracy at $\ell_2~\mbox{radius}~=0.34$ (CIFAR-10, $d=3072$) and $\ell_2~\mbox{radius}~=0.29$ (ImageNet, $d=150568$), {\em i.e.,} $O(1/\sqrt{\log d})$. Even after using the advanced training method in \cite{salman2019provably}, the scale of the robust radii is still $O(1/\sqrt{\log d})$, as shown in Fig.~\ref{fig:salman_l2_gauss}.
\begin{figure}[ht]
    \centering
	\includegraphics[width=0.42\columnwidth]{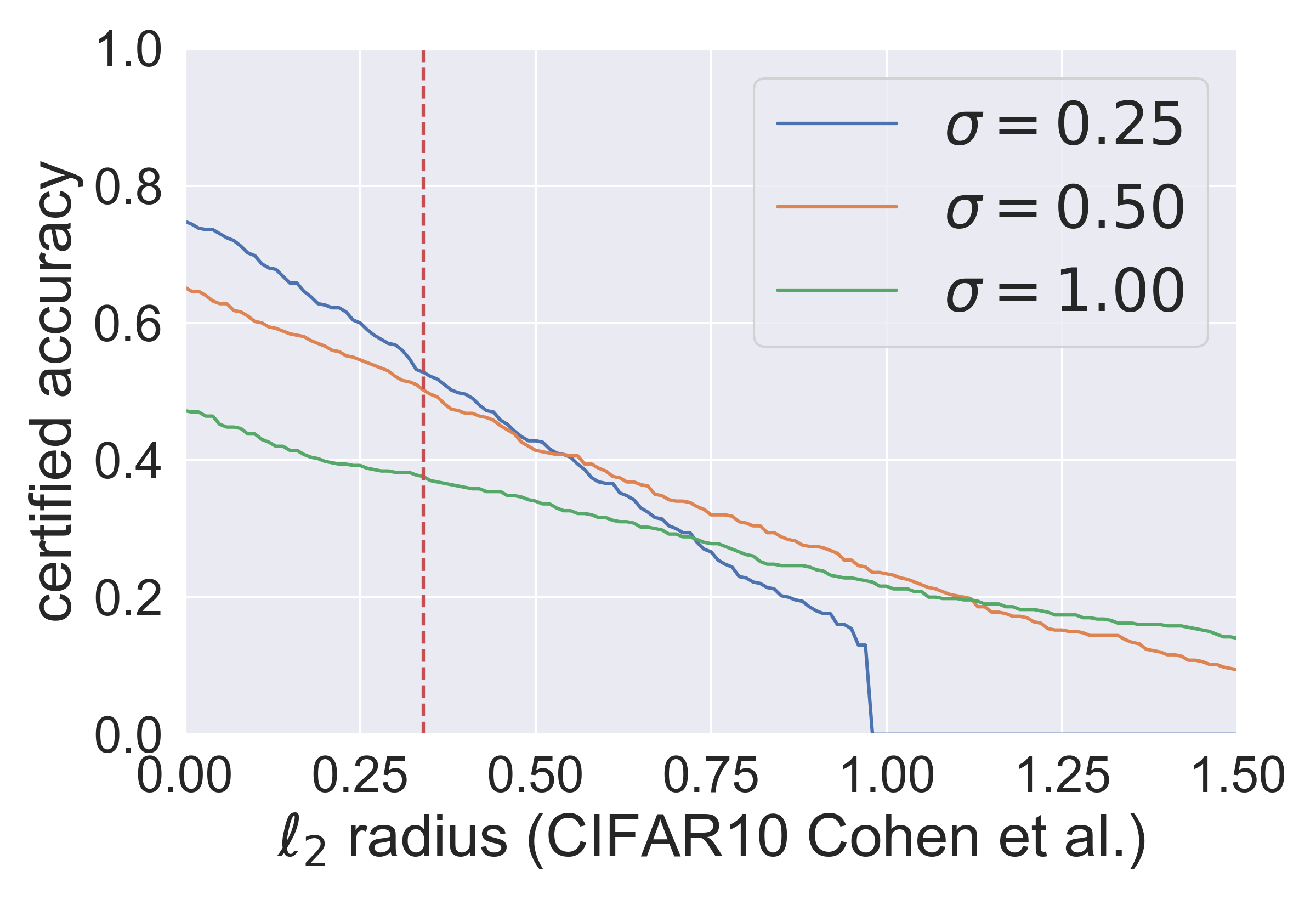}
	\hspace{0.2cm}
	\includegraphics[width=0.42\columnwidth]{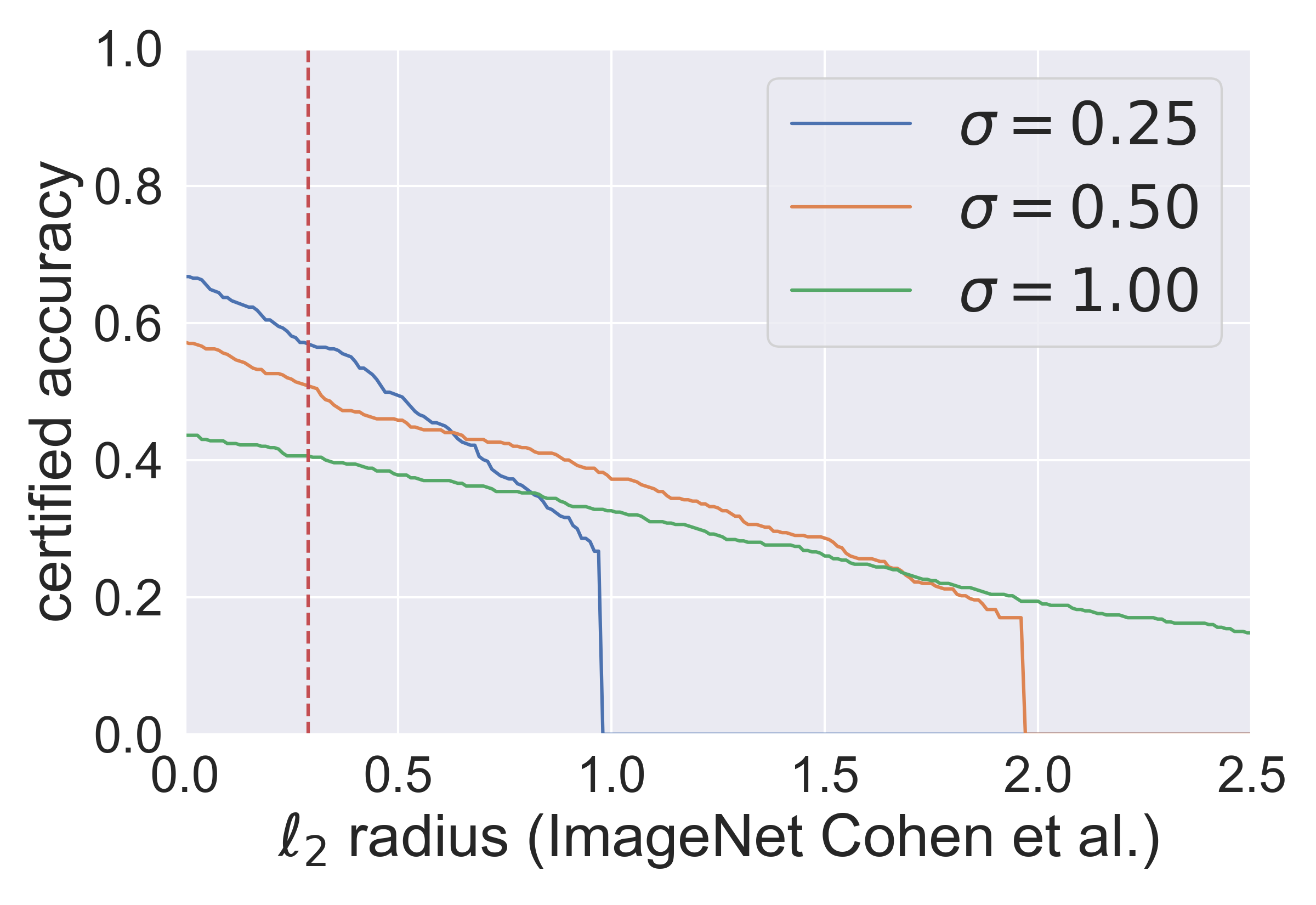}
	\vspace{-0.3cm}
	\caption{Certify $\ell_2$-norm robustness by the Gaussian mechanism \cite{cohen2019certified}: CIFAR10 (left) and ImageNet (right)}
	\label{fig:cohen_l2_gauss}
\end{figure}

\begin{figure}[!htbp]
    \centering
	\includegraphics[width=0.42\columnwidth]{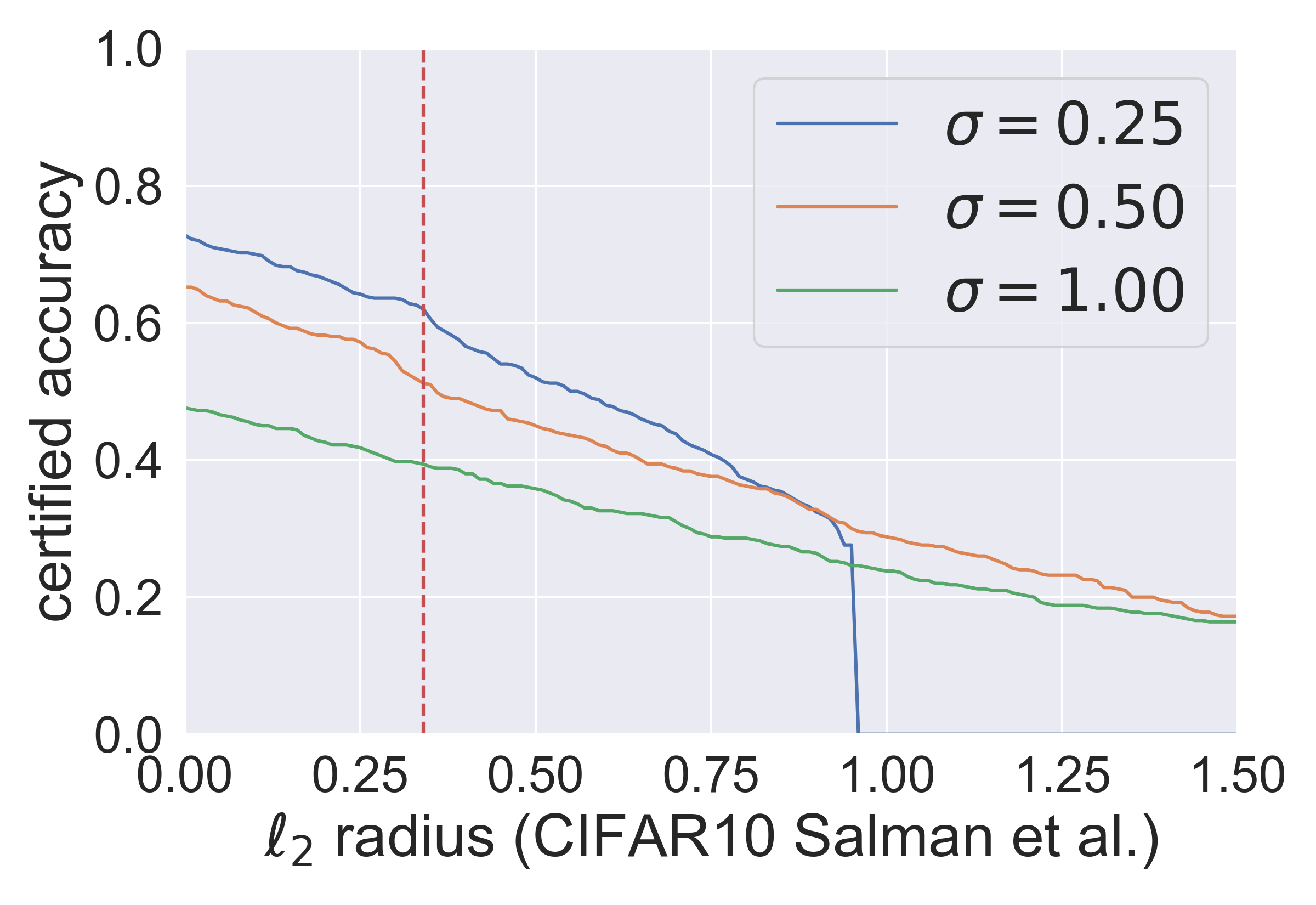}
	\hspace{0.2cm}
	\includegraphics[width=0.42\columnwidth]{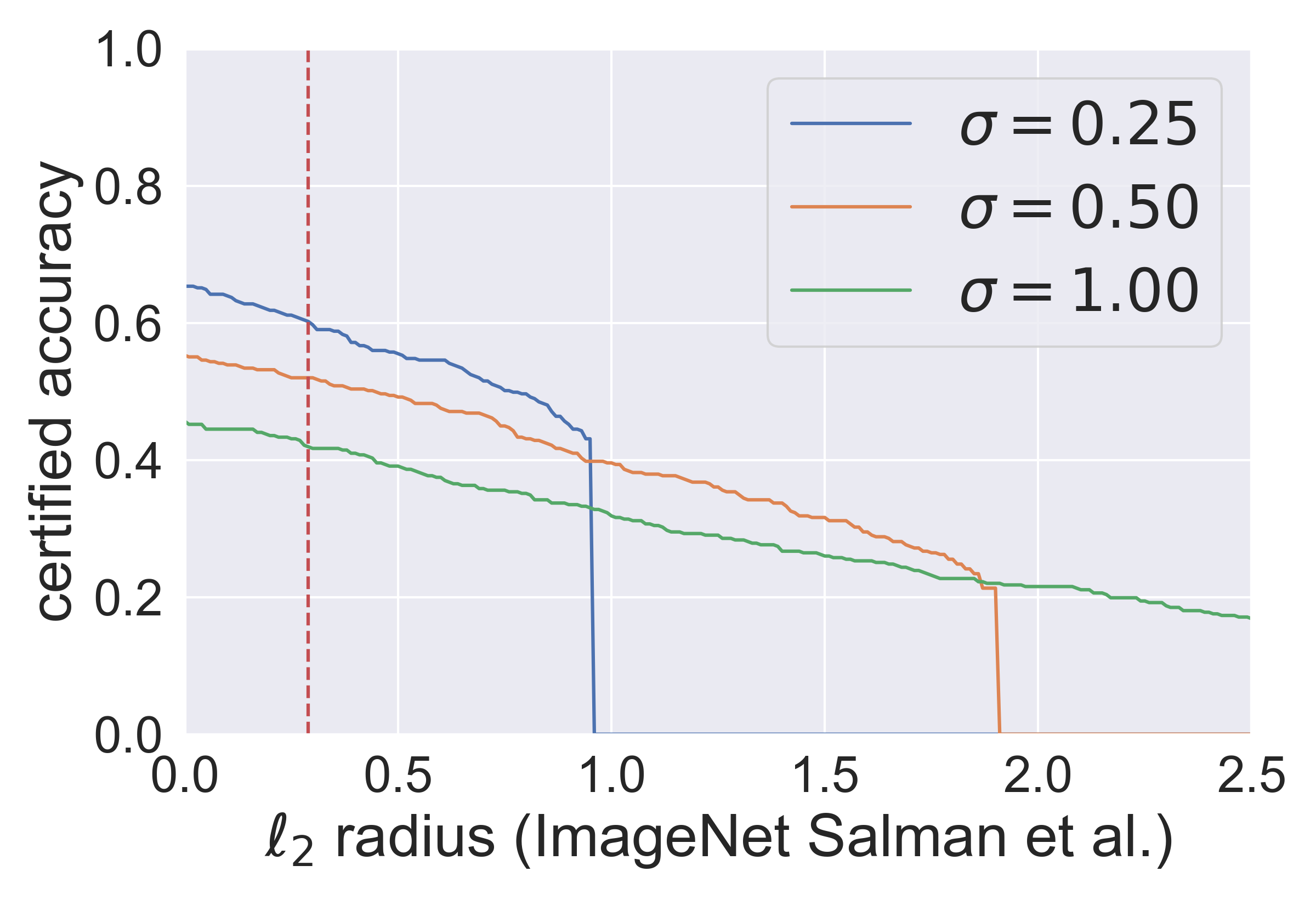}
	\vspace{-0.3cm}
	\caption{Certify $\ell_2$-norm robustness by the Gaussian mechanism and the adversarial training method in \cite{salman2019provably}: CIFAR10 (left) and ImageNet (right)}
	\label{fig:salman_l2_gauss}
\end{figure}

\begin{table}
    \begin{center}
    \scalebox{0.85}{
    \begin{tabular}{c|cc|cc}
    \hline
       \multirow{2}{*}{\textbf{Model}} & \multicolumn{2}{c|}{CIFAR-10} & \multicolumn{2}{c}{(Original) ImageNet} \\
        & $\ell_\infty$ Acc at 2/255 & Standard Acc & $\ell_\infty$ Acc at 1/255 & Standard Acc\\
        \hline \hline
         Cohen et al. \cite{cohen2019certified} (Gaussian) & 47.0\% & 74.8\% ($\sigma=0.25$) & 27.4\% & 57.2\% ($\sigma=0.5$)\\
        \hline
        $D_{MR}$ Framework (Gaussian) & 42.4\% & 69.6\% ($\sigma=0.5$) & 24.6\% & 45.2\% ($\sigma=1.0$)  \\
         \hline
         Wong et al. \cite{wong2018provable} (Single model) & 53.9\% & 68.3\% & - & - \\
         \hline
         IBP \cite{gowal2018effectiveness} & 50.0\% & 70.2\% & - & - \\
         \hline
    \end{tabular}}
    \vspace{0.2cm}
    \caption{Comparing the performance of the Gaussian mechanism with the other works in the $\ell_\infty$ case}\label{tab:comparison}
    \end{center}
\end{table}

\paragraph{$\ell_\infty$-norm Case}
Note that it seems obvious that the Gaussian mechanism is an appropriate mechanism to certify $\ell_2$-norm robustness since \cite{cohen2019certified, li2019certified, salman2019provably} have achieved the state-of-the-art certification results compared with the other methods in the $\ell_2$-norm case. However, in the $\ell_\infty$-norm case, it is a little counterintuitive that the Gaussian mechanism is also an appropriate choice, which performs much better than the Exponential mechanism. In the Table~\ref{tab:comparison}, we compare the $\ell_\infty$-norm certification results of the Gaussian mechanism and the other two representative approaches. Although \cite{cohen2019certified} and the $D_{MR}$ framework perform slightly worse than \cite{wong2018provable} or \cite{gowal2018effectiveness} on CIFAR10, they are more scalable to high-dimensional datasets like ImageNet. So we can say their $\ell_\infty$-norm certification results are comparable. Besides, in Fig.~\ref{fig:cohen_linf_gauss} \& \ref{fig:salman_linf_gauss}, we show that the Gaussian mechanism certifies approximately $40\sim60\%$ accuracy at $\ell_\infty~\mbox{radius}~=6e-3$ on CIFAR-10 and $\ell_\infty~\mbox{radius}~=1.1e-3$ on ImageNet, which are also approximately $O(1/\sqrt{d\log d})$ for both datasets. 

All in all, the empirical results indicate the theorems proved under our framework are valid and very likely to generalize to the other frameworks. 

\begin{figure}[!htbp]
    \centering
	\includegraphics[width=0.42\columnwidth]{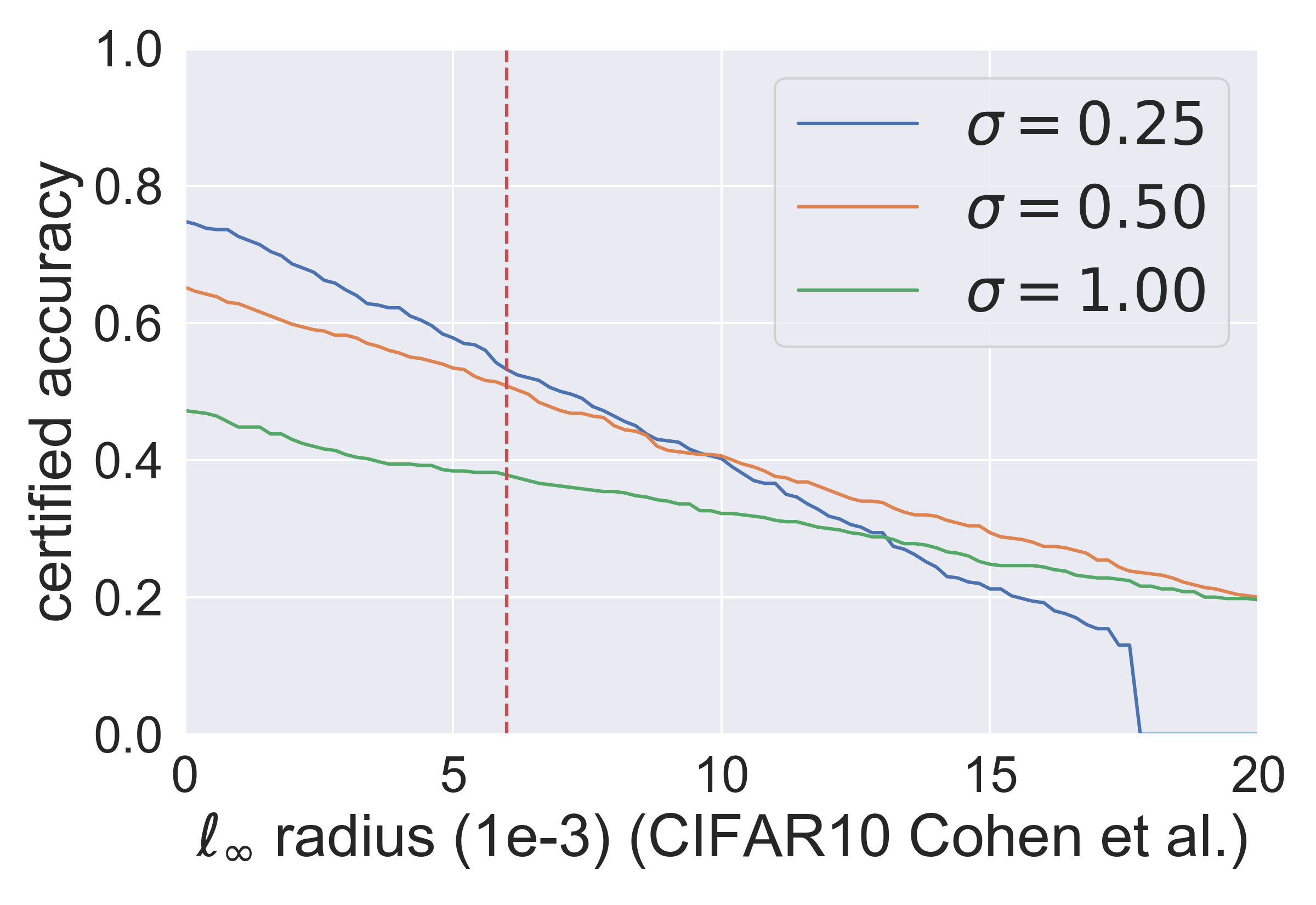}
	\hspace{0.2cm}
	\includegraphics[width=0.42\columnwidth]{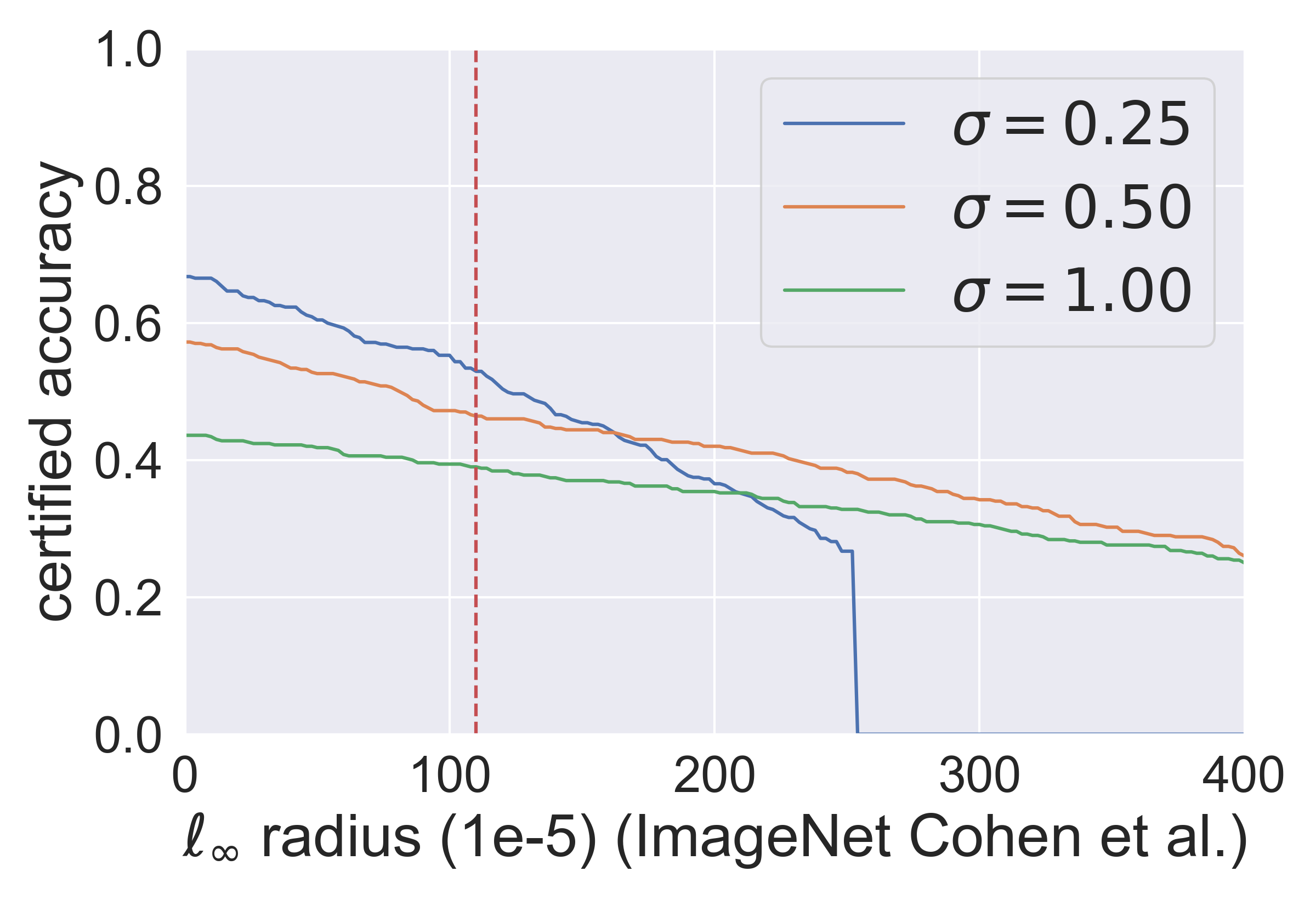}
	\vspace{-0.3cm}
	\caption{Certify $\ell_\infty$-norm robustness by the Gaussian mechanism \cite{cohen2019certified}: CIFAR10 (left) and ImageNet (right)}
	\label{fig:cohen_linf_gauss}
\end{figure}

\begin{figure}[H]
    \centering
	\includegraphics[width=0.42\columnwidth]{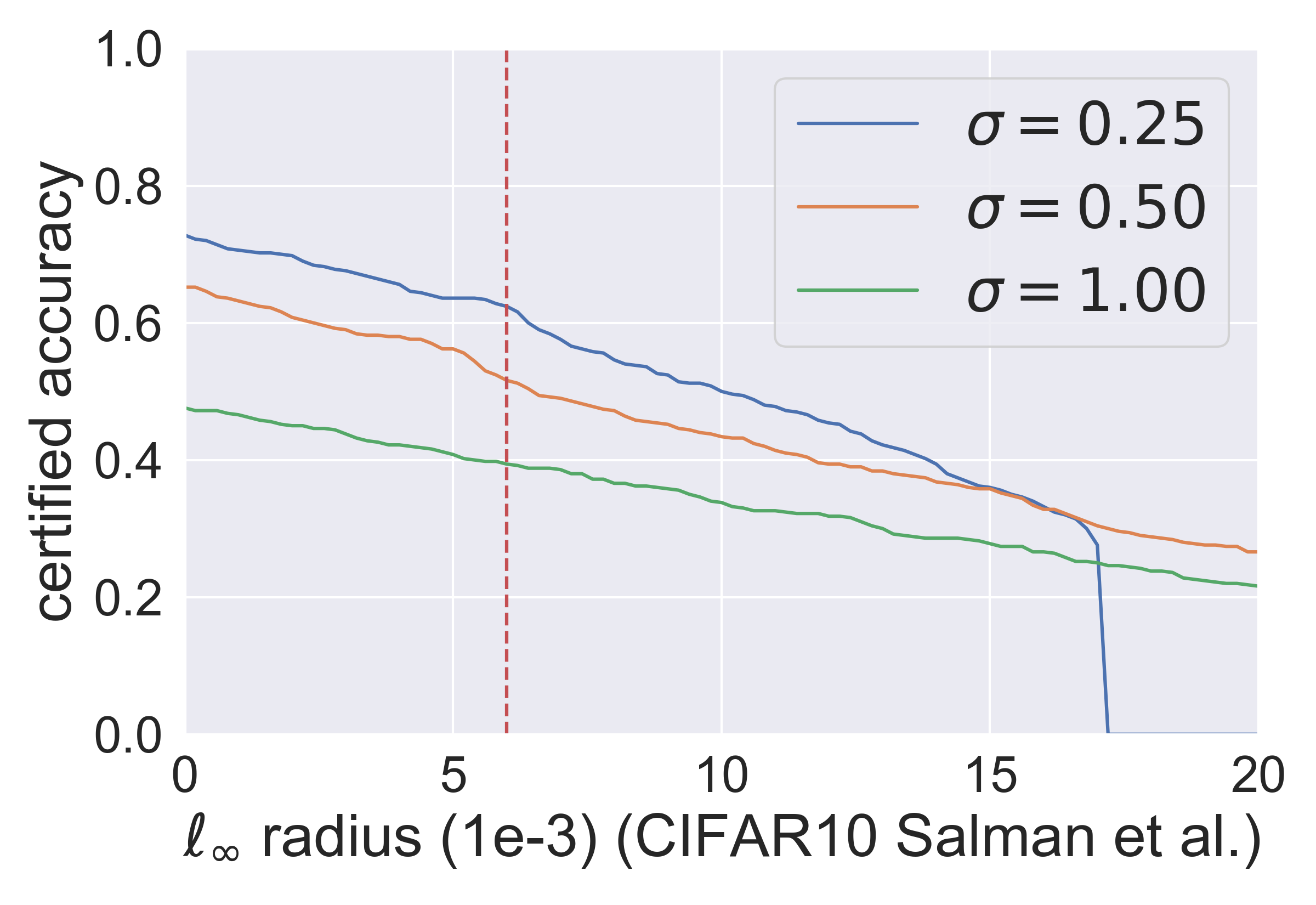}
	\hspace{0.2cm}
	\includegraphics[width=0.42\columnwidth]{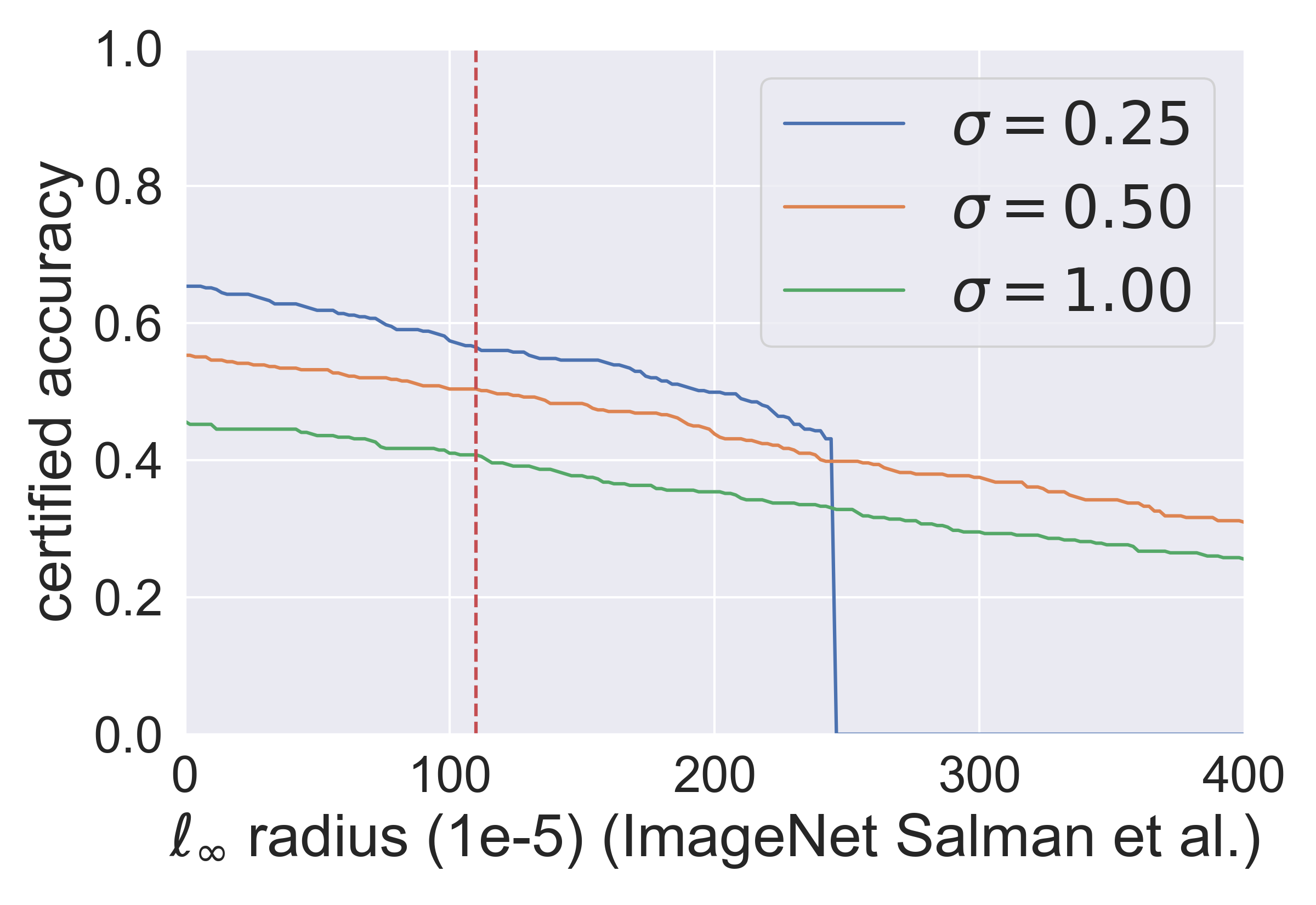}
	\vspace{-0.3cm}
	\caption{Certify $\ell_\infty$-norm robustness by the Gaussian mechanism and the adversarial training method in \cite{salman2019provably}: CIFAR10 (left) and ImageNet (right)}
	\label{fig:salman_linf_gauss}
\end{figure}

\end{document}